\theoremstyle{plain}
\newtheorem{theorem}{Theorem}[section]
\newtheorem{lemma}[theorem]{Lemma}
\newtheorem{corollary}[theorem]{Corollary}
\newtheorem{proposition}[theorem]{Proposition}
\newtheorem{definition}[theorem]{Definition}
\newtheorem{assumption}[theorem]{Assumption}
\newtheorem*{example}{Example}
\newtheorem*{remark}{Remark}
\newcommand{\Alt}{\mathrm{Alt}}
\DeclareMathOperator*{\argmax}{arg\,max}
\newcommand{\iid}{ \stackrel{\mathrm{i.i.d}}{\sim} }
\newcommand{\pa}{\mathrm{\pa}}
\newcommand{\RN}[1]{%
  \textup{\uppercase\expandafter{\romannumeral#1}}%
}
\newcommand{\kibitz}[2]{\ifnum\Comments=1\textcolor{#1}{#2}\fi}
\newcommand*{\email}[1]{\texttt{#1}}
\title{Best Arm Identification with Contextual Information under a Small Gap}
\author[1,2]{Masahiro Kato\thanks{\email{masahiro\_kato@cyberagent.co.jp}.}$\ \ $}
\author[1]{Masaaki Imaizumi}
\author[3]{Takuya Ishihara}
\author[4]{Toru Kitagawa}
\affil[1]{Department of Basic Science, the University of Tokyo}
\affil[2]{AI Lab, CyberAgent, Inc.}
\affil[3]{Graduate School of Economics and Management, Tohoku University}
\affil[4]{Department of Economics, Brown University and Department of Economics, University College London}
\begin{document}

\maketitle

\begin{abstract}
We study the best-arm identification (BAI) problem with a fixed budget and contextual (covariate) information.  
In each round of an adaptive experiment, after observing contextual information, we choose a treatment arm using past observations and current context. Our goal is to identify the best treatment arm, which is a treatment arm with the maximal expected reward marginalized over the contextual distribution, with a minimal probability of misidentification. In this study, we consider a class of nonparametric bandit models that converge to location-shift models when the gaps go to zero. First, we derive lower bounds of the misidentification probability for a certain class of strategies and bandit models (probabilistic models of potential outcomes) under a small-gap regime. A small-gap regime is a situation where gaps of the expected rewards between the best and suboptimal treatment arms go to zero, which corresponds to one of the worst cases in identifying the best treatment arm. We then develop the ``Random Sampling (RS)-Augmented Inverse Probability weighting (AIPW) strategy,'' which is asymptotically optimal in the sense that the probability of misidentification under the strategy matches the lower bound when the budget goes to infinity in the small-gap regime. The RS-AIPW strategy consists of the RS rule tracking a target sample allocation ratio and the recommendation rule using the AIPW estimator. 
\end{abstract}

\section{Introduction}
This paper considers an adaptive experimental design to accurately identify the best treatment arm among a finite set of candidates after the last round of the experiment.
The stochastic multi-armed bandit (MAB) problem is a classical abstraction of the sequential decision-making problem \citep{Thompson1933,Robbins1952,Lai1985}. Best arm identification (BAI) is an instance of the MAB problem. In BAI, we consider pure exploration to identify the best treatment arm, which is a treatment arm that yields the highest expected reward. In this study, we study \emph{BAI with a fixed budget and contextual information}, whose goal is to identify the best treatment arm minimizing the probability of misidentifying the best treatment arm after a fixed number of rounds of adaptive experiments, called a sample size or a \textit{budget}
\citep{Bubeck2009,Bubeck2011,Audibert2010}.  To gain efficiency in this task, during an adaptive experiment, one strategy is to employ a random covariate (side information) that characterizes the features of treatment arms, which is referred to as a context. A context can be observed before drawing one of treatment arms in an adaptive experiment. Based on observed context, a strategy chooses a treatment arm. Our setting is a generalization of BAI with a fixed budget \citep{Carpentier2016}. The main focus of this paper is to derive asymptotically optimal strategies for the purpose of efficiently identifying the best treatment arm, rather than how to learn optimal context-specific bandit strategies, as studied in the literature of contextual bandit. 

One of the main research interests in bandit problems is clarifying a tight lower bound on the probability of misidentification (a theoretical limit of performance). In this study, we call a strategy asymptotically optimal if, under the strategy, a probability of misidentification matches a lower bound as the budget goes to infinity. 
However, in BAI, it is unknown whether  a strategy exists under which a probability of misidentification matches the lower bound in BAI with a fixed budget\citep{Kaufman2016complexity}. 
Furthermore, when we can employ contextual information, even the lower bound has been unknown as well as an asymptotically optimal strategy, unlike BAI with fixed confidence \citep{Russac2021,Kato2021Role}, another setting of BAI \citep{Garivier2016}.

We develop an asymptotically optimal strategy for BAI with a fixed budget and contextual information under a small-gap regime, where the gaps of the expected rewards of the best and suboptimal treatment arms converge to zero.\footnote{This regime is also considered by \citet{Kato2022small}, and some of our results are employed and extended from their results.} This regime is one of the hardest situation to identify the best treatment arms among the other suboptimal treatment arms. First, we derive lower bounds for the probability of misidentification by extending the lower bound of \citet{Kaufman2016complexity} under the small-gap regime. Based on this lower bound, we derive an optimal target sample allocation ratio, which is the limit of the ratio of the number of samples allocated to each treatment arm within an adaptive experiment under an optimal strategy. Then, we propose our BAI strategy, the Random Sampling (RS)-Augmented Inverse Probability Weighting (AIPW) strategy, which consists of the RS rule using an estimated target sample allocation ratio and 
a recommendation rule using the AIPW estimator. We prove the asymptotic optimality of the proposed strategy 
when the budget goes to infinity under the small-gap regime.

The existence of an asymptotically optimal strategy has long been an open problem in this field.
\citet{glynn2004large} proposes optimal strategies based on optimally selected target sample allocation ratios. However, 
they assume that the optimal target sample allocation ratio is known in advance and do not consider the issue of estimating it. Based on the change-of-measure arguments popularized by \citet{Lai1985}, \citet{Kaufman2016complexity} derives lower bounds for the misidentification probability, which are agnostic to the optimal target sample allocation ratios. 
Despite the seminal result, optimal strategies has not been proposed for the lower bounds
\citet{kaufmann2020hdr}. In fact, when the gaps between the expected rewards of the best and suboptimal treatment arms are fixed, \citet{Carpentier2016} shows that there is no strategy under which the probability of misidentificaton matches the lower bounds derived by \citet{Kaufman2016complexity}. Thus, the debate over tight lower bounds and optimal algorithms is not settled, and various approaches have been proposed \citep{kaufmann2020hdr,Kasy2021,Ariu2021,Komiyama2022}. 

We consider that a contributing factor on the mismatch between the upper and lower bounds is an estimation error of the optimal target sample allocation ratio, which affects the probability of misidentification. To address this issue, we propose a small-gap regime for the following two reasons.
The first is that an optimal strategy under a small gap has an important practical implication itself because it means asymptotic optimality under one of the worst cases. The second reason is more technical. Under the fixed-gap regime, it has been shown that there is no asymptotically optimal strategy in the sense that its upper bounds match lower bounds derived in the way of \citet{Lai1985} \citep{Carpentier2016}. 
This regime makes the asymptotic optimality argument in fixed-budget BAI tractable by allowing the evaluation of optimal allocation probabilities to be ignored. 
Thus, this study addresses the open question by showing an asymptotically optimal strategy under a small-gap regime. In addition, we investigate the efficiency gain by using contextual information, which is a generalization of conventional BAI.
Furthermore, we demonstrate an analytical solution for the target sample allocation ratio, which has long been unknown.


In both settings of BAI with a fixed budget and fixed confidence, strategies using contextual information have not been sufficiently explored. In BAI with fixed confidence, recent studies have proposed the use of contextual information to identify a treatment arm with the highest expected reward marginalized over contextual information \citep{Kato2021Role,Russac2021}. 

Related problems have been frequently considered in studies of causal inference, which mainly discuss the efficient estimation of causal parameters such as the gap between expected outcomes of two treatment arms marginalized over the covariate (contextual) distribution \citep{Laan2008TheCA,Hahn2011,Meehan2020,Kato2020adaptive}, rather than BAI. The gap is also called the average treatment effect (ATE) in this literature \citep{imbens_rubin_2015}. 
In efficient ATE estimation with an adaptive experiment, the asymptotic variance of the estimator can be reduced with the help of covariate information. 

In this study, we find that when identifying the best treatment arm marginalized over contextual information in BAI with a fixed budget, we can improve the performance of a strategy by employing contextual information. To the best of our knowledge, our study is the first to consider BAI with contextual information in a fixed-budget setting.

\paragraph{Organization.} This paper is organized as follows. In Section~\ref{sec:problem_setting}, we formulate our problem. In Section~\ref{sec:lower_bounds}, we derive the general lower bounds for BAI with a fixed budget and the target sample allocation ratio under a small-gap regime. 
In Section~\ref{sec:track_aipw}, we propose the RS-AIPW strategy. Then, in Section~\ref{sec:asymp_opt}, we show that the proposed strategy is optimal in a sense that the upper bound for the probability of misidentification matches the lower bound. 
We introduce related work in \ref{sec:related} and discuss several topics in Sections~\ref{sec:discuss}. Finally, we present the proof of the lower bound in Section~\ref{sec:proof}.

\section{Problem Setting}
\label{sec:problem_setting}
We consider the following setting of BAI with a fixed budget and contextual information. Given a fixed number of rounds $T$, also called a budget, for each round $t = 1,2,\dots, T$, an agent observes a context (covariate) $X_t\in\mathcal{X}$ and chooses a treatment arm $A_t \in [K] = \{1,2,\dots, K\}$, where $\mathcal{X}\subset \mathbb{R}^d$ denotes the context space. Then, the agent immediately receives a reward (or outcome) $Y_t$ linked to the chosen treatment arm $A_t$. This setting is called the bandit feedback or Rubin causal model \citep{Neyman1923,Rubin1974}; that is, a reward in round $t$ is $Y_t= \sum_{a\in[K]}\mathbbm{1}[A_t = a]Y^a_{t}$, where $Y^a_{t}\in\mathbb{R}$ is a potential independent (random) reward, and $Y^1_t,Y^2_t,\dots, Y^K_t$ are conditionally independent given $X_t$. We assume that $X_t$ and $Y^a_{t}$ are independent and identically distributed (i.i.d.) over $t \in [T] = \{1,2,\dots, T\}$. Our goal is to find a treatment arm with the highest expected reward marginalized over contextual distribution of $X_t$ with a minimal probability of misidentification after observing the reward in the round $T$.

We define our goal formally. Let $P$ be a joint distribution of $(Y^1, Y^2, \dots, Y^K, X)$, and $(Y^1_t, Y^2_t, \dots, Y^K_t, X_t)$ be an i.i.d. copy of $(Y^1, Y^2, \dots, Y^K, X)$ at round $t$. We call distributions of the potential random variables $(Y^1, Y^2, \dots, Y^K, X)$ full-data bandit models \citep{Tsiatis2007semiparametric,imbens_rubin_2015}. For $P$, let $\mathbb{P}_{P}$, $\mathbb{E}_{P}$, and $\mathrm{Var}_{P}$ be the probability, expectation, and variance in terms of $P$ respectively and $\mu^a(P) = \mathbb{E}_{P}[Y^a] = \mathbb{E}_P[\mu^a(P)(X) ]$ be the expected reward marginalized over the context $X$, where $\mu^a(P)(x) = \mathbb{E}_{P}[Y^a|X=x]$ is the conditional expected reward given $x\in\mathcal{X}$. Let $\mathcal{P}$ be a set of all joint distributions $P$ such that the the best treatment arm $a^*(P)$ uniquely exists; that is, there exists $a^*(P)\in[K]$ such that $\mu^{a^*(P)} > \max_{b \in [K]\backslash a^*(P)} \mu^b$. An algorithm in BAI is called a \emph{strategy}, which recommends a treatment arm $\widehat{a}_T \in [K]$ after sequentially sampling treatment arms in $t = 1,2,\dots,T$. With the sigma-algebras $\mathcal{F}_{t} = \sigma(X_1, A_1, Y_1, \ldots, X_{t}, A_t, Y_t)$, we define a BAI strategy as a pair $ ((A_t)_{t\in[T]}, \widehat{a}_T)$, where
\begin{itemize}
    \setlength{\parskip}{0cm}
  \setlength{\itemsep}{0cm}
    \item the sampling rule chooses a treatment arm $A_t \in [K]$ in each round $t$ based on the past observations $\mathcal{F}_{t-1}$ and observed context $X_t$. 
    \item the recommendation rule returns an estimator $\widehat{a}_T$ of the best treatment arm $\widehat{a}^*(P)$ based observations up to round $T$. Here, $\widehat{a}_T$ is $\mathcal{F}_T$-measurable.
\end{itemize}
Let $P_0$ be the ``true'' bandit model of the data generating process. 
Then, our goal is to find a BAI strategy that minimizes the probability of misidentification $\mathbb{P}_{P_0}( \widehat{a}_T \neq a^*(P_0))$.

\paragraph{Notation.} 
For all $a\in[K]$ and $x\in\mathcal{X}$, let $\nu^a(P)(x) = \mathbb{E}_{P}[(Y^a)^2|x]$ and $\mathrm{Var}_P(Y^a|x) = \left(\sigma^a(P)(x)\right)^2$. 
For the true bandit model $P_0\in\mathcal{P}$, we denote $\mu^a(P_0) = \mu^a_0$, $\mu^a(P_0)(x) = \mu^a_0(x)$, $\nu^a_0(x) = \nu^a(P_0)(x)$, and $\sigma^a_0(x) = \sigma^a(P_0)(x)$. Let $Y^{a^*_0}_t = Y^*_t$,  $a^*(P_0) = a^*_0$, $\mu^{a^*_0}_0 = \mu^*_0$, and $\nu^{a^*_0}_0 = \nu^*_0$. For the two Bernoulli distributions with mean parameters $\mu, \mu' \in [0, 1]$, we denote the KL divergence by $d(\mu, \mu') =\mu\log (\mu/\mu') + (1-\mu)\log((1-\mu)/(1-\mu'))$ with the convention that $d(0, 0) = d(1,1) = 0$. 

\section{Lower Bounds}
\label{sec:lower_bounds}
In this section, we derive lower bounds for the probability of misidentification $\mathbb{P}_{P_0}( \widehat{a}_T \neq a^*_0)$ under a small gap; that is, $\mu^*_0- \mu^{a}_0 \to 0$ for all $a\in [K]$. Our lower bounds are extensions of distribution-dependent lower bounds shown by \citet{Kaufman2016complexity}. We call them the lower bounds under a small-gap. We derive the lower bounds for two-armed bandits and bandits with more than three arms separately.

\subsection{Lower Bounds for Locally Location-shift Bandit Models}
First, the following conditions for a class of the bandit model $\mathcal{P}$ are assumed throughout this study.
\begin{assumption} 
\label{asm:bounded_mean_variance}
For all $P, Q \in \mathcal{P}$ and $a\in[K]$, let $P^a$ and $Q^a$ be the joint distributions of $(Y^a, X)$ of an treatment arm $a$ under $P$ and $Q$, respectively. The distributions $P^a$ and $Q^a$ are mutually absolutely continuous and have density functions with respect to some Lebesgue measure $m$. The potential outcome $Y^a$ has the first and second moments conditioned on $x\in\mathcal{X}$. 
There exist known constants $C_{\mu}, C_\nu, C_{\sigma^2} > 0$ such that, for all $P \in \mathcal{P}$, $a \in [K]$, and $x\in\mathcal{X}$, $| \mu^a(P)(x)| \le  C_\mu$, $|\nu^a(P)(x)| < C_\nu$ and $\max\{ 1/\left(\sigma^a(P)(x)\right)^2, \left(\sigma^a(P)(x)\right)^2\} \leq C_{\sigma^2}$ for all $x\in\mathcal{X}$. 
\end{assumption}

For a class of bandit models, we 
consider the locally location-shift class class defined as follows.
\begin{definition}[Locally location-shift bandit class]
\label{def:ls_bc}
A class of bandit models $\mathcal{P}$ is a location-shift bandit class if (i) for any $x\in\mathcal{X}$, there exist constants $\mu(x)\in\mathbb{R}$ and $\sigma^a(x) > 0$ such that for any $x\in\mathcal{X}$, any $P\in\mathcal{P}$, and all $a\in[K]$, as $\mu^{a}(P) \to \mu(x)$, 
\[\left|\left(\sigma^a(P)(x)\right)^2 - \left(\sigma^a(x)\right)^2\right| = o\left(\mu^{a^*(P)}(P) - \mu(x) \right),\] and (ii) for any $P\in\mathcal{P}$, the distribution of $X_t$ is the same.
\end{definition}
Let $\zeta(x)$ be a density of $X_t$ under a location-shift bandit class. Then, according to the law of total variance, the (unconditional) variance $\left(\sigma^{a}(P)\right)^2$ of $Y^a_t$ given as 
\[\left(\sigma^{a}(P)\right)^2 = \int \left(\sigma^a(x)\right)^2 \zeta(x)\mathrm{d}x + \int \left(\mu^a(P)(x) - \mu^a(P)\right)^2\zeta(x)\mathrm{d}x\geq \int \left(\sigma^a(x)\right)^2 \zeta(x)\mathrm{d}x. \]
for all $a\in[K]$. For simplicity, $\sigma^{a^*_0}$ is denoted by $\sigma^*$.

Here, we raise two examples as members of this class.
\begin{example}[Gaussian Distribution]
Gaussian bandits, which are bandit models whose potential outcomes follow Gaussian distributions, with fixed variances belong to the locally location-shift bandit class. 
\end{example}
\begin{example}[Bernoulli Distribution]
Beroulli bandits, which are bandit models whose potential outcomes follow Bernoulli distributions, belong to the locally location-shift bandit class. If $\mu^a(P)(x) \to \mu(x)$ for all $a\in[K]$, the conditional variance  also converges to $\sigma^1(P)(x) = \cdots = \sigma^K(P)(x) = \mu(x)(1-\mu(x))$.
\end{example}
In this section, we derive the lower bounds for this class.

To derive the lower bound, we first restrict our BAI strategy to a consistent strategy, which is also considered in \citet{Kaufman2016complexity}. 
\begin{definition}[Consistent strategy]\label{def:consistent}
For each $P\in\mathcal{P}$, if $a^*(P)$ is unique, then $\mathbb{P}_P(\widehat{a}_T = a^*(P)) \to 1$ as $T\to \infty$.
\end{definition}
In large deviation efficiency of hypothesis testing, a similar consistency is assumed \citep{Vaart1998}. 

Although we can derive the lower bound for any consistent strategy for two-armed bandits ($K=2$), we need further restrictions on a class of strategies to derive the lower bound for multi-armed bandits with $K\geq 3$. In this paper, we restrict a class of strategies to an asymptotically invariant strategy defined as follows:
\begin{definition}[Asymptotically invariant strategy]\label{def:asymp_inv}
A strategy is called asymptotically invariant if for any pair $P, Q\in\mathcal{P}\times \mathcal{P}$, any $x\in\mathcal{X}$, and all $a\in[K]$, as $T\to \infty$,
\begin{align}
    \left|\frac{1}{T}\sum^T_{t=1}\mathbb{E}_P\left[\mathbbm{1}[A_t = a]|X_t = x\right] - \frac{1}{T}\sum^T_{t=1}\mathbb{E}_Q\left[\mathbbm{1}[A_t = a]|X_t = x\right]\right| \to 0. 
\end{align}
\end{definition}

Then, we present lower bounds for bandit models belonging to the location-shift bandit class.
Let $\mathcal{W}$ be a set of all measurable functions such that $w: [K]\times\mathcal{X} \to (0, 1)$ and $\sum_{a\in[K]} w(a|x) = 1$ for all $x\in\mathcal{X}$. We refer to $w\in\mathcal{W}$ an allocation ratio, which can be used to obtain the following lower bound.
The proof is shown in Appendix~\ref{appdx:lower}. 
\begin{theorem}[Lower bound for the locally location-shift bandit class]
\label{thm:semipara_bandit_lower_bound}
Suppose that $\mathcal{P}$ is locally location-shift bandit class. 
Let $C > 0$ be some constants independent from $\{\mu^a_0\}_{a\in[K]}$.
Suppose that for any $P_0 \in \mathcal{P}$, there exists a constant $\Delta_0$ such that $\mu^*_0 - \mu^a_0 \leq \Delta_0$. Then, 
for any $P_0 \in \mathcal{P}$, 
under Assumption~\ref{asm:bounded_mean_variance}, any consistent (Definition~\ref{def:consistent}) and asymptotically invariant (Definition~\ref{def:asymp_inv}) strategy satisfies, for all $a\in [K]$,
\begin{align*}
   \lim_{\Delta_0 \to 0}\limsup_{T \to \infty} -\frac{1}{\Delta^2_0T}\log\mathbb{P}_{ P_0 }(\widehat{a}_T \neq a^*_0)\leq \sup_{w \in \mathcal{W}}\min_{a\neq a^*_0} \frac{1}{2\Omega^{a}_0(w)}  + o(1),
\end{align*}
where
\[\Omega^{a}_0(w) = \mathbb{E}_{P_0}\left[\frac{\left(\sigma^*(X_t)\right)^2}{w(a^*_0| X_t)} + \frac{\left(\sigma^a(X_t)\right)^2}{w(a| X_t)} \right].\]
\end{theorem}
We refer to a statistical model as a semiparametric model if the distribution is characterized by both of the finite dimensional parameters (gaps of the expected rewards) and infinite dimensional parameters (e.g., the distribution of contextual information).
The denominator of the first term of the RHS in the lower bound corresponds to a semiparametric efficiency bound, which corresponds to a Cram\'er-Rao lower bound for semiparametric models \citep{bickel98,Vaart1998}, of the gap (ATE) between two treatment arms $a,b\in[K]$ $a\neq b$ under an allocation ratio $w \in \mathcal{W}$ in the supremum \citep{hahn1998role}\footnote{More precisely, the semiparametric efficiency bound of the asymptotic variance of the ATE  between two treatment arms $a,b\in[K]$ $a\neq b$ is given as $\mathbb{E}_{P_0}\left[\frac{\left(\sigma^a_0(X_t)\right)^2}{w(a| X_t)} + \frac{\left(\sigma^b_0(X)\right)^2}{w(b| X_t)} + \big\{(\mu^*_0(X_t) - \mu^a_0(X_t)) - (\mu^*_0 - \mu^a_0)\big\}^2\right]$ \citep{hahn1998role}, where $\big\{(\mu^*_0(X_t) - \mu^a_0(X_t)) - (\mu^*_0 - \mu^a_0)\big\}^2$ appears unlike ours.}. This result implies that the optimal BAI strategy chooses treatment arms so as to reduce the asymptotic variance of estimators for the gaps (ATEs) between the best and suboptimal treatment arms. Here, When the asymptotic variance of the gap estimators is small, the gaps can be estimated more accurately. Theorem \ref{thm:semipara_bandit_lower_bound} will make this implication clearer. 

Here, the supremum of the RHS in the lower bound can be replaced with the maximum as
\begin{align*}
    \sup_{w \in \mathcal{W}}\min_{a\neq a^*_0} \frac{1}{2\Omega^{a}_0(w)} = \max_{w \in \mathcal{W}}\min_{a\neq a^*_0} \frac{1}{2\Omega^{a}_0(w)}.
\end{align*}
In Theorem~\ref{thm:semipara_bandit_lower_bound_lsmodel}, we show the analytical solution of $w^* = \argmax_{w \in \mathcal{W}}\min_{a\neq a^*_0} \frac{1}{2\Omega^{a}_0(w)}$. Then, the analytical solution of this maximization problem and refined lower bound are shown in the following theorem. 

\begin{theorem}[Lower bounds for the location-shift bandit class]
\label{thm:semipara_bandit_lower_bound_lsmodel}
Suppose that $\mathcal{P}$ is a locally location-shift bandit class, and Assumption~\ref{asm:bounded_mean_variance} holds. For any $P_0 \in \mathcal{P}$, suppose that there exists a constant $\Delta_0$ such that $\mu^*_0 - \mu^a_0 \leq \Delta_0$. Let $C_1, C_2 > 0$ be some constants independent from $T$ and $\Delta_0$. 
Then, for any $P_0 \in \mathcal{P}$, any consistent (Definition~\ref{def:consistent}) and asymptotically invariant (Definition~\ref{def:asymp_inv}) strategy satisfies the following lower bounds hold for each case with $K=2$ and $K\geq 3$:
\begin{itemize}
\item when $K = 2$, $w^*(1|x) = \frac{\sigma^1(x)}{\sigma^1(x) + \sigma^2(x)}$ and $w^*(2|x) = \frac{\sigma^2(x)}{\sigma^1(x) + \sigma^2(x)}$ for any $x\in\mathcal{X}$, and 
\begin{align*}
   \lim_{\Delta_0 \to 0}\limsup_{T \to \infty} -\frac{1}{\Delta^2_0T}\log\mathbb{P}_{ P_0 }(\widehat{a}_T \neq a^*_0)\leq \frac{1}{2\mathbb{E}_{P_0}\left[\left(\sigma^1(X_t) + \sigma^2(X_t)\right)^2\right]} + o(1). 
\end{align*}
\item when $K\geq 3$, $w^*(a|x) = \frac{\left(\sigma^a(x)\right)^2}{\sum_{b\in[K]}\left(\sigma^b(x)\right)^2}$ for all $a\in[K]$ and any $x\in\mathcal{X}$, and 
\begin{align*}
   \lim_{\Delta_0 \to 0}\limsup_{T \to \infty} -\frac{1}{\Delta^2_0T}\log\mathbb{P}_{ P_0 }(\widehat{a}_T \neq a^*_0)\leq \frac{1}{2\sum_{b\in[K]}\mathbb{E}_{P_0}\left[\left(\sigma^b(X_t)\right)^2\right]} + o(1). 
\end{align*}
\end{itemize}
\end{theorem}

Because all gaps $\mu^*_0 - \mu^a_0$ are assumed to be upper bounded by $\Delta_a$, we consider a situation where the expected rewards of all suboptimal treatment arms are in $[\mu^*_0 - \Delta_0, \mu^*_0)$. To obtain lower bounds, it is sufficient to consider a case where $\mu^b = \mu^*_0 = \Delta_0$, under which the largest lower bounds are given (Figure~\ref{fig:concept:smallgap}). Based on implications obtained from Theorem~\ref{thm:semipara_bandit_lower_bound_lsmodel}, we construct our strategy in Section~\ref{sec:track_aipw}.

\begin{figure}[t]\centering
\includegraphics[scale=0.3]{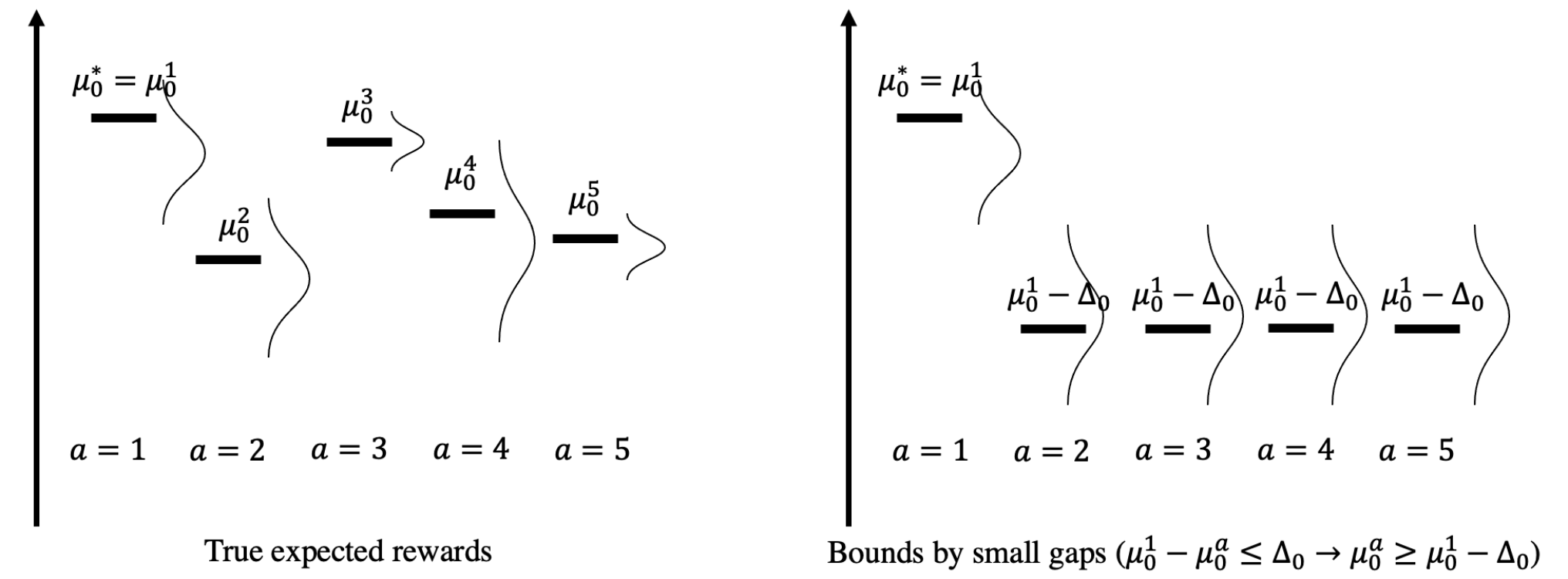}
\caption{An idea in the derivation of the lower bounds. To lower bound the probability of misidentification, or equivalently upper bound $ - \frac{1}{T}\log\mathbb{P}_{ P_0 }(\widehat{a}_T \neq a^*_0)$, it is sufficient to consider a case in the fight figure.}
\label{fig:concept:smallgap}
\end{figure}

Here, note that an allocation ratio $w$ in the supremum corresponds to an expectation of sampling rule $\frac{1}{T}\sum^T_{t=1}\mathbbm{1}[A_t = a]$ conditioned on $x$ under an alternative hypothesis $Q\in\mathcal{P}$ such that $Q\neq P_0$, which is used to derive the lower bound. From the definition of asymptotically invariant strategies, the maximizer $\widetilde{w}$ also work as a target sample allocation ratio
used in our proposed strategy; that is, n expectation of sampling rule $\frac{1}{T}\sum^T_{t=1}\mathbbm{1}[A_t = a]$ conditioned on $x$ under an alternative hypothesis $P_0\in\mathcal{P}$. 
In Sections~\ref{sec:track_aipw} and \ref{sec:asymp_opt}, we show that by allocation samples following the target sample allocation ratio, the upper bound for the probability of misidentification in our proposed strategy matches the lower bound. Thus, we can confirm that these maximizers correspond to the optimal target sample allocation ratio.

\subsection{Interpretations of Lower Bounds under the Small-gap Regime and Restrictions on Strategies}
In this section, we discuss the interpretations of lower bounds under the small-gap regime and restrictions on strategies. Although \citet{Kaufman2016complexity} and \citet{Carpentier2016} derive the lower bound for BAI with a fixed budget, their lower bounds do not employ a small-gap regime. Besides, \citet{Kaufman2016complexity} considers only a consistent strategy while we further restrict strategies to asymptotically invariant. We discuss the meanings of these elements in lower bounds. First, we review the lower bound for two-armed Gaussian bandits of \citet{Kaufman2016complexity}, which is derived for consistent strategies. However, this lower bound cannot be extended to multi-armed bandits. Therefore, we next consider restricting strategies to asymptotically invariant ones and the small-gap regime. We explain the meanings and benefits of these restrictions. Finally, we raise specific cases of our locally location-shift bandit class. 

\paragraph{Review of distribution-dependent lower bounds without contextual information.} 
First, we review the distribution-dependent lower bound for the probability of misidentification in two-armed bandits following Gaussian distributions with a fixed budget when there is no contextual information \citep{Kaufman2016complexity}. Let $\left(\sigma^a(P_0)\right)^2 = \left(\sigma^a_0\right)^2$. When the potential outcome of each treatment arm $a\in[K]$ follows the Gaussian distributions, the distribution-dependent lower bound is given as the following proposition. 
\begin{proposition}[Theorem~12 in \citet{Kaufman2016complexity}]
\label{prp:kauf_gaussian}
Suppose that $K = 2$, and consider a bandit class $\mathcal{P}$ such that for any $P\in\mathcal{P}$, $Y^a$ is generated from $\mathcal{N}\left(\mu^a(P), \left(\sigma^a\right)^2\right)$ for all $a\in[K]$, where $\mu^a(P) \in \mathbb{R}$ and $\left(\sigma^a_0\right)^2 > 0$ are constants, and $\mathcal{N}(\mu^a(P), \left(\sigma^a_0\right)^2)$ is a Gaussian distribution with a mean $\mu^a(P)$ and a variance $\left(\sigma^a_0\right)^2$ (variance is fixed for any $P\in\mathcal{P}$). Then, for any $P_0 \in\mathcal{P}$, any consistent strategy satisfies
\begin{align*}
    -\frac{1}{T}\log\mathbb{P}_{ P_0 }(\widehat{a}_T \neq a^*_0)\leq  \frac{\left(\mu^1_0 - \mu^2_0\right)^2}{2 \big(\sigma^1_0 + \sigma^2_0\big)^2}.
\end{align*} 
\end{proposition}
The proof is shown in \citet{Kato2022small}. 
This lower bound can be derived without restrictions of the asymptotically invariant strategy (Definition~\ref{def:asymp_inv}) and small gap ($|\mu^1_0 - \mu^2_0| \to 0$). Although the optimal target sample allocation ratio from this lower bound cannot be derived, by using $w^{*}(1) = \frac{\sigma^1_0}{\sigma^1_0 + \sigma^2_0}$ and $w^{*}(0) = 1- w^{*}(1)$ as a target sample allocation ratio, we can find an asymptotically optimal algorithm whose upper bound matches the lower bound when $|\mu^1_0 - \mu^2_0| \to 0$ when $\sigma^1_0$ and $\sigma^2_0$ are known \citep{glynn2004large,Kaufman2016complexity}.

\paragraph{Restrictions on a strategy class and the small-gap regime.} We restrict our strategies to asymptotic invariant strategies in addition to consistent strategies. As shown in Proposition~\ref{prp:kauf_gaussian}, this restriction is unnecessary to derive a lower bound for two-armed bandits with a Gaussian distribution without contextual information. However, without this restriction, we conjecture that we cannot derive lower bounds due to the reverse KL problem \citep{kaufmann2020hdr} when the number of treatment arms is larger than three or contextual information is available. We conjecture the following reason for why this restriction is a necessity. In BAI with a fixed budget, an estimation error of a target sample allocation ratio affects the evaluation of the probability of misidentification. In particular, if the target sample allocation ratio depends on $a^*_0$, we need to estimate $a^*_0$ to obtain the target sample allocation ratio. By restricting strategies to asymptotically invariant ones, we can avoid this estimation. On the other hand, when $K=2$, the target sample allocation ratio depends only on the standard deviation and not on the $a^*_0$ even if we use the restriction. Therefore, even without restriction, we can derive the lower bound when there are only two treatment arms.\footnote{In addition, we also do not have to use this restriction when considering the Equal-variance class (Definition~\ref{def:equal}), such as Bernoulli distributions, defined below. In those cases, the uniform sampling is target sample allocation ratio.} In fact, the lower bound for two-armed Gaussian bandits without contextual information with the restriction is the same as that without the restriction. 

In addition to the asymptotically invariant strategies, we consider the small-gap regime. We consider this regime mainly for the technical purpose of deriving the lower bounds, as explained in the next paragraph, but this regime has its own meaning. As we mentioned, this regime corresponds to one of the hardest (worst) cases to identify the best treatment arm. Therefore, we can interpret that asymptotically optimal strategies are 
a kind of asymptotically minimax optimal in the sense that the worst-case probability of misidentification matches the worst-case lower bound. 

From a technical perspective, asymptotically invariant strategies and the small-gap regime are important because they allow us to ignore the estimation error of a target sample allocation ratio. As shown by \citet{Carpentier2016}, we cannot develop strategies under which the probability of misidentification matches the lower bound of \citet{Kaufman2016complexity} when the gaps are large and strategies are restricted only to consistent ones. We consider that when gaps are large, an estimation error of the optimal target sample allocation ratio is a contributing factor of the probability of misidentification. Under the small-gap regime, we can ignore the estimation error relative to the probability of misidentification because identification of the best treatment arm becomes difficult when the gaps are sufficiently small. Thus, the small-gap regime makes the asymptotic optimality argument in fixed-budget BAI tractable by allowing the evaluation of optimal allocation probabilities to be ignored. 

As another advantage of the restrictions, we can obtain an analytical solution for the target sample allocation ratio. Under the small gap regime, if we constrain a strategy to be asymptotically invariant, we can express the target sample allocation ratio in terms of variance. This property is a great advantage in the computation of the target sample allocation ratio and in the interpretation of the algorithm.

\paragraph{Specific classes of the locally location-shift bandit class.}
By generalizing two cases, where all bandit models follow Gaussian distributions and those following Bernoulli bandit models, we define location-shift and equal-variance bandit classes as specific classes of the locally location-shift bandit class.

First, we consider the following location-shift bandit class, where the conditional variances are fixed, and only mean parameters vary across bandit models. This class is a generalization of a class of Gaussian distributions. 
\begin{definition}[Location-shift bandit class]
\label{def:ls_bc_g}
A class of bandit models $\mathcal{P}^{\mathrm{L}}$ is a location-shift bandit class if $\mathcal{P}^{\mathrm{L}} = \left\{P\in\mathcal{P} : \left(\sigma^a(P)(x)\right)^2 = \left(\sigma^a(x)\right)^2\right\}$, where $\sigma^a(x) > 0$ is a constant. 
\end{definition}
For this class, from Theorem~\ref{thm:semipara_bandit_lower_bound_lsmodel}, the lower bound given as $\frac{1}{2\sum_{b\in[K]}\mathbb{E}_{P_0}\left[\left(\sigma^b(X_t)\right)^2\right]} + o\left(1\right)$. 

As a generalization of bandit models whose potential outcomes follow 
one-parameter distributions such as Bernoulli, Binomial, and Gamma distributions, we define the following equal-variance bandit class. 
\begin{definition}[Equal-variance bandit class]
\label{def:equal}
A class of bandit models $\mathcal{P}^{\mathrm{E}}$ is an equal-variance bandit class if for a location-shift bandit model $\mathcal{P}$, $\sigma^{1}(x) = \sigma^{2}(x) = \cdots = \sigma^{K}(x) = \sigma(x)$ for any $x\in\mathcal{X}$, where $\sigma(x)$ is a constant.
\end{definition}
When outcomes follow Bernoulli distributions, the bandit model belongs to the equal-variance bandit class because the variances are the same when the expected rewards are the same. For this class, from Theorem~\ref{thm:semipara_bandit_lower_bound_lsmodel}, the lower bound given as $\frac{1}{K\mathbb{E}_{P_0}\left[\left(\sigma(X_t)\right)^2\right]} + o(1)$. Because the variances are equal across treatment arms, the target sample allocation ratio is also equal across treatment arms.
This lower bound and the target sample allocation ratio implies that the uniform-EBA strategy is optimal, where we choose each treatment arm with the same probability (the uniform sampling rule) and recommend a treatment arm with the highest sample average of observed rewards (the empirical best arm (EBA) recommendation rule). The fact that the uniform-EBA strategy is approximately optimal for two-armed Bernoulli bandits is also reported by \citet{Kaufman2016complexity}. 

\subsection{Efficiency Gain by using Contextual Information}
\label{sec:efficiency_gain}
We investigate when a strategy can gain efficiency by using contextual information; that is, how lower bounds are tightened by using contextual information. We first show lower bounds to investigate the efficiency gain when contextual information is unavailable. Let $w^*:[K]\to (0,1)$ such that $\sum_{a\in[K]}w^*(a) = 1$ be a target sample allocation ratio when contextual information is not available.
\begin{corollary}[Lower bounds for the location-shift bandit class]
\label{cor:lower_bound_lsmodel}
Suppose that $\mathcal{P}$ is a locally location-shift bandit class, and Assumption~\ref{asm:bounded_mean_variance} holds. For any $P_0 \in \mathcal{P}$, suppose that there exists a constant $\Delta_0$ such that $\mu^*_0 - \mu^a_0 \leq \Delta_0$. Let $C_1, C_2 > 0$ be some constants independent from $T$ and $\Delta_0$. 
Then, for any $P_0 \in \mathcal{P}$, any consistent (Definition~\ref{def:consistent}) and asymptotically invariant (Definition~\ref{def:asymp_inv}) strategy satisfies the following lower bounds hold for each case with $K=2$ and $K\geq 3$:
\begin{itemize}
\item when $K = 2$, $w^*(1) = \frac{\sigma^1_0}{\sigma^1_0 + \sigma^2_0}$ and $w^*(2) = \frac{\sigma^2_0}{\sigma^1_0 + \sigma^2_0}$, and 
\begin{align*}
   \limsup_{T \to \infty} -\frac{1}{\Delta^2_0T}\log\mathbb{P}_{ P_0 }(\widehat{a}_T \neq a^*_0)\leq \frac{1}{2\left(\sigma^1_0 + \sigma^2_0\right)^2} + o(1). 
\end{align*}
\item when $K\geq 3$, $w^*(a) = \frac{\left(\sigma^a\right)^2}{\sum_{b\in[K]}\left(\sigma^b\right)^2}$ for all $a\in[K]$, and 
\begin{align*}
   \limsup_{T \to \infty} -\frac{1}{\Delta^2_0T}\log\mathbb{P}_{ P_0 }(\widehat{a}_T \neq a^*_0)\leq \frac{1}{2\sum_{a\in[K]}\left(\sigma^a_0\right)^2} + o(1). 
\end{align*}
\end{itemize}
\end{corollary}
As we show the following theorem, lower bounds can be improved by using contextual information. The equality holds, for example, if $\left(\sigma^a_0\right)^2 = \left(\sigma^a(x)\right)^2$ for any $x\in\mathcal{X}$ and all $a\in[K]$ for $K=2$, and if  $\left(\sigma^a_0\right)^2 = \mathbb{E}_{P_0}\left[\left(\sigma^a(X_t)\right)^2\right]$ for all $a\in[K]$ for $K \geq 3$. 
\begin{theorem}
The lower bounds in Theorem~\ref{thm:semipara_bandit_lower_bound_lsmodel} is tighter than those in Corollary~\ref{cor:lower_bound_lsmodel}; 
that is, when $K=2$,
\begin{align*}
   &\frac{1}{2 \big(\sigma^1_0 + \sigma^2_0\big)^2} \leq \frac{1}{2\mathbb{E}_{P_0}\left[\left(\sigma^1(X_t) + \sigma^2(X_t)\right)^2\right]},
\end{align*}
where the equality holds if $\left(\sigma^a_0\right)^2 = \left(\sigma^a(x)\right)^2$ for any $x\in\mathcal{X}$ and all $a\in[K]$;
when $K \geq 3$,
\begin{align*}
   &\frac{1}{2\sum_{b\in[K]}\left(\sigma^b_0\right)^2} \leq \frac{1}{2\sum_{b\in[K]}\mathbb{E}_{P_0}\left[\left(\sigma^b(X_t)\right)^2\right]}, 
\end{align*}
where the equality holds if $\left(\sigma^a_0\right)^2 = \mathbb{E}_{P_0}\left[\left(\sigma^a(X_t)\right)^2\right]$ for all $a\in[K]$, which is a weaker condition than the condition for a case with $K=2$. 
\end{theorem}
We show the proof below. Here, we can find a case where $\frac{1}{2 \big(\sigma^1_0 + \sigma^2_0\big)^2} < \frac{1}{2\mathbb{E}_{P_0}\left[\left(\sigma^1(X_t) + \sigma^2(X_t)\right)^2\right]}$; that is,
$\left(\sigma^1_0 + \sigma^2_0\right)^2 > \mathbb{E}_{P_0}\left[\left(\sigma^1(X_t) + \sigma^2(X_t)\right)^2\right]$. Suppose that contextual information is discrete, and we observe context $X_{1}$ with probability $0.5$ and $X_{2}$ otherwise. Let $\left(\sigma^1(X_{1})\right)^2 = 7.5$, $\left(\sigma^1(X_{2})\right)^2 = 0.5$, $\left(\sigma^2(X_{1})\right)^2 = 0.5$, $\left(\sigma^2(X_{2})\right)^2 = 1.5$. We also suppose that $\mu^a(x) = \mu^a$ for $x\in\{X_{1}, X_{2}\}$. In this case, $\left(\sigma^a_0\right)^2 = \mathbb{E}_{P_0}\left[\left(\sigma^a(X_t)\right)^2\right] = 0.5\left(\sigma^a(X_1)\right)^2 + 0.5\left(\sigma^a(X_2)\right)^2$; therefore, $\left(\sigma^1_0\right)^2 = 4$ and $\left(\sigma^2_0\right)^2 = 1$. Then, we have $\mathbb{E}_{P_0}\left[\left(\sigma^1(X_t) + \sigma^2(X_t)\right)^2\right] \approx 7.8 < \left(\sigma^1_0 + \sigma^2_0\right)^2 = 9$. Therefore, we can gain efficiency by using contextual information. We discuss the efficiency gain in Section~\ref{sec:asymp_opt}. 
\begin{proof}Consider a case with $K=2$.
Recall that the lower bounds with contextual information are $\max_{w \in \mathcal{W}} \frac{1}{\mathbb{E}_{P_0}\left[\frac{\left(\sigma^1(X_t)\right)^2}{w(1| X_t)} + \frac{\left(\sigma^2(X_t)\right)^2}{w(2| X_t)} \right]}$ and those without contextual information are the $\max_{w \in (0, 1)} \frac{1}{\mathbb{E}_{P_0}\left[\frac{\left(\sigma^1_0\right)^2}{w} + \frac{\left(\sigma^2_0\right)^2}{1-w} \right]}$. Therefore, we compare $\mathbb{E}_{P_0}\left[\frac{\left(\sigma^1(X_t)\right)^2}{w(1| X_t)} + \frac{\left(\sigma^2(X_t)\right)^2}{w(2| X_t)} \right]$ and $\mathbb{E}_{P_0}\left[\frac{\left(\sigma^1_0\right)^2}{w} + \frac{\left(\sigma^2_0\right)^2}{1-w} \right]$. Here, from \[\left(\sigma^a_0\right)^2 = \mathbb{E}_{P_0}\left[\left(\sigma^a(X_t)\right)^2\right] + \mathbb{E}_{P_0}\left[ \left(\mu^a_0(X_t) - \mu^a_0\right)^2 \right] \geq \mathbb{E}_{P_0}\left[\left(\sigma^a(X_t)\right)^2\right],\] it holds that 
\begin{align*}
&\mathbb{E}_{P_0}\left[\left(\sigma^1(X_t) + \sigma^2(X_t)\right)^2\right] = \min_{w\in\mathcal{W}}\mathbb{E}_{P_0}\left[\frac{\left(\sigma^1(X_t)\right)^2}{w(1| X_t)} + \frac{\left(\sigma^2(X_t)\right)^2}{w(2| X_t)}\right]\leq \min_{w\in (0, 1)}\mathbb{E}_{P_0}\left[\frac{\left(\sigma^1(X_t)\right)^2}{w} + \frac{\left(\sigma^2(X_t)\right)^2}{1-w}\right]\\
&\ \ \ = \min_{w\in(0, 1)}\left\{\frac{\mathbb{E}_{P_0}\left[\left(\sigma^1(X_t)\right)^2\right]}{w} + \frac{\mathbb{E}_{P_0}\left[\left(\sigma^2(X_t)\right)^2\right]}{1- w}\right\} \leq \min_{w\in(0, 1)}\left\{\frac{\left(\sigma^1_0\right)^2}{w} + \frac{\left(\sigma^2_0\right)^2}{1-w}\right\} = \left(\sigma^1_0 + \sigma^2_0\right)^2.
\end{align*}

When $K \geq 3$, we can directly confirm that $\sum_{b\in[K]}\left(\sigma^b_0\right)^2 \geq \sum_{b\in[K]}\mathbb{E}_{P_0}\left[\left(\sigma^b(X_t)\right)^2\right]$. 

In both cases, the equality holds if $\left(\sigma^a_0\right)^2 = \left(\sigma^a(x)\right)^2$ for any $x\in\mathcal{X}$ and all $a\in[K]$. When $K\geq 3$, the equality also holds if $\left(\sigma^a_0\right)^2 = \mathbb{E}_{P_0}\left[\left(\sigma^a(X_t)\right)^2\right]$ for all $a\in[K]$, which is a weaker condition than the former condition. 
\end{proof}

\section{Proposed Strategy: the RS-AIPW Strategy}
\label{sec:track_aipw}
This section presents our strategy, which consists of sampling and recommendation rules. For each $t = 1,2,\dots, T$, our sampling rule randomly chooses a treatment arm with a probability identical to an estimated target sample allocation ratio. In final round $T$, our recommendation rule recommends a treatment arm with the highest-estimated expected reward. Based on these rules, we refer to this as the RS-AIPW strategy.\footnote{This strategy resembles ones in efficient ATE estimation via an adaptive experiment \citep{Laan2008TheCA,Hahn2011}. A sampling rule using the target sample allocation ratio \eqref{eq:target} for a case with $K=2$ is referred to as the Neyman allocation rule \citep{Armstrong2022,adusumilli2022minimax}. \citet{Kato2022small} discusses its asymptotic optimality in BAI with a fixed budget when the number of treatment arms is two, contextual information is not available, and the potential outcomes follow Gaussian distributions. Generalizing their result, we refine the Neyman allocation rule to cases where there are multiple treatment arms and contextual information.  Regarding the upper bound, we apply the results of  \citet{Fan2013,fan2014generalization} with the modifications by \citet{Kato2022small}.}

\subsection{Target Sample Allocation Ratio}
First, we define a target sample allocation, which is used to define a sampling rule. We estimate it during an adaptive experiment and employ the estimator as a probability of choosing a treatment arm. We call a target sample allocation worst-case optimal if the upper and lower bounds for the probability of misidentification match under a small-gap, one of the worst cases, when using our strategy using the allocation ratio. 
We conjecture the worst-case optimal target sample allocation ratio using the results of Section~\ref{sec:lower_bounds}. 
In particular, the results of Theorem~\ref{thm:semipara_bandit_lower_bound_lsmodel} yields the following conjectures for the worst-case optimal target sample allocation ratio $w^*\in\mathcal{W}$: when $K = 2$, for each $x\in\mathcal{X}$,
\begin{align}
\label{eq:target}
w^*(1|x) = \frac{\sigma^1(x)}{\sigma^1(x) + \sigma^2(x)}\ \mathrm{and}\ w^*(2|x) = \frac{\sigma^2(x)}{\sigma^1(x) + \sigma^2(x)}; 
\end{align}
when $K \geq 3$, for each $x\in\mathcal{X}$ and all $a\in[K]$,
\[w^*(a|x) = \frac{\left(\sigma^a(x)\right)^2}{\sum_{b\in[K]}\left(\sigma^b(x)\right)^2}.\]
Under this conjectured target sample allocation ratio, we can show that the upper and lower bounds for the probability of misidentification match under a small-gap regime in Section~\ref{sec:asymp_opt}; hence, we call this target sample allocation ratio worst-case optimal.  
This target sample allocation ratio is unknown when the variances are unknown; therefore, it must be estimated via observations during the bandit process.

\subsection{Sampling Rule with Random Sampling (RS) and Estimation}
We provide a sampling rule referred as to a \textit{random sampling} (RS) rule. For $a\in[K]$ and $t\in[T]$, let $\widehat{w}_{t}(a|x)$ be an estimated target sample allocation ratio at round $t$. In each round $t$, we obtain $\gamma_t$ from the uniform distribution on $[0,1]$ and choose a treatment arm $A_t = 1$ if $\gamma_t \leq \widehat{w}_{t}(1|X_t)$ and $A_t = a$ for $a \geq 2$ if $\gamma_t \in (\sum^{a-1}_{b=1}\widehat{w}_{t}(b|X_t), \sum^a_{b=1}\widehat{w}_{t}(b|X_t)]$.

As an initialization, we choose a treatment arm $A_t$ at round $t \leq K$ and set $\widehat{w}_t(a| x) = 1/K$ for $a\in[K]$ and $x\in\mathcal{X}$.
In a round $t > K$, for all $a\in[K]$, we estimate the target sample allocation ratio $w^{*}$ using past observations $\mathcal{F}_{t-1}$, such that for all $a\in[K]$ and $x\in\mathcal{X}$, $\widehat{w}_{t}(a|x) > 0$ and $\sum_{a\in[K]} \widehat{w}_{t}(a|x) = 1$. 
Then, in round $t$, we choose a treatment arm $a$ with a probability $\widehat{w}_{t}(a|X_t)$. 
To construct an estimator $\widehat{w}_{t}(a| x)$ for all $x\in\mathcal{X}$ in each round $t$, we denote a bounded estimator of the conditional expected reward $\mu^a_0(x)$ by $\widehat{\mu}^a_{t}(x)$, that of the conditional expected squared reward $\nu^a_0(x)$ by $\widehat{\nu}^a_{t}(x)$, and that of the  conditional variance $\left(\sigma^a_0(x)\right)^2$ by $(\widehat{\sigma}^a_t(x))^2$. All estimators are constructed only from samples up to round $t$. More formally, they are constructed as follows.
For $t=1,2,\dots, K$, we set $\widehat{\mu}^a_{t} = \widehat{\nu}^a_{t} = (\widehat{\sigma}^a_t(x))^2 = 0$. For $t > K$, we estimate $\mu^a(x)$ and $\nu^a(x)$ using only past samples $\mathcal{F}_{t-1}$ and converge to the true parameter almost surely (Assumption~\ref{asm:almost_sure_convergence}). For the estimators, we can use nonparametric estimators, such as the nearest neighbor regression estimator and kernel regression estimator, which are prove to converge to the true function almost surely under a bounded sampling probability $\widehat{w}_t$ by \citet{yang2002} and \citet{qian2016kernel}. 
As long as these conditions are satisfied, any estimators can be used. Note that we do not assume specific convergence rates for these estimators because we can show the asymptotic optimality without them owing to the unbiasedness of the AIPW estimator \citep{Kato2021adr}. Let $(\widehat{\sigma}^{\dagger a}_t(x))^2 = \widehat{\nu}^a_{t}(x) - \left(\widehat{\mu}^a_{t}(x)\right)^2$ for all $a\in[K]$ and $x\in\mathcal{X}$. 
Then, we estimate the variance $\left(\sigma^a_0(x)\right)^2$ for all  $a\in[K]$ and $x\in\mathcal{X}$ in a round $t$ as $\left(\widehat{\sigma}^a_{t}(x)\right)^2 =\max\{ \min\{((\widehat{\sigma}^{\dagger a}_t(x))^2,C_{\sigma^2}\},1/C_{\sigma^2}\}$ and define
$\widehat{w}_{t}$ by replacing the variances in $w^*$ with corresponding estimators; that is,
\begin{align*}
\widehat{w}_{t}(1|x) = \frac{\widehat{\sigma}^1_t(X_t)}{\widehat{\sigma}^1_t(X_t) + \widehat{\sigma}^2_t(X_t)}\ \mathrm{and}\ \widehat{w}_{t}(2|x) = \frac{\widehat{\sigma}^2_t(X_t)}{\widehat{\sigma}^1_t(X_t) + \widehat{\sigma}^2_t(X_t)}; 
\end{align*}
when $K \geq 3$, for each $x\in\mathcal{X}$,
\[\widehat{w}_{t}(a|x) = \frac{(\widehat{\sigma}^a_t(X_t))^2}{\sum_{b\in[K]}(\widehat{\sigma}^b_t(X_t))^2}\] 
If there are multiple elements in $\argmax_{a\in[K]} \widehat{\mu}^a_{t}(x)$, we choose one of them as $\widehat{a}_t$ in some way. 

We employ this strategy to apply the large deviation expansion for martingales to the estimator of the expected reward, which is the core of our theoretical analysis in Section~\ref{sec:asymp_opt}.

\subsection{Recommendation Rule with the AIPW Estimator}
The following section presents our recommendation rule.
In the recommendation phase of round $T$, for each $a\in[K]$, we estimate $\mu^a$ for each $a\in[K]$ and recommend the maximum. 
To estimate $\mu^a$, the AIPW estimator is defined as
\begin{align}
\label{eq:aipw}
\widehat{\mu}^{\mathrm{AIPW}, a}_{T} =\frac{1}{T} \sum^T_{t=1}\varphi^a\Big(Y_t, A_t, X_t; \widehat{\mu}^a_t, \widehat{w}_t\Big),\qquad \varphi^a(Y_t, A_t, X_t; \widehat{\mu}^a_t, \widehat{w}_t) = \frac{\mathbbm{1}[A_t = a]\big(Y^a_{t}- \widehat{\mu}^a_{t}(X_t)\big)}{\widehat{w}_t(a| X_t)} + \widehat{\mu}^a_{t}(X_t).
\end{align}

In the final round $t=T$, we recommend $\widehat{a}_T \in [K]$ as
\begin{align}
\label{eq:recommend}
\widehat{a}_T = \argmax_{a\in[K]} \widehat{\mu}^{\mathrm{AIPW}, a}_{T}.
\end{align}
The AIPW estimator has the following properties: (i) its components $\{\varphi^a(Y_t, A_t, X_t; \widehat{\mu}^a_t, \widehat{w}_t)\}^T_{t=1}$ are a martingale difference sequence, thereby allowing us to use the large deviation bounds for martingales; (ii) it has the minimal asymptotic variance among the possible estimators.
For instance, we can use other estimators with a martingale property, such as the inverse probability weighting (IPW) estimator \citep{Horvitz1952}, but their asymptotic variance will be larger than that of the AIPW estimator. For the $t$-th element of the sum in the AIPW estimator, we use the nuisance parameters estimated from past observations up to the round $t-1$ to make the sequence in the sum a martingale difference sequence. This technique is often used in adaptive experiments \citep{Laan2008TheCA,hadad2019,Kato2020adaptive,Kato2021adr} and also has a similar motivation to double machine learning \citep{ChernozhukovVictor2018Dmlf}. Note that in double machine learning for a doubly robust (DR) estimator, we usually impose specific convergence rates for the estimators of the nuisance parameter, which are not required in our case owing to the unbiasedness of the AIPW estimator (Assumption~\ref{asm:almost_sure_convergence}). Also see \citet{Kato2021adr}. 

We present the pseudo-code in Algorithm~\ref{alg}. Note that $C_{\mu}$ and $C_{\sigma^2}$ are introduced for technical purposes to bound the estimators. Therefore, any large positive value can be used. 

\begin{remark}[Remark on the sampling rule]
Unlike the sampling rule of \citet{Garivier2016}, our proposed sampling rule does not choose the next treatment arm so that the empirical allocation ratio tracks the optimal target sample allocation ratio. This is due to the use of martingale properties under the AIPW estimator in the theoretical analysis of the upper bound. 
\end{remark}
\begin{remark}[Sampling for stabilization] In the pseudo-code, only the first $K$ rounds are used for initialization. To stabilize the performance, we can increase the number of samplings in initialization, similarly to the forced-sampling  approach employed by \citet{Garivier2016}. In Section~\ref{sec:asymp_opt}, to show the asymptotic optimality, we use almost sure convergence of $\widehat{w}_{t}$ to $w^{*}$. As long as $\widehat{w}_{t} \xrightarrow{\mathrm{a.s}} w^{*}$, we can adjust $\widehat{w}_{t}$ appropriately. For instance, we can use $\widetilde{w}_{t} = (1-r_t)\widehat{w}_t(a|X_t) + r_t 1/2$ as the sampling probability instead of $\widehat{w}_t$, where $r_t \to 0$ as $t\to \infty$. 
\end{remark}
\begin{remark}[The role of $C_{\sigma^2}$] 
Assumption~\ref{asm:bounded_mean_variance} implies that the sampling probability is bounded by a small constant, $1/(2C_{\sigma^2}) \leq w^{*}(a|x) \leq C_{\sigma^2}/2$. Thus, it ensures that the variance of the AIPW estimator is finite. Although the role of this constant appears to be similar to the forced sampling \citep{Garivier2016}, it is substantially different. We can set $C_{\sigma^2}$ sufficiently large so that it is almost negligible in implementation. 
\end{remark}

\begin{algorithm}[tb]
   \caption{RS-AIPW strategy}
   \label{alg}
\begin{algorithmic}
   \STATE {\bfseries Parameter:} Positive constants $C_{\mu}$ and $C_{\sigma^2}$.
   \STATE {\bfseries Initialization:} 
   \FOR{$t=1$ to $K$}
   \STATE Draw $A_t=t$. For each $a\in[K]$, set $\widehat{w}_{t}(a|x) = 1/K$.
   \ENDFOR
   \FOR{$t=K+1$ to $T$}
   \STATE Observe $X_t$. 
   \STATE Construct $\widehat{w}_{t}(1|X_t)$ by using the estimators of the variances.
   \STATE Draw $\gamma_t$ from the uniform distribution on $[0,1]$. 
   \STATE $A_t = 1$ if $\gamma_t \leq \widehat{w}_{t}(1|X_t)$ and $A_t = a$ for $a \geq 2$ if $\gamma_t \in \left(\sum^{a-1}_{b=1}\widehat{w}_{t}(b|X_t), \sum^a_{b=1}\widehat{w}_{t}(b|X_t)\right]$. 
   \ENDFOR
   \STATE Construct $\widehat{\mu}^{\mathrm{AIPW}, a}_{T}$ for each $a\in[K]$ following \eqref{eq:aipw}.
   \STATE Recommend $\widehat{a}_T$ following \eqref{eq:recommend}.
\end{algorithmic}
\end{algorithm} 

\section{Asymptotic Optimality of the RS-AIPW Strategy}
\label{sec:asymp_opt}
In this section, we derive the following upper bound of the misspecification probability of the RS-AIPW strategy, which implies that the strategy is asymptotically optimal.

\subsection{Asymptotic Optimality}
We derive the upper bounds for bandit models, where the rewards are sub-exponential random variables. 
\begin{assumption}
\label{asm:sub_exp}
For all $P\in\mathcal{P}$ and $a\in[K]$, $X_{t}$ is sub-exponential random variable and $Y^a_{t}$ is conditionally sub-exponential random variable given $X_t = x$; that is, there are constants $U, U' > 0$ such that for all $P\in\mathcal{P}$,  $a\in[K]$, $t, u, u' > 0$, and $x\in\mathcal{X}$, $\mathbb{P}_P(|X_t| > u) \leq 2\exp( - u/U)$ and $\mathbb{P}_P(|Y_t| > u| X_t = x) \leq 2\exp( - u'/U')$
\end{assumption}
We also suppose that estimators of $\mu^a_0(x)$ and $\nu^a_0(x)$ converges to the true values almost surely.
\begin{assumption}
\label{asm:almost_sure_convergence}
For all $a\in[K]$ and $x\in\mathcal{X}$, $\widehat{\mu}^a_{t}(x)$ and $\widehat{w}^a_{t}(a| x)$ are $\mathcal{F}_{t-1}$-measurable, $|\widehat{\mu}^a_{t}(x)| \leq C_\mu$ and $|\widehat{w}^a_{t}(a| x)| \leq C_w$, and 
\begin{align*}
    t^{\alpha}\left|\widehat{\mu}^a_{t}(x) - \mu^a_0(x) \right|  \xrightarrow{\mathrm{a.s.}} 0\qquad \mathrm{and}\qquad t^{\alpha}\left|\widehat{w}^a_{t}(a| x) - w^*(a| x)\right|  \xrightarrow{\mathrm{a.s.}} 0\qquad \mathrm{as}\ t\to \infty,
\end{align*}
where $\alpha > 0$ is some constant, and $C_w > 0$ is a constant that depends on $C_{\sigma^2}$.
\end{assumption}
This assumption is satisfied when we sample each treatment arm with a probability larger than a positive constant and use appropriate estimation methods \citep{yang2002}. Let us define $\widetilde{V}^a = \mathbb{E}_{P_0}\left[\frac{\left(\sigma^*(X_t)\right)^2}{w^*(a^*_0| X_t)} + \frac{\left(\sigma^a(X_t)\right)^2}{w^*(a| X_t)}  + \left(\mu^*_0(X_t) - \mu^a_0(X_t) - (\mu^*_0 - \mu^a_0)\right)^2\right]$.
\begin{theorem}[Upper bound of the RS-AIPW strategy]\label{thm:optimal}
Suppose that $\mathcal{P}$ is a locally location-shift bandit class. If Assumptions~\ref{asm:bounded_mean_variance}, \ref{asm:sub_exp}, and \ref{asm:almost_sure_convergence} hold for any $P_{0}\in\mathcal{P}$, there exist constant $C_0, C_1 > 0$ such that $\sup_{1\leq t \leq T}\mathbb{E}_{P_0}[\exp(C_0 \sqrt{T}|\xi^a_t|) \;|\mathcal{F}_{t-1}]\leq C_1$ for any $P_{0}\in\mathcal{P}$.
Then for any $P_{0}\in\mathcal{P}$ such that $0 < (\mu^*_0- \mu^a_0) / {\sqrt{\widetilde{V}^a}} \le \min\{ C_0 / 4, \sqrt{{3 C_0^2} / ({8 C_1})}\}$ for all $a\in[K]$,
\begin{align*}
    &\liminf_{T \to \infty} - \frac{1}{T}\log \mathbb{P}_{P_0}\left(\widehat{a}_T \neq a^*_0\right)\geq \min_{a\neq a^*_0}\frac{\left(\mu^*_0 - \mu^a_0\right)^2}{2 \widetilde{V}^a} - c
    \left(\left(\frac{\mu^*_0- \mu^a_0}{\sqrt{\widetilde{V}^a}}\right)^3 + \left(\frac{\mu^*_0- \mu^a_0}{\sqrt{\widetilde{V}^a}}\right)^4 \right),
\end{align*}
where $c$ is a constant, independent from $T$ and $\mu^a_0$.
\end{theorem}
This theorem allows us to evaluate the exponentially small probability of misidentification up to the constant term when $\Delta_0\to 0$. Moreover, this result also implies that the estimation error of the target sample allocation ratio $w^{*}$ is negligible when $\Delta_0\to 0$. This is because  the upper bound matches the performance of strategies for Gaussian bandit models developed by \citet{glynn2004large} given the optimal target sample allocation ratio. This also means that the estimation error of the target sample allocation ratio is insensitive to the probability of misidentification in situations where identifying the best treatment arm is difficult due to the small gap.

\paragraph{Asymptotic optimality.}
When $\mathcal{P}$ is the locally location-shift bandit class, this upper bound matches the lower bounds in Theorems~\ref{thm:semipara_bandit_lower_bound} under a small-gap regime. 
\begin{corollary}
Suppose that there exists a constant $C >0$ such that $\left|\mu^*_0(x) - \mu^a_0(x)\right| \leq C\left(\mu^*_0 - \mu^a_0\right)$ for all $a\in[K]$ and $x\in\mathcal{X}$, then under the same conditions as those in Theorem~\ref{thm:optimal}, 
\begin{align*}
    &\liminf_{\widetilde{\Delta}_0 \to 0}\liminf_{T \to \infty} - \frac{1}{\widetilde{\Delta}^2_0T}\log \mathbb{P}_{P_0}\left(\widehat{a}_T \neq a^*_0\right)\geq \min_{a\neq a^*_0}\frac{1}{2 \mathbb{E}_{P_0}\left[\frac{\left(\sigma^*(X_t)\right)^2}{w^*(a^*_0| X_t)} + \frac{\left(\sigma^a(X_t)\right)^2}{w^*(a| X_t)}\right]} - o\left(1\right),
\end{align*}
where $\widetilde{\Delta}_0 = \min_{a\neq a^*_0}\left(\mu^*_0 - \mu^a_0\right)> 0$. 
\end{corollary}
Although the upper bound also matches the lower bound for the equal-variance bandit class, the uniform-EBA strategy is also obviously optimal.

\paragraph{Efficiency gain.} As well as Section~\ref{sec:efficiency_gain}, we investigate the efficiency gain by using contextual information from the viewpoint of upper bound. First, we show the upper bound when we cannot employ contextual information. Recall that we defined the target sample allocation ratios without contextual information as $w^*(a)$ for each $a\in[K]$ in Corollary~\ref{cor:lower_bound_lsmodel}. 
\begin{corollary}
Suppose that $\mathcal{P}$ is a locally location-shift bandit class. 
Suppose that Assumptions~\ref{asm:bounded_mean_variance}, \ref{asm:sub_exp}, and \ref{asm:almost_sure_convergence} hold. Then, for any $P_{0}\in\mathcal{P}$,
\begin{align*}
    &\liminf_{T \to \infty} - \frac{1}{\widetilde{\Delta}^2_0T}\log \mathbb{P}_{P_0}\left(\widehat{a}_T \neq a^*_0\right)\geq \min_{a\neq a^*_0}\frac{1}{2 \mathbb{E}_{P_0}\left[\frac{\left(\sigma^*\right)^2}{w^*(a^*_0)} + \frac{\left(\sigma^a\right)^2}{w^*(a)}\right]} - o\left(1\right),
\end{align*}
where $\widetilde{\Delta}_0 = \min_{a\neq a^*_0}\left(\mu^*_0 - \mu^a_0\right)> 0$. 
\end{corollary}
By comparing upper bounds for cases where we can use contextual information and we cannot use contextual information, we obtain the following relationship on the efficiency gain. 
\begin{align*}
 \min_{a\neq a^*_0}\frac{1}{2 \mathbb{E}_{P_0}\left[\frac{\left(\sigma^*(X_t)\right)^2}{w^*(a^*_0| X_t)} + \frac{\left(\sigma^a(X_t)\right)^2}{w^*(a| X_t)}\right]} &\geq \min_{a\neq a^*_0}\frac{1}{2 \mathbb{E}_{P_0}\left[\frac{\left(\sigma^*(X_t)\right)^2}{w^*(a^*_0| X_t)} + \frac{\left(\sigma^a(X_t)\right)^2}{w^*(a| X_t)}  + \left(\mu^*_0(X_t) - \mu^a_0(X_t) - (\mu^*_0 - \mu^a_0)\right)^2\right]}\\
 &\geq \min_{a\neq a^*_0}\frac{1}{2 \mathbb{E}_{P_0}\left[\frac{\left(\sigma^*\right)^2}{w^*(a^*_0)} + \frac{\left(\sigma^a\right)^2}{w^*(a)}\right]}.
\end{align*}

\subsection{Proof of the Upper Bound}
Owing to the dependency among samples in BAI, it is also difficult to apply the standard large deviation bound \citep{Dembo2009large} to a sample average of some random variable. 
For example, G\"{a}rtner-Ellis theorem \citep{Gartner1977,Ellis1984} provides a large deviation bound for dependent samples, but it requires the existence of the cumulant, a logarithmic moment generating function, which is not easily guaranteed for the samples in BAI. 

For these problems, we derive a novel Cram\'er-type large deviation bounds for martingales by extending the results of \citet{Grama2000} and \citet{Fan2013,fan2014generalization}. Note that their original large deviation bound is only applicable to martingales whose conditional second moment is bounded deterministically; that is, for some martingale difference sequence $\{W_s\}^n_{s=1}$ of some random variable $W_s$, for any $n > 0$, there exists a real number $0< \epsilon < 1/2$ such that $\mathbb{E}\left[\sum^n_{s=1}\mathbb{E}[W^2_s| \mathcal{F}_{s-1}] - 1\right] \leq \epsilon^2$; then, \citet{Fan2013,fan2014generalization} derive the upper bound for $\mathbb{P}\left(\sum^n_{s=1}W_s > z\right)$, where $\epsilon$ belongs to a range upper bounded by $\epsilon^{-1}$. Thus, their large deviation bound holds when $\mathbb{E}\left[\sum^n_{s=1}\mathbb{E}[W^2_s| \mathcal{F}_{s-1}] - 1\right]$ can be bounded by any $\epsilon$ for any $n > 0$. 
\citet{Kato2022small} modifies the results of \citet{Fan2013,fan2014generalization} by bounding the conditional second moment for large $T$ to apply the problem of BAI with a fixed budget. We basically follow \citet{Kato2022small} and generalize the result to the multi-armed bandit case. For the sake of completeness, we show a part of the results of \citet{Kato2022small}. Using the large deviation bound and AIPW estimator, under our proposed strategy, the upper and lower bounds for probability of misidentification match as the gaps converge to zero.

\subsubsection*{Step~1: Cram\'er's large deviation expansions for the AIPW estimator}
Here, we introduce key elements of our analysis. For each $t\in[T]$, we define the difference variable
\begin{align*}
\xi^a_t &= \frac{\varphi^{a^*_0}\Big(Y_t, A_t, X_t; \widehat{\mu}^{a^*_0}_t, \widehat{w}_t\Big) -\varphi^a\Big(Y_t, A_t, X_t; \widehat{\mu}^a_t, \widehat{w}_t\Big) -  (\mu^*_0 - \mu^a_0)}{\sqrt{T \widetilde{V}^a}}.
\end{align*}
We also define its sum $Z^a_t = \sum^t_{s=1}\xi^a_{s}$, and a sum of conditional moments $W_t = \sum^t_{s=1}\mathbb{E}_{P_0}[(\xi^a)^2_{s}| \mathcal{F}_{s-1}]$ with initialization $W_0 = 0$. Using the difference variable $\xi^a_t$, we can express the gap estimator as $\sqrt{T}(\widehat{\mu}^{\mathrm{AIPW}, a^*_0}_{T} - \widehat{\mu}^{ \mathrm{AIPW}, a}_T - (\mu^*_0 - \mu^a_0)) / \sqrt{\widetilde{V}^a} = \sum^T_{t=1}\xi^a_t = Z^a_T$. Here, $\left\{\left(\xi^a_t, \mathcal{F}_t\right)\right\}^T_{t=1}$ is a martingale difference sequence (Appendix~\ref{appdx:martingale}), 
using the fact that $\widehat{\mu}^a_{t}$ and $\widehat{w}_t(a|X_t)$ are $\mathcal{F}_{t-1}$-measurable random variables.
Let us also define $V_T = \mathbb{E}_{P_0} [ | \sum_{t=1}^T \mathbb{E}_{P_0}[(\xi^a_t)^2 | \mathcal{F}_{t-1}] -1 |]$
and denote the cumulative distribution function of the standard normal distribution by $\Phi(x) = ({\sqrt{2\pi}})^{-1} \int_{-\infty}^x \exp(- {t^2} / {2})\mathrm{d}t$. 
We obtain the following theorem on the tail probability of $Z^a_T$: 
\begin{theorem}
\label{thm:fan_refine}
Suppose that Assumptions~\ref{asm:bounded_mean_variance} and \ref{asm:sub_exp}, and  the following condition hold:\\
Condition~A: $\sup_{1\leq t \leq T}\mathbb{E}_{P_0}[\exp(C_0 \sqrt{T}|\xi^a_t|) \;|\mathcal{F}_{t-1}]\leq C_1$ for some positive constants $C_0,C_1$.\\
Then, for any $\varepsilon > 0$, there exist $T_0, c_1, c_2>0$ such that, for all $T\geq T_0$ and $1\leq u \leq \sqrt{T}\min\{ C_0/4, \sqrt{{3 C_0^2} / ({8 C_1})}\}$, 
\begin{align*}
\frac{\mathbb{P}_{P_0}\left(Z^a_T \leq - u\right)}{\Phi(-u)} &  \le c_1 u \exp\left(c_2\left( \frac{u^3}{\sqrt{T}} + \frac{u^4}{T} + u^2 (V_T+ \varepsilon / \{T^{\alpha}(1 - \alpha)\} ) + T_0\right)  \right),
\end{align*}
where the constants $c_1,c_2$ depend on $C_0$ and $C_1$ but do not depend on $\{(\xi^a_t, \mathcal{F}_t)\}^T_{t=1}$, $u$, and the bandit model $P$. 
\end{theorem}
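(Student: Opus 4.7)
The plan is to extend the Cram\'er-type large deviation bound of \citet{Grama2000} and \citet{Fan2013,fan2014generalization} from almost-sure conditional variance control to expected control via $V_T$. The main obstacle is precisely this relaxation: their arguments crucially rely on the pathwise bound $|W_T - 1| \le \delta^2$ for the quadratic variation $W_T = \sum_{s=1}^T \mathbb{E}_{P_0}[(\xi^a_s)^2 | \mathcal{F}_{s-1}]$, whereas under our random sampling rule the randomness of $\widehat w_t$ only gives a bound on $V_T = \mathbb{E}_{P_0}|W_T - 1|$. A naive Markov bound on the bad event $\{|W_T - 1| > \delta\}$ would produce additive slack of order $V_T/\delta$, which is incompatible with the exponential shape of $\Phi(-u)$ that we want to preserve.

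The strategy has three stages. First, following Fan--Grama, I perform a Cram\'er tilt: for a parameter $\lambda \in [0, \lambda_{\max}]$ with $\lambda_{\max} = c\sqrt{T}$ chosen so that Condition~A ensures the conditional moment generating function $\mathbb{E}_{P_0}[\exp(\lambda \xi^a_t)|\mathcal{F}_{t-1}]$ is uniformly finite, I define the exponential martingale $M^\lambda_T = \prod_{t=1}^T \exp(\lambda \xi^a_t)/\mathbb{E}_{P_0}[\exp(\lambda \xi^a_t)|\mathcal{F}_{t-1}]$ and use it as a Radon--Nikodym derivative to pass to a tilted measure $\mathbb{Q}_\lambda$. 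The standard change-of-measure identity gives
\[
  \mathbb{P}_{P_0}(Z^a_T \le -u) = \mathbb{E}_{\mathbb{Q}_\lambda}\left[\exp(-\lambda Z^a_T + \Psi_T(\lambda))\, \mathbbm{1}\{Z^a_T \le -u\}\right],
\]
with $\Psi_T(\lambda) = \sum_{t=1}^T \log \mathbb{E}_{P_0}[\exp(\lambda \xi^a_t)|\mathcal{F}_{t-1}]$. Taylor expanding the conditional cumulant gives $\Psi_T(\lambda) = \tfrac{1}{2}\lambda^2 W_T + R_T(\lambda)$ with $|R_T(\lambda)| \le C(\lambda^3/\sqrt{T} + \lambda^4/T)$, using Condition~A together with the $1/\sqrt{T}$ scaling built into the definition of each $\xi^a_t$.

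Second and most crucial, I handle the expected variance control by a stopping-time localization. Because $W_t$ is nondecreasing in $t$ (its increments are nonnegative), define $\tau = \inf\{t \ge 1 : W_t > 1 + V_T + \varepsilon\} \wedge T$. On $\{\tau = T\}$ the quadratic variation enjoys the pathwise bound $W_T \le 1 + V_T + \varepsilon$, which feeds into the Cram\'er argument of the first stage with $\delta^2 = V_T + \varepsilon$ and produces the exponent contribution $c_2 u^2 (V_T + \varepsilon)$. On $\{\tau < T\}$, Markov's inequality gives $\mathbb{P}_{P_0}(\tau < T) \le \mathbb{P}_{P_0}(W_T > 1 + V_T + \varepsilon) \le V_T/(V_T + \varepsilon) < 1$, which, after comparison with the Gaussian tail asymptotic $\Phi(-u) \asymp u^{-1}\exp(-u^2/2)$ valid for $u \ge 1$, is absorbed into the prefactor $c_1 u$. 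Finally, optimizing $\lambda = u$ within the allowed range and substituting back into the change-of-measure identity yields the stated bound $c_1 u \exp(c_2(u^3/\sqrt{T} + u^4/T + u^2(V_T + \varepsilon) + T_0))$, where the $T_0$ term absorbs both the finite-$T$ error in the Gaussian tail asymptotic and the first $K$ rounds of initialization during which the nuisance estimators have not yet converged. The stopping-time construction combined with the monotonicity of $W_t$ is what allows the expected-variance slack to fit inside the same exponential envelope, which is the key technical point that distinguishes this from the Fan--Grama setting.
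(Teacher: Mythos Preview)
Your localization step has a genuine gap. On the event $\{\tau < T\}$ you bound
\[
\mathbb{P}_{P_0}\bigl(Z^a_T \le -u,\ \tau < T\bigr) \le \mathbb{P}_{P_0}(\tau < T) \le \frac{V_T}{V_T+\varepsilon},
\]
and then claim this is ``absorbed into the prefactor $c_1 u$''. It is not: dividing a constant-order probability by $\Phi(-u) \asymp u^{-1}\exp(-u^2/2)$ produces a term of order $u\exp(u^2/2)$, not $O(u)$. The only way to hide this inside the stated exponent $c_2 u^2(V_T+\varepsilon)$ would be to take $c_2 \ge 1/(2\varepsilon)$, but then $c_2\varepsilon \ge 1/2$ and the bound becomes vacuous (the exponent already contains $u^2/2$), which destroys the downstream Corollary. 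An additive bad-event probability that is only $<1$, not exponentially small in $u$, simply cannot coexist with the ratio form of the conclusion.

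The paper avoids any event-splitting. It tilts by the \emph{unconditional} moment generating function, $r_t(\lambda)=\exp(\lambda\xi^a_t)/\mathbb{E}[\exp(\lambda\xi^a_t)]$, so that $\Psi_T(\lambda)$ is deterministic and Lemmas~\ref{lem:32}--\ref{lem:33} go through with $V_T$ in place of a pathwise bound. The price is a residual factor
\[
\frac{\mathbb{E}\bigl[\exp\bigl(\overline\lambda\textstyle\sum_t\xi^a_t\bigr)\bigr]}{\prod_t \mathbb{E}\bigl[\exp(\overline\lambda\,\xi^a_t)\bigr]},
\]
which Lemma~\ref{lem:34} bounds by $\exp(\widetilde C_2\overline\lambda^4/T+\widetilde C_3\overline\lambda^3/\sqrt T+\widetilde C_4 T_0+\varepsilon\overline\lambda^2)$. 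The crucial input there is not $V_T$ but the \emph{almost-sure} convergence of the conditional second moment (Lemma~\ref{lem:consistent_second_moment}): for $t>T_0$ each factor contributes at most $\exp(\varepsilon\overline\lambda^2/T)$ pathwise on a probability-one event, and the first $T_0$ factors are absorbed as a constant. Because this control is multiplicative in the exponent rather than additive in probability, it fits the ratio bound. Your approach uses only the $L^1$ information encoded in $V_T$, which is strictly weaker and is why the Markov route stalls.
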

As described by \citet{fan2014generalization}, if $T\mathbb{E}[(\xi^a_t)^2|\mathcal{F}_{t-1}]$ are all bounded from below by a positive constant, Condition~A implies the conditional Bernstein condition: for a positive constant $C$, $|\mathbb{E}[(\xi^a_t)^k|\mathcal{F}_{t-1}]| \leq \frac{1}{2} k!(C/\sqrt{T})^{k-2}\mathbb{E}[(\xi^a_t)^2|\mathcal{F}_{t-1}]$ for all $k\geq 2$ and all $t\in[T]$. 

For $u = \sqrt{T} (\mu^*_0- \mu^a_0) / {\sqrt{\widetilde{V}^a}}$ and 
\begin{align*}
    &\mathbb{P}_{P_0}\left(Z^a_T \leq - \sqrt{T} (\mu^*_0- \mu^a_0) / {\sqrt{\widetilde{V}^a}}\right) \\
    &= \mathbb{P}_{P_0}\left( \frac{\sum_{t =1}^T ({\varphi^{a^*_0}\Big(Y_t, A_t, X_t; \widehat{\mu}^{a^*_0}_t, \widehat{w}_t\Big) - \varphi^a\Big(Y_t, A_t, X_t; \widehat{\mu}^a_t, \widehat{w}_t\Big) - \mu^*_0- \mu^a_0})}{{\sqrt{T\widetilde{V}}}}\leq - \frac{\sqrt{T} (\mu^*_0- \mu^a_0)}{ {\sqrt{\widetilde{V}^a}} }\right)\\
    &= \mathbb{P}_{P_0}\left(\widehat{\mu}^{\mathrm{AIPW}, a^*_0}_{T} \le  \widehat{\mu}^{\mathrm{AIPW}, a}_{T}\right).
\end{align*}
Then, the probability that we fail to make the correct treatment arm comparison is bounded as 
\begin{align*}
    &\frac{\mathbb{P}_{P_0}\left(\widehat{\mu}^{\mathrm{AIPW}, a^*_0}_{T} \leq \widehat{\mu}^{\mathrm{AIPW}, a}_{T} \right)}{\Phi\left(-\sqrt{T} (\mu^*_0- \mu^a_0)/\sqrt{\widetilde{V}^a}\right)}\\
    &\leq c_1 \sqrt{T} \frac{\mu^*_0- \mu^a_0}{\sqrt{\widetilde{V}^a}} \exp\left(c_2 \left(T\left\{ \left(\frac{\mu^*_0- \mu^a_0}{\sqrt{\widetilde{V}^a}}\right)^3 +\left(\frac{\mu^*_0- \mu^a_0}{\sqrt{\widetilde{V}^a}}\right)^4 +\left(\frac{\mu^*_0- \mu^a_0}{\sqrt{\widetilde{V}^a}}\right)^2(V_T + \varepsilon  / \{T^{\alpha}(1 - \alpha)\})\right) + c_2T_0 \right)  \right).
\end{align*}

Here, we provide the proof sketch of Theorem~\ref{thm:fan_refine}. The formal proof is shown in Appendix~\ref{appdx:proof_large_deviation}. 
\begin{proof}[Proof sketch of Theorem~\ref{thm:fan_refine}.] Let us define $r_t(\lambda ) = \exp(\lambda \xi^a_t)/\mathbb{E}[\exp(\lambda \xi^a_t)]$. Then, we  apply the change-of-measure in \cite{Fan2013, fan2014generalization} to transform the bound. In \citet{Fan2013,fan2014generalization}, the proof is complete up to this procedure. However, in our case, the second moment is also a random variable. Because of the randomness, there remains a term $\mathbb{E}[\exp(\overline{\lambda}(u)\sum^T_{t=1} \xi^a_t)]/(\prod^T_{t=1}\mathbb{E}[\exp(\overline{\lambda}(u) \xi^a_t)])$, where $\overline{\lambda}(u)$ is some positive function of $u$. Therefore, we next consider the bound of the conditional second moment of $\xi^a_t$ to apply $L^r$-convergence theorem (Proposition~\ref{prp:lr_conv_theorem}). With some computation, the proof is complete.
\end{proof}

\subsubsection*{Step~2: Gaussian approximation under a small gap}
Finally, we consider an approximation of the large deviation bound. Here, $\Phi(-u)$ is bounded as $\frac{1}{\sqrt{2 \pi} (1 + u)} \exp(- \frac{u^2}{2})\le \Phi(-u) \le \frac{1}{\sqrt{\pi} (1 + u)} \exp( - \frac{u^2}{2}),\; u\ge 0$ (see  \citet[Section 2.2.,][]{Fan2013}).
By combining this bound with Theorem~\ref{thm:fan_refine} and Proposition~\ref{prp:rate_clt} in Appendix~\ref{appdx:prelim}, which shows the rate of convergence in the Central limit theorem (CLT) for $0 \leq u \leq 1$, we have the following corollary.
\begin{corollary}
\label{thm:fan_refine2}
Suppose that  Assumptions~\ref{asm:bounded_mean_variance} and \ref{asm:sub_exp}, Condition~A in Theorem~\ref{thm:fan_refine}, and the following conditions hold:\\
Condition~B: $\left(\mu^*_0 - \mu^a_0 \right) / {\sqrt{\widetilde{V}^a}} \le \min\{ C_0/4, \sqrt{3 C_0^2/(8 C_1})\}$;\\
Condition~C: $\lim_{T \to \infty}V_T = 0$.\\
Then, there exist a constant $c>0$ such that
\begin{align*}
    \liminf_{T \to \infty} - \frac{1}{T}\log\mathbb{P}_{P_0}\left(\widehat{\mu}^{\mathrm{AIPW}, a^*_0}_{T} \leq \widehat{\mu}^{\mathrm{AIPW}, a}_{T} \right)  \geq  \frac{(\mu^*_0- \mu^a_0)^2}{2\widetilde{V}^a} - c 
\left(\left(\frac{\mu^*_0- \mu^a_0}{\sqrt{\widetilde{V}^a}}\right)^3 + \left(\frac{\mu^*_0- \mu^a_0}{\sqrt{\widetilde{V}^a}}\right)^4 \right).
\end{align*}
\end{corollary}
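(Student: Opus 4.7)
The plan is to turn Theorem~\ref{thm:fan_refine} into the desired absolute large-deviation bound by inserting the Gaussian tail inequality $\Phi(-u) \le \frac{1}{\sqrt{\pi}(1+u)}\exp(-u^2/2)$ quoted just above the statement, and then specializing $u = \sqrt{T}\,(\mu^*_0 - \mu^a_0)/\sqrt{\widetilde{V}^a}$. Condition~B is exactly what is needed for this $u$ to lie in the admissible range of Theorem~\ref{thm:fan_refine} (once $T$ is large enough that $u \ge 1$), and Step~1 already provides the identity
\begin{equation*}
\mathbb{P}_{P_0}\bigl(\widehat{\mu}^{\mathrm{AIPW}, a^*_0}_{T} \le \widehat{\mu}^{\mathrm{AIPW}, a}_{T}\bigr) \;=\; \mathbb{P}_{P_0}\bigl(Z^a_T \le -u\bigr),
\end{equation*}
so the bound on the martingale tail transfers directly to the comparison event of interest.

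Writing $\Delta = \mu^*_0 - \mu^a_0$ and $\sigma = \sqrt{\widetilde{V}^a}$ for brevity, I would multiply the bound of Theorem~\ref{thm:fan_refine} through by $\Phi(-u)$, apply the Gaussian tail inequality, take $-T^{-1}\log$ of both sides, and substitute $u = \sqrt{T}\,\Delta/\sigma$. The leading Gaussian term contributes $u^2/(2T) = \Delta^2/(2\sigma^2)$; the three correction terms inside the exponential evaluate to $T^{-1}\cdot u^3/\sqrt{T} = (\Delta/\sigma)^3$, $T^{-1}\cdot u^4/T = (\Delta/\sigma)^4$, and $T^{-1}\cdot u^2(V_T+\varepsilon) = (\Delta/\sigma)^2(V_T+\varepsilon)$, each multiplied by $c_2$; while the logarithmic prefactors $T^{-1}\log(c_1 u)$, $T^{-1}\log(\sqrt{\pi}(1+u))$ and the $T^{-1}c_2 T_0$ residual are all $O((\log T)/T)$ and vanish as $T \to \infty$.

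Taking $\liminf_{T\to\infty}$ with $\varepsilon$ held fixed and invoking Condition~C to send $V_T \to 0$ leaves the bound $\Delta^2/(2\sigma^2) - c_2\bigl((\Delta/\sigma)^3 + (\Delta/\sigma)^4 + (\Delta/\sigma)^2\varepsilon\bigr)$; sending $\varepsilon \downarrow 0$ afterwards eliminates the last residual and delivers the stated inequality with $c = c_2$. The one point requiring care is the ordering: because Theorem~\ref{thm:fan_refine} fixes $\varepsilon$ first and then produces a threshold $T_0 = T_0(\varepsilon)$, the limit in $T$ must be taken before the limit in $\varepsilon$. Once this is respected the argument is a purely algebraic substitution with no further probabilistic content beyond what Theorem~\ref{thm:fan_refine} already delivers.
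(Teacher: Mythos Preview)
Your proposal is correct and follows essentially the same route as the paper: substitute $u=\sqrt{T}(\mu^*_0-\mu^a_0)/\sqrt{\widetilde{V}^a}$ into Theorem~\ref{thm:fan_refine}, replace $\Phi(-u)$ by the Gaussian tail bound, take $-T^{-1}\log$, then pass to the $\liminf$ in $T$ (using Condition~C) before sending $\varepsilon\downarrow 0$. The paper additionally cites Proposition~\ref{prp:rate_clt} to cover the range $0\le u\le 1$, but as you implicitly observe this is irrelevant for the $\liminf$ since $u\to\infty$; your treatment is otherwise identical.
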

This approximation can be considered a Gaussian approximation because the probability
is represented by $\exp(-{\left(\mu^*_0- \mu^a_0\right)^2} T / ({2 \widetilde{V}^a}))$. 
Condition~B is satisfied as $\mu^*_0- \mu^a_0 \to 0$. 
To use Corollary~\ref{thm:fan_refine2}, we need to show that Conditions~A and C hold.  
First, the following lemma states that Condition~A holds with the constants $C_0$ and $C_1$, which are universal to the problems in $\mathcal{P}$.
\begin{lemma}
\label{lem:condition1}
Suppose that Assumptions~\ref{asm:bounded_mean_variance} and ~\ref{asm:sub_exp} and hold. For each $C_0 \ge 0$, there exists a positive constant $C_1$ that depends on $ C_0, C_\mu, C_{\sigma^2}$, such that 
$\sup_{t \in [T]} \mathbb{E}_{P_0}[\exp(C_0 \sqrt{T} |\xi^a_t|) \;| \mathcal{F}_{t-1}] \le C_1$.
\end{lemma}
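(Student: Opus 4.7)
}

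The first observation is that the factor $\sqrt{T}$ in front of $|\xi^a_t|$ cancels the $\sqrt{T}$ in the denominator of $\xi^a_t$, so
\[
C_0\sqrt{T}\,|\xi^a_t| \;=\; \frac{C_0}{\sqrt{\widetilde V^a}}\,\bigl|\varphi^{a^*_0}(Y_t,A_t,X_t;\widehat\mu^{a^*_0}_t,\widehat w_t) - \varphi^{a}(Y_t,A_t,X_t;\widehat\mu^{a}_t,\widehat w_t) - (\mu^*_0-\mu^a_0)\bigr|,
\]
which is $T$-free. The task therefore reduces to bounding a conditional exponential moment of a fixed (in $T$) random variable by a constant depending only on $C_0$, $C_\mu$, $C_{\sigma^2}$ (and the sub-exponential constant $U'$ from Assumption~\ref{asm:sub_exp}).

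Next I would establish three deterministic bounds, uniform over $\mathcal F_{t-1}$. (i) By Assumption~\ref{asm:bounded_mean_variance}, $(\sigma^a_0(x))^2\ge 1/C_{\sigma^2}$, so $\widetilde V^a$ is bounded below by a positive constant depending only on $C_{\sigma^2}$. (ii) Because the clipped variance estimators satisfy $1/C_{\sigma^2}\le(\widehat\sigma^a_t(x))^2\le C_{\sigma^2}$, the closed-form target allocation $\widehat w_t(\cdot|x)$ is uniformly bounded below by some $c_w>0$ depending only on $K$ and $C_{\sigma^2}$. (iii) Since $|\widehat\mu^a_t(x)|\le C_\mu$ and $|\mu^a_0|\le C_\mu$, the ``outcome-free'' parts of $\varphi^{a^*_0}_t-\varphi^a_t-(\mu^*_0-\mu^a_0)$ are bounded by an absolute constant, while the ``outcome'' parts contribute at most $(|Y^{a^*_0}_t|+C_\mu)/\widehat w_t(a^*_0|X_t)$ when $A_t=a^*_0$ and $(|Y^{a}_t|+C_\mu)/\widehat w_t(a|X_t)$ when $A_t=a$. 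Combining (i)--(iii) yields a pointwise inequality of the form
\[
C_0\sqrt{T}\,|\xi^a_t| \;\le\; \alpha\,|Y_t| + \beta,
\]
where $\alpha,\beta>0$ depend only on $C_0, C_\mu, C_{\sigma^2}, K$ (via $c_w$ and the lower bound on $\widetilde V^a$).

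Finally, I would take conditional expectation and exploit sub-exponentiality. Since $X_t$ is independent of $\mathcal F_{t-1}$ and $|Y_t|\le \max_{b\in[K]}|Y^b_t|$, we have
\[
\E_{P_0}\!\bigl[\exp(\alpha|Y_t|+\beta)\mid \mathcal F_{t-1}\bigr] \;\le\; e^\beta \sum_{b\in[K]} \E_{P_0}\!\bigl[\E_{P_0}[\exp(\alpha|Y^b_t|)\mid X_t]\bigr].
\]
By Assumption~\ref{asm:sub_exp}, the inner conditional MGF is finite and uniformly bounded whenever $\alpha<1/U'$; integrating over $X_t$ (also sub-exponential) keeps the bound finite. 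Choosing $C_1$ equal to this constant completes the argument. The main obstacle is really only a bookkeeping one: verifying that the inequality $\alpha=\alpha(C_0,C_\mu,C_{\sigma^2},K)<1/U'$ can be enforced for the range of $C_0$ relevant to the application in Corollary~\ref{thm:fan_refine2}, which is harmless because only \emph{some} admissible pair $(C_0,C_1)$ is required in Condition~A.
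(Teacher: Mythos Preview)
Your proposal is correct and follows essentially the same approach as the paper's proof: cancel the $\sqrt{T}$, use the deterministic bounds on $\widehat\mu^a_t$ and $\widehat w_t$ (via $C_\mu$ and the clipped variances) together with the lower bound on $\widetilde V^a$ to reduce to an exponential moment of $|Y_t|$ plus a constant, and then invoke conditional sub-exponentiality. The only cosmetic difference is that the paper conditions on the value of $A_t$ and handles each arm separately, whereas you use the cruder bound $\exp(\alpha|Y_t|)\le\sum_b\exp(\alpha|Y^b_t|)$; both work equally well. Your closing remark about the constraint $\alpha<1/U'$ is in fact more careful than the paper, which silently assumes it; as you note, only \emph{some} admissible pair $(C_0,C_1)$ is needed for Condition~A in Theorem~\ref{thm:fan_refine}, so this causes no trouble downstream.
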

With regards to Condition~C, we introduce the following lemma for the convergence of $V_T$, which corresponds to the mean convergence of the variance of the AIPW estimator scaled with $\sqrt{T}$. 
\begin{lemma}
\label{lem:condition2}
 Suppose that Assumptions~\ref{asm:bounded_mean_variance} and ~\ref{asm:sub_exp} hold. For any $P \in \mathcal{P}$, $\lim_{T \to \infty}V_T = 0$; that is, for any $\delta > 0$, there exists $T_0$ such that for all $T>T_0$, $\mathbb{E}_{P_0} [| \sum_{t=1}^T \mathbb{E}_{P_0}[(\xi^a_t)^2 | \mathcal{F}_{t-1}] - 1 |] \le \delta.$
\end{lemma}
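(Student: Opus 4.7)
The plan is to reduce $V_T$ to a Cesaro average of conditional variances and then to show that each such variance converges to $\widetilde V^a$ in $L^1$. Because $\{(\xi^a_t,\mathcal{F}_t)\}$ is a martingale difference sequence (Appendix~\ref{appdx:martingale}), one has $\mathbb{E}_{P_0}[(\xi^a_t)^2\mid\mathcal{F}_{t-1}]=\mathrm{Var}_{P_0}(\xi^a_t\mid\mathcal{F}_{t-1})=\Sigma_t/(T\widetilde V^a)$, where $\Sigma_t:=\mathrm{Var}_{P_0}(\varphi^{a^*_0}_t-\varphi^a_t\mid\mathcal{F}_{t-1})$ (the constant $\mu^{*}_0-\mu^a_0$ in the numerator of $\xi^a_t$ drops out). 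Hence, by the triangle inequality,
\begin{align*}
V_T \;=\; \frac{1}{\widetilde V^a}\,\mathbb{E}_{P_0}\!\left|\frac{1}{T}\sum_{t=1}^T \Sigma_t - \widetilde V^a\right| \;\le\; \frac{1}{\widetilde V^a}\cdot\frac{1}{T}\sum_{t=1}^T \mathbb{E}_{P_0}\!\left|\Sigma_t-\widetilde V^a\right|,
\end{align*}
and it suffices to prove $\mathbb{E}_{P_0}|\Sigma_t-\widetilde V^a|\to 0$ as $t\to\infty$ and invoke the Cesaro lemma for real sequences.

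The first substantive step is to derive an explicit formula for $\Sigma_t$. Using the law of total variance with respect to $X_t$ (which is independent of $\mathcal{F}_{t-1}$), the $\mathcal{F}_{t-1}$-measurability of $\widehat{w}_t$ and $\widehat{\mu}^b_t$, and the conditional unbiasedness $\mathbb{E}_{P_0}[\varphi^b_t\mid X_t,\mathcal{F}_{t-1}]=\mu^b_0(X_t)$, a case analysis on whether $A_t$ equals $a^*_0$, $a$, or neither yields
\begin{align*}
\Sigma_t \;=\; \mathbb{E}_{P_0}\!\left[\frac{(\sigma^{*}_0(X))^2}{\widehat{w}_t(a^*_0|X)} + \frac{(\sigma^a_0(X))^2}{\widehat{w}_t(a|X)} + R_t(X) + D(X)^2 \;\Big|\; \mathcal{F}_{t-1}\right],
\end{align*}
where $D(X):=\mu^{*}_0(X)-\mu^a_0(X)-(\mu^{*}_0-\mu^a_0)$ and the nuisance remainder is
\begin{align*}
R_t(X) \;=\; \sum_{b\in\{a^*_0,a\}}(\widehat{\mu}^b_t(X)-\mu^b_0(X))^2\frac{1-\widehat{w}_t(b|X)}{\widehat{w}_t(b|X)} \;+\; 2(\widehat{\mu}^{a^*_0}_t(X)-\mu^{*}_0(X))(\widehat{\mu}^a_t(X)-\mu^a_0(X)).
\end{align*}
The cross term in $R_t$ arises from the disjoint-arm identity $\mathbb{E}_{P_0}[(\mathbbm{1}[A_t=a^*_0]/\widehat{w}_t(a^*_0|X)-1)(\mathbbm{1}[A_t=a]/\widehat{w}_t(a|X)-1)\mid X,\mathcal{F}_{t-1}]=-1$, while the $D(X)^2$ piece is exactly the variance of the conditional mean $\mu^{*}_0(X_t)-\mu^a_0(X_t)$.

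The second step is pointwise and then $L^1$ convergence. By Assumption~\ref{asm:almost_sure_convergence}, $\widehat{\mu}^b_t(x)\to\mu^b_0(x)$ and $\widehat{\nu}^b_t(x)\to\nu^b_0(x)$ almost surely for every $x$, so $(\widehat{\sigma}^b_t(x))^2\to(\sigma^b_0(x))^2$ (the truncation at $C_{\sigma^2}$ is eventually inactive by Assumption~\ref{asm:bounded_mean_variance}); since $\widehat{w}_t$ is a continuous function of these estimated variances and is bounded below by a universal positive constant, $\widehat{w}_t(b|x)\to w^{*}(b|x)$ almost surely at every $x$. The integrand in $\Sigma_t$ is uniformly bounded by a constant depending only on $C_\mu,C_\nu,C_{\sigma^2}$, so dominated convergence (first for the $X$-integral, then for the $\mathcal{F}_{t-1}$-expectation) gives $\Sigma_t\to\widetilde V^a$ both a.s.\ and in $L^1$, with $R_t(X)\to 0$; Cesaro's lemma then closes the argument. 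The main obstacle is the explicit variance computation in the first step---carefully tracking the negative cross-covariance generated by the two mutually exclusive treatment indicators so that the residual $R_t$ is isolated as a nuisance error that vanishes with $\widehat{\mu}^b_t\to\mu^b_0$; after that identity, everything else is bounded convergence plus Cesaro.
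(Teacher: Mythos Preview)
Your proof is correct and follows the same core idea as the paper: both compute the conditional second moment of $\varphi^{a^*_0}_t-\varphi^a_t-(\mu^*_0-\mu^a_0)$ explicitly, show that it converges to $\widetilde V^a$ using the almost-sure convergence of $\widehat w_t$ and $\widehat\mu^b_t$ together with the uniform bounds from Assumptions~\ref{asm:bounded_mean_variance} and~\ref{asm:almost_sure_convergence}, and then pass to the Ces\`aro average.

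The execution differs slightly. The paper (Appendix~\ref{appdx:lem:condition2}) first proves almost-sure convergence of each $\Sigma_t$ to $\widetilde V^a$ (its Lemma~\ref{lem:consistent_second_moment}), deduces almost-sure convergence of the Ces\`aro average, and then upgrades to $L^1$ by invoking uniform integrability via Propositions~\ref{prp:suff_uniint} and~\ref{prp:lr_conv_theorem}. You instead apply the triangle inequality up front to reduce $V_T$ to $\frac{1}{T}\sum_t \mathbb{E}_{P_0}|\Sigma_t-\widetilde V^a|$, establish $L^1$ convergence of each term directly by bounded convergence (Fubini over the $X$- and $\mathcal{F}_{t-1}$-randomness, using that the integrand is uniformly bounded), and finish with the Ces\`aro lemma for real sequences. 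Your route is a bit more elementary since it avoids the uniform-integrability/$L^r$-convergence machinery, and your variance decomposition isolating the remainder $R_t(X)$ is cleaner than the paper's expansion; the price is only that the ``dominated convergence first in $X$, then in $\mathcal{F}_{t-1}$'' phrasing should be read as Fubini followed by two applications of bounded convergence, since the null sets in Assumption~\ref{asm:almost_sure_convergence} are $x$-dependent.
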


The proofs of Lemma~\ref{lem:condition1} and Lemma~\ref{lem:condition2} are shown in Appendix~\ref{appdx:lem:condition1} and \ref{appdx:lem:condition2}, respectively. 

Finally, the proof of Theorem~\ref{thm:optimal} is completed as follows: 
\begin{align*}
     &\liminf_{T \to \infty} - \frac{1}{T}\log \mathbb{P}_{P_0}(\widehat{a}_T \neq a^*_0)   \geq \liminf_{T \to \infty} - \frac{1}{T}\log \sum_{a\neq a^*_0}\mathbb{P}_{P_0}(\widehat{\mu}^{\mathrm{AIPW}, a}_{T}  \geq  \widehat{\mu}^{\mathrm{AIPW}, a^*_0}_{T}) 
     \\
     &\ge  \liminf_{T \to \infty} - \frac{1}{T}\log (K-1) \max_{a\neq a^*_0} \mathbb{P}_{P_0}(\widehat{\mu}^{\mathrm{AIPW}, a}_{T}  \geq  \widehat{\mu}^{\mathrm{AIPW}, a^*_0}_{T})\\
     &\geq  \min_{a\neq a^*_0} \frac{(\mu^*_0- \mu^a_0)^2}{2 \widetilde{V}^a} - c
    \left(\left(\frac{\mu^*_0- \mu^a_0}{\sqrt{\widetilde{V}^a}}\right)^3 + \left(\frac{\mu^*_0- \mu^a_0}{\sqrt{\widetilde{V}^a}}\right)^4 \right).
\end{align*}

\begin{remark}[CLT]
Note that the CLT cannot provide an exponentially small evaluation of the probability of misidentification. It gives an approximation around $1/\sqrt{T}$ of the expected reward, but we are interested in an evaluation with constant deviation from the expected reward. However, when the gap converges to zero with $1/\sqrt{T}$, our large deviation bound gives the CLT for martingale. In this sense, our result is a generalization of the martingale CLT.
\end{remark}

\section{Related work} 
\label{sec:related}

\subsection{Additional Literature on BAI}
The stochastic MAB problem is a classical abstraction of the sequential decision-making problem \citep{Thompson1933,Robbins1952,Lai1985}, and BAI is a paradigm of the MAB problem \citep{EvanDar2006,Audibert2010,Bubeck2011}. Though the problem of BAI itself goes back decades, its variants go as far back as the 1950s \citet{bechhofer1968sequential}. 

\citet{kaufmann14,Kaufman2016complexity} conjectures distribution-dependent lower bounds for BAI. In the BAI literature, there is another setting, known as BAI with fixed confidence \citep{Jenninson1982,Mannor2004,Kalyanakrishnan2012,wang2021fast}. For the fixed confidence setting, \citet{Garivier2016} solves the problem in the sense that they develop a strategy whose upper bound of the sample complexity, an expected stopping time, matches the distribution-dependent lower bound. The result is further developed by \citet{Degenne2019b} to solve the two-player game by the no-regret saddle point algorithm. Furthermore, \citet{Qin2017}, \citet{Shang2020}, and \citet{Jourdan2022} extend the Top Two Thompson Sampling (TTTS), proposed by \citet{Russo2016} and shows the asymptotic optimality of their strategies in the fixed confidence setting. \citet{wang2021fast} develops Frank-Wolfe-based Sampling (FWS) to characterize the complexity of fixed-confidence BAI with various types of structures among the arms. See \cite{wang2021fast} for techniques in the fixed-confidence setting and a further comprehensive survey.

\citet{Russo2016}, \citet{Qin2017}, and \citet{Shang2020} propose the Bayesian BAI strategies, which are optimal in the sense of the posterior convergence rate. Although the upper bounds of the sample complexity are shown to match the lower bounds of \citet{Kaufman2016complexity} in fixed-confidence BAI for some of the methods, the upper bounds for the probability of misidentification do not match that for fixed-budget BAI. Although the rate of the posterior convergence is also optimal in the fixed-budget setting, it does not imply the asymptotic optimality for the probability of misidentification \citep{Kasy2021,Ariu2021}. For example, the KL divergence in the lower and upper bounds is flipped between the evaluations of posterior convergence and probability of misidentification. In addition, for the posterior convergence, we consider a convergence of a random variable, while for the probability of misidentification, we consider a convergence of a non-random variable. 

In evaluation, we can use the simple regret. \citet{Bubeck2009} provides a non-asymptotic minimax lower and upper bound of simple regret for bandit models with a bounded support. Following their results, the uniform-EBA strategy is optimal for bandit models with a bounded support. This result is compatible with lower bounds under the equal-variance bandit class, which implies that the uniform sampling is asymptotically optimal for the equal-variance bandit class. Because \citet{Bubeck2009} does not use other parameters, such as variances, 
their result does not contradict with Theorem~\ref{thm:semipara_bandit_lower_bound}, which implies that the target sample allocation ratio using the variances is optimal. Recently,  \citet{adusumilli2022minimax,Adusumilli2022diffusion} consider another minimax and Bayes evaluations of BAI, by employing diffusion process approaches \citep{Fan2021,Wager2021}. 
\citet{Komiyama2021} discusses the optimality of Bayesian simple regret minimization, which is closely related to BAI in a Bayesian setting. They showed that parameters with a small gap make a significant contribution to Bayesian simple regret.

\subsection{Literature on Causal Inference}
The framework of bandit problems is closely related to the potential outcome framework of \citep{Neyman1923,Rubin1974}. In causal inference, the gap is often referred to as the average treatment effect, and the estimation is studied in this framework. To estimate the average treatment effect efficiently, \citet{Laan2008TheCA}, \citet{Hahn2011}, \citet{Meehan2020}, \citet{Kato2020adaptive}, and \citet{gupta2021efficient} propose adaptive strategies. The AIPW estimator, which is also referred to as a DR estimator, plays an important role in treatment effect estimation \citep{Robins1994,hahn1998role,bang2005drestimation,dudik2011doubly,Laan2016onlinetml,Luedtke2016}. The AIPW estimator also plays an important role in double/debiased machine learning literature because it mitigates the convergence rate conditions of the nuisance parameters \citep{ChernozhukovVictor2018Dmlf,Ichimura2022}.

In adaptive experiments for efficient ATE estimation, the AIPW estimator has also been used by \citet{Laan2008TheCA} and \citet{Hahn2011}. \citet{Karlan2014} applied the method of \citet{Hahn2011} to test how donors respond to new information regarding the effectiveness of a charity. These studies have been extended by \citet{Meehan2020} and \citet{Kato2020adaptive}.
However, the notion of optimality is based on the analogue of the efficient estimation of the ATE under i.i.d. observations and not complete in adaptive experiments. 

When constructing AIPW estimator with samples obtained from adaptive experiments, including BAI strategies, a typical construction is to use sample splitting and martingales \citep{Laan2008TheCA,hadad2019,Kato2020adaptive,Kato2021adr}.  \citet{Howard2020TimeuniformNN}, \citet{Kato2020adaptive}, and provide non-asymptotic confidence intervals of the AIPW or DR estimator, which do not bound a tail probability in large deviation as ours. The AIPW estimator is also used in the recent bandit literature, mainly in regret minimization \citep{dimakopoulou2021online,Kim2021}. \citet{hadad2019}, \citet{Bibaut2021}, and \citet{Zhan2021} consider the off-policy evaluation using observations obtained from regret minimization algorithms.

\subsection{Difference from Limit Experiments Frameworks}
\label{app_subsec:diff_limit_dec}
The small-gap regime is inspired by limit experiments framework \citep{LeCam1986,Vaart1998,Hirano2009}. 
For a parameter $\theta_0\in\mathbb{R}$ and $n$ i.i.d. observations for a sample size $n$, the limit experiments framework considers local alternatives $\theta = \theta_0 + h/\sqrt{n}$ for a constant $h\in\mathbb{R}$ \citep{Vaart1991,Vaart1998}. Then, we can approximate the statistical experiment by a Gaussian distribution and discuss the asymptotic optimality of statistical procedures under the approximation. \citet{Hirano2009} relates the asymptotic optimality of statistical decision rules \citep{Manski2000,Manski2002,Manski2004,DEHEJIA2005} to the limit experiment framework. This framework is further applied to policy learning, such as \citet{AtheySusan2017EPL}.

Independently, \citet{Armstrong2022} proposes an application of the local asymptotic framework to a setting similar to BAI by replacing the CLT used in the original framework, such as \citet{Vaart1998}, with that for martingales. In their analysis, the gaps converge to zero with $1/\sqrt{T}$, and a class of BAI strategies is restricted for the second moment of the score to converges to a constant, whereas our gaps converge to zero independently of $T$, and a class of BAI strategies is restricted to be consistent. 

Here, note that
taking the parameter $\theta = \theta_0 + h/\sqrt{T}$ does not produce the distribution-dependent analysis; that is, the instance is not fixed as $T$ increases. Therefore, a naive application of the distribution-dependent analysis like Proposition~\ref{lem:data_proc_inequality} does not provide a lower bounds for BAI in this setting.
To match the lower bound of  \citet{Kaufman2016complexity}, we need to consider the large deviation bound, rather than CLT. In other words, the limit experiment framework first applies a Gaussian approximation and then evaluates the efficiency under that approximation, where efficiency arguments are complete within the Gaussian distribution. In contrast, we derive the lower bounds of an event under the true distribution in our limit decision-making and approximate it by considering the limit of the gap. Therefore, in limit decision-making, we first consider the optimality for the true distribution and find the optimal strategy in the sense that the upper bound matches the lower bound when the gaps converge to zero.

\subsection{Other Related Work}
Our small-gap regime is also inspired by lil'UCB \citep{Jamieson2014}. 
\citet{Balsubramani2016} and \citet{Howard2020TimeuniformNN} propose sequential testing using the law of iterated logarithms and discuss the optimality of sequential testing based on the arguments of \citet{Jamieson2014}.

Ordinal optimization has been studied in the operation research community \citep{peng2016myopic, Dohyun2021}, and a modern formulation was established in the 2000s \citep{chen2000,glynn2004large}. Most of these studies consider the estimation of the optimal sampling rule separately from the probability of misidentification.

In addition to \citet{Fan2013,fan2014generalization}, several studies have employed martingales to obtain tight large deviation bounds \citep{Cappe2013,Juneja2019,Howard2020TimeuniformNN,Kaufmann2021}. Some of these studies have applied change-of-measure techniques. 

\citet{Tekin2015}, \citet{GuanJiang2018}, and \citet{Deshmukh2018} also consider BAI with contextual information, but their analysis and setting are different from those employed in this study.

\section{Discussion}
\label{sec:discuss}

\subsection{Asymptotic Optimally in BAI with a Fixed Budget}
\citet{Kaufman2016complexity} derives distribution-dependent lower bounds for BAI with a fixed confidence and budget, based on similar change-of-measure arguments to those found in \citet{Lai1985}. In BAI with fixed confidence, \citet{Garivier2016} develops a strategy whose upper bound and lower bounds for the probability of misidentification match. In contrast, in the fixed-budget setting, the existence of a strategy whose upper bound matches the lower bound of \citet{Kaufman2016complexity} was unclear. We consider that this is because the estimation error of an optimal target sample allocation ratio is negligible in BAI with a fixed budget, unlike BAI with fixed confidence, where we can draw each treatment arm until the strategy satisfies a condition. Furthermore, there are lower bounds different from \citet{Kaufman2016complexity}, such as \citet{Audibert2010}, \citet{Bubeck2011}, and \citet{Carpentier2016}.

\citet{Audibert2010} proposes the UCB-E and Successive Rejects (SR) strategies. Using the complexity terms
$H_1(P) = \sum_{a \in [K] \backslash\{a^*(P)\}} 1/(\Delta^a(P))^2$ and $H_2(P) = \max_{a \in [K] \backslash\{a^*(P)\}} a/(\Delta^a(P))^2$, where $\Delta^a(P) = \mu^{a^*(P)} - \mu^a$, they prove an upper bound for the probabilities of misidentification of the forms $\exp\left( - T/(18 H_1(P_0))\right)$ and $\exp \big( - T/( \log (K) H_2(P_0))\big)$, for UCB-E with the upper bound on $H_1(P_0)$ and SR, respectively. 

\citet{Kato2022small} shows that the upper bound for the probability of misidentification of the RS-AIPW strategy matches the lower bound derived by \citet{Kaufman2016complexity} (Proposition~\ref{prp:kauf_gaussian}) under the small-gap regime when the number of treatment arms is two, contextual information is not available, and the potential outcomes follow Gaussian distribution.  They approximate only the upper bound by the small gap but do not consider the approximation of the lower bound. 

\citet{Carpentier2016} discusses the optimality of the method proposed by \citet{Audibert2010} by an effect of constant factors in the exponents of certain bandit models. They proved the lower bound on the probability of misidentification of the form: $ \sup_{P\in\mathcal{P}^B}\Big\{ \mathbb{P}_{P_0}\big(\widehat{a}_T \neq a^*_0\big) \exp \big(400T/( \log(K) H_1(P))\big)\Big\}$, where for all $P\in\mathcal{P}^B$, there exists a constant $B>0$ such that $H_1(P) < B$. Our result does not contradict with the result that found by \citet{Carpentier2016}, as we consider a small-gap regime, rather than the large-gap regime employed by \citet{Carpentier2016}. In the other words, their results are complementary to ours because we consider situations with a small gap.

\subsection{Two-stage Sampling Rule}
Our RS-AIPW strategy is also applicable to a setting where we can update the sampling rule in batch, ratner than a sequential manner, as well as other BAI strategies in different settings. For example, even in a two-stage setting, where we are allowed to update the sampling rule only once, we can show the asymptotic optimality if the budgets separated into two-stages go to infinity simultaneously. Such a setting has frequently been adopted in the field of economics, such as \citet{Hahn2011} and \citet{Kasy2021}. 

\section{Conclusion}
In this study, we considered BAI with a fixed budget and contextual information under a small-gap regime. Subsequently, we derived lower bounds for the probability of misidentification by applying semiparametric analysis under the small-gap regime. Then, we proposed the RS-AIPW strategy. With the help of a new large deviation expansion we developed, we showed that the performance of our proposed RS-AIPW strategy matches the lower bound under a small gap. We also addressed a long-standing open issue in BAI with a fixed budget; even without contextual information, the existence of an asymptotically optimal BAI strategy was unclear. 
Because BAI with a fixed budget and without contextual information is a special case in our setting, we addressed this question. Furthermore, we demonstrated an analytical solution for the target sample allocation ratio, which has also been unknown for a long time. Thus, our study serves as a breakthrough in the field of BAI with a fixed budget. Our future direction is to develop BAI strategies for various settings, such as linear \citep{Hoffman2014,Liang2019,KatzSamuels2020}, combinatorial \citep{Chen2014}, and policy learning \citep{Kitagawa2018,AtheySusan2017EPL,Zhou2020}.   

\bibliographystyle{asa}
\bibliography{BAI.bbl}

\clearpage 

\tableofcontents

\appendix

\section{Preliminaries for the Proof}
\label{appdx:prelim}
\begin{definition}\label{dfn:uniint}[Uniform integrability, \citet{Hamilton1994}, p.~191] Let $W_t \in \mathbb{R}$ be a random variable with a probability measure $P$.  A sequence $\{W_t\}$  is said to be uniformly integrable if for every $\epsilon > 0$ there exists a number $c>0$ such that 
\begin{align*}
\mathbb{E}_{P}[|A_t|\cdot I[|A_t| \geq c]] < \epsilon
\end{align*}
for all $t$.
\end{definition}
The following proposition is from \citet{Hamilton1994}, Proposition~7.7, p.~191.
\begin{proposition}[Sufficient conditions for uniform integrability]\label{prp:suff_uniint} Let $W_t, Z_t \in\mathbb{R}$ are random variables. Let $P$ be a probability measure of $Z_t$. (a) Suppose there exist $r>1$ and $M<\infty$ such that $\mathbb{E}_{P}[|W_t|^r]<M$ for all $t$. Then $\{A_t\}$ is uniformly integrable. (b) Suppose there exist $r>1$ and $M < \infty$ such that $\mathbb{E}_{P}[|Z_t|^r]<M$ for all $t$. If $W_t = \sum^\infty_{j=-\infty}h_jZ_{t-j}$ with $\sum^\infty_{j=-\infty}|h_j|<\infty$, then $\{W_t\}$ is uniformly integrable.
\end{proposition}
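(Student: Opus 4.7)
The plan is to handle the two parts of the proposition separately, with part (b) reduced to part (a). Both parts rely on essentially classical measure-theoretic manipulations: Markov/Chebyshev-type tail control for (a) and Minkowski's inequality for (b).

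For part (a), I would establish the uniform integrability directly from the definition. Fix $\varepsilon>0$. On the event $\{|W_t|\ge c\}$, since $r>1$ we have $|W_t|^{1-r}\le c^{1-r}$ because $1-r<0$. Therefore $|W_t|\le c^{1-r}|W_t|^{r}$ pointwise on that event, so
\begin{equation*}
\mathbb{E}_P\!\left[|W_t|\,\mathbf{1}\{|W_t|\ge c\}\right] \;\le\; c^{1-r}\,\mathbb{E}_P\!\left[|W_t|^{r}\right] \;\le\; M\,c^{1-r}.
\end{equation*}
Since $c^{1-r}\to 0$ as $c\to\infty$ and the bound $M$ is uniform in $t$, choosing $c$ with $Mc^{1-r}<\varepsilon$ produces the required uniform bound. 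This verifies Definition~\ref{dfn:uniint}.

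For part (b), the strategy is to convert the hypothesis on $Z_t$ into a uniform $L^{r}$ bound on $W_t$, then invoke part (a). Using Minkowski's inequality for (countable) sums, together with the fact that $\|Z_{t-j}\|_r\le M^{1/r}$ uniformly in $t$ and $j$,
\begin{equation*}
\|W_t\|_r \;=\; \Bigl\|\sum_{j=-\infty}^{\infty} h_j Z_{t-j}\Bigr\|_r \;\le\; \sum_{j=-\infty}^{\infty} |h_j|\,\|Z_{t-j}\|_r \;\le\; M^{1/r}\sum_{j=-\infty}^{\infty}|h_j|,
\end{equation*}
which is finite by the absolute summability of $\{h_j\}$. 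This yields a uniform $L^{r}$ bound on $\{W_t\}$, and part (a) concludes that $\{W_t\}$ is uniformly integrable.

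The only genuinely delicate step is the application of Minkowski to an infinite sum in part (b): one must justify that $W_t=\sum_j h_j Z_{t-j}$ converges in $L^{r}$ (not merely almost surely), which follows from a standard truncation argument combined with monotone/dominated convergence, using $\sum_j|h_j|<\infty$ to control the tails of the partial sums uniformly. Once this is in place, the rest is a direct chain of inequalities and poses no real obstacle.
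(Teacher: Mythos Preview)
Your proof is correct and follows the standard textbook argument. Note that the paper does not actually prove this proposition: it is stated in the preliminaries appendix as a citation from \citet{Hamilton1994}, Proposition~7.7, without proof. Your argument---Markov/Chebyshev tail control for (a), then Minkowski's inequality to reduce (b) to (a)---is precisely the classical route and is essentially what one finds in Hamilton or any standard reference.
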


\begin{proposition}[$L^r$ convergence theorem, p~165, \citet{loeve1977probability}]
\label{prp:lr_conv_theorem}
Let $Z_n$ be a random variable with probability measure $P$ and $z$ be a constant. 
Let $0<r<\infty$, suppose that $\mathbb{E}_{P}\big[|Z_n|^r\big] < \infty$ for all $n$ and that $Z_n \xrightarrow{\mathrm{p}}z$ as $n\to \infty$. The following are equivalent: 
\begin{description}
\item{(i)} $Z_n\to z$ in $L^r$ as $n\to\infty$;
\item{(ii)} $\mathbb{E}_{P}\big[|Z_n|^r\big]\to \mathbb{E}_{P}\big[|z|^r\big] < \infty$ as $n\to\infty$; 
\item{(iii)} $\big\{|Z_n|^r, n\geq 1\big\}$ is uniformly integrable.
\end{description}
\end{proposition}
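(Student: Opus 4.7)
The plan is to establish the cycle of implications $(\mathrm{iii}) \Rightarrow (\mathrm{i}) \Rightarrow (\mathrm{ii}) \Rightarrow (\mathrm{iii})$, exploiting throughout the simplification that $z$ is a constant (so $|z|^r$ is a bounded deterministic quantity).

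\textbf{Step 1: $(\mathrm{iii}) \Rightarrow (\mathrm{i})$.} I would first upgrade uniform integrability of $\{|Z_n|^r\}$ to uniform integrability of $\{|Z_n - z|^r\}$. For $r \ge 1$, the inequality $|Z_n - z|^r \le 2^{r-1}(|Z_n|^r + |z|^r)$, and for $0 < r < 1$, the inequality $|Z_n - z|^r \le |Z_n|^r + |z|^r$, reduce the question to UI of a sum of $\{|Z_n|^r\}$ and the constant $|z|^r$; since adding a uniformly bounded deterministic quantity preserves UI, the family $\{|Z_n - z|^r\}$ is UI. Next, convergence in probability $Z_n \xrightarrow{p} z$ implies $|Z_n - z|^r \xrightarrow{p} 0$ by continuity of $x \mapsto |x|^r$. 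Applying Vitali's convergence theorem (convergence in probability plus UI gives $L^1$ convergence) yields $\mathbb{E}_P[|Z_n - z|^r] \to 0$, which is exactly $L^r$ convergence.

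\textbf{Step 2: $(\mathrm{i}) \Rightarrow (\mathrm{ii})$.} For $r \ge 1$, the reverse triangle inequality in $L^r$ gives $\bigl|\|Z_n\|_r - \|z\|_r\bigr| \le \|Z_n - z\|_r \to 0$, so $\|Z_n\|_r \to \|z\|_r$, and raising to the power $r$ yields $\mathbb{E}_P[|Z_n|^r] \to \mathbb{E}_P[|z|^r] = |z|^r < \infty$. For $0 < r < 1$, the elementary bound $\bigl||Z_n|^r - |z|^r\bigr| \le |Z_n - z|^r$ combined with taking expectations gives the same conclusion directly.

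\textbf{Step 3: $(\mathrm{ii}) \Rightarrow (\mathrm{iii})$.} This is the main obstacle. Fix any threshold $c > |z|^r$ and decompose
\begin{equation*}
\mathbb{E}_P\bigl[|Z_n|^r \mathbf{1}\{|Z_n|^r > c\}\bigr] = \mathbb{E}_P[|Z_n|^r] - \mathbb{E}_P\bigl[|Z_n|^r \mathbf{1}\{|Z_n|^r \le c\}\bigr].
\end{equation*}
The first term converges to $|z|^r$ by hypothesis (ii). For the second, since $|Z_n|^r \wedge c$ is uniformly bounded by $c$ and $Z_n \xrightarrow{p} z$ implies $|Z_n|^r \mathbf{1}\{|Z_n|^r \le c\} \xrightarrow{p} |z|^r \mathbf{1}\{|z|^r \le c\} = |z|^r$ (using crucially that $c > |z|^r$, so the limiting indicator equals $1$), the bounded convergence theorem gives $\mathbb{E}_P[|Z_n|^r \mathbf{1}\{|Z_n|^r \le c\}] \to |z|^r$. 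Hence $\mathbb{E}_P[|Z_n|^r \mathbf{1}\{|Z_n|^r > c\}] \to 0$ as $n \to \infty$ for every fixed $c > |z|^r$. To conclude UI uniformly in $n$, I would, given $\varepsilon > 0$, first pick $N$ and $c_0 > |z|^r$ so that the tail expectation is below $\varepsilon$ for all $n \ge N$, and then, since each of the finitely many $|Z_n|^r$ with $n < N$ is individually integrable, enlarge $c$ to handle $n < N$ as well.

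The hard part will be Step 3: the subtlety is justifying passage to the limit in the truncated expectation without Lebesgue dominated convergence (since we lack a dominating random variable). The trick is that boundedness by the deterministic constant $c$ suffices via the bounded convergence theorem, and the constancy of $z$ ensures that $c$ is never an atom of the limiting distribution of $|Z_n|^r$ for $c > |z|^r$, so the indicator passes cleanly to the limit in probability. The remaining two implications are essentially inequalities (Vitali, reverse triangle).
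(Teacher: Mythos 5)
Your proposal is correct. Note, however, that the paper does not prove this proposition at all: it is imported verbatim as a known result from \citet{loeve1977probability} (p.~165), so there is no in-paper argument to compare against. Your cycle $(\mathrm{iii})\Rightarrow(\mathrm{i})\Rightarrow(\mathrm{ii})\Rightarrow(\mathrm{iii})$ is the standard textbook proof, and each step is sound: the domination $|Z_n-z|^r\le 2^{(r-1)\vee 0}(|Z_n|^r+|z|^r)$ correctly transfers uniform integrability to the centered family so that Vitali applies; the case split at $r=1$ in Step~2 is handled with the right inequalities; and in Step~3 the choice $c>|z|^r$ indeed avoids the discontinuity of $x\mapsto x\,\mathbf{1}\{x\le c\}$ at $x=c$, so the bounded convergence theorem (in its convergence-in-probability form) legitimately yields vanishing tails, after which the finite-prefix enlargement of $c$ gives uniformity in $n$.
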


Let $W_i$ be a random variable with probability measure $P$. Let $\mathcal{F}_n = \{W_1, W_2,\dots, W_n\}$.

\begin{proposition}[Strong law of large numbers for martingales, p 35,  \cite{hall1980martingale}]\label{prp:law_large_num_Hall}
Let $\{S_n = \sum^{n}_{i=1} W_i, \mathcal{F}_{n}, n\geq 1\}$ be a martingale and $\{U_n, n \geq 1\}$ a nondecreasing sequence of positive r.v. such that $U_n$ is $\mathcal{F}_{n-1}$-measurable. Then, 
\begin{align*}
    \lim_{n \to \infty}U^{-1}_n S_n= 0
\end{align*}
almost surely on the set $\{\lim_{n\to \infty} U_n = \infty, \; \sum_{i=1}^\infty U_i^{-1} \mathbb{E}[|W_i| | \mathcal{F}_{i-1}] < \infty\}$.
\end{proposition}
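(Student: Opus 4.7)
The plan is to prove this via a standard localization-plus-Kronecker argument. First I would form the predictable-weighted sum $V_n := \sum_{i=1}^n U_i^{-1} W_i$, which is itself an $\mathcal{F}_n$-martingale because the weights $U_i^{-1}$ are $\mathcal{F}_{i-1}$-measurable; indeed, $\mathbb{E}[U_i^{-1} W_i \mid \mathcal{F}_{i-1}] = U_i^{-1} \mathbb{E}[W_i \mid \mathcal{F}_{i-1}] = 0$. If I can show that $V_n$ converges almost surely on the event
\[
E := \Bigl\{\lim_{n \to \infty} U_n = \infty,\; \sum_{i=1}^\infty U_i^{-1} \mathbb{E}[|W_i| \mid \mathcal{F}_{i-1}] < \infty \Bigr\},
\]
then a pointwise application of Kronecker's lemma (with the sample-path-wise nondecreasing sequence $U_n(\omega) \to \infty$) will yield $U_n^{-1} S_n \to 0$ on $E$, which is the desired conclusion.

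To establish almost-sure convergence of $V_n$ on $E$, I would localize. Set $C_n := \sum_{i=1}^n U_i^{-1} \mathbb{E}[|W_i| \mid \mathcal{F}_{i-1}]$; each summand is $\mathcal{F}_{i-1}$-measurable, so $\{C_n\}$ is predictable. Define the stopping time $\tau_K := \inf\{n \geq 0 : C_{n+1} \geq K\}$, so that $\{\tau_K \geq i\} = \{C_i < K\} \in \mathcal{F}_{i-1}$. The stopped martingale $V_{n \wedge \tau_K}$ is uniformly $L^1$-bounded: interchanging expectation and sum, and using the $\mathcal{F}_{i-1}$-measurability of $\mathbf{1}_{\{i \leq \tau_K\}}$,
\[
\mathbb{E}\bigl[|V_{n \wedge \tau_K}|\bigr] \leq \sum_{i=1}^n \mathbb{E}\bigl[U_i^{-1} |W_i| \mathbf{1}_{\{i \leq \tau_K\}}\bigr] = \mathbb{E}\bigl[C_{n \wedge \tau_K}\bigr] \leq K.
\]
By the $L^1$-bounded martingale convergence theorem, $V_{n \wedge \tau_K}$ converges almost surely as $n \to \infty$. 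On $E$ one has $C_\infty < \infty$, so $\tau_K(\omega) = \infty$ once $K > C_\infty(\omega)$, and hence $V_n(\omega) = V_{n \wedge \tau_K}(\omega)$ for all $n$. Almost-sure convergence of $V_n$ on $E$ follows by taking $K \to \infty$.

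Combining the two pieces, on $E$ the sample path $U_n(\omega)$ is nondecreasing with $U_n(\omega) \to \infty$, and $\sum_{i=1}^n U_i(\omega)^{-1} W_i(\omega)$ converges; Kronecker's lemma applied $\omega$-by-$\omega$ then gives $U_n(\omega)^{-1} S_n(\omega) \to 0$, proving the proposition. The main delicate point is the construction of the localizing stopping time: one must use the look-ahead form $\tau_K = \inf\{n : C_{n+1} \geq K\}$ rather than $\inf\{n : C_n \geq K\}$, so that $\{\tau_K \geq i\}$ is $\mathcal{F}_{i-1}$-measurable. This predictability is precisely what allows the exchange of conditional expectation with the indicator in the $L^1$ bound; the rest is routine martingale machinery.
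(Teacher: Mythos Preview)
The paper does not supply its own proof of this proposition: it is stated in the preliminaries appendix as a known result cited from \citet{hall1980martingale}, with no argument given. Your proof is correct and is essentially the classical argument found in that reference---form the predictable-weighted martingale $V_n = \sum_{i\le n} U_i^{-1}W_i$, localize via the predictable stopping time $\tau_K$ built from the compensator $C_n$, apply Doob's $L^1$ martingale convergence to each stopped process, and finish pathwise with Kronecker's lemma. The care you take with the look-ahead definition of $\tau_K$ so that $\{\tau_K \ge i\}\in\mathcal{F}_{i-1}$ is exactly the point that makes the $L^1$ bound go through.
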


\begin{proposition}[Rate of convergence in the CLT, From Theorem~3.8, p 88,  \cite{hall1980martingale}]\label{prp:rate_clt}
Let $\{S_t = \sum^{t}_{s=1} X_s, \mathcal{F}_{t}, t\geq 1\}$ be a martingale with $\mathcal{F}_{t}$ equal to the $\sigma$-field generated by $X_1,\dots, X_t$. Let 
\begin{align*}
    V^2_t = \mathbb{E}\left[\left|\sum^t_{s=1}\mathbb{E}[Y^2_s| \mathcal{F}_{s-1}] - 1\right|\right]\qquad 1 \leq t \leq T.
\end{align*}
Suppose that for some $\alpha > 0$ and constants $M$, $C$ and $D$, 
\begin{align*}
    \max_{s\leq t}\mathbb{E}[\exp(|\sqrt{T} Y_t|^{\alpha})] < M,
\end{align*}
and 
\begin{align*}
    \mathbb{P}\left(|V^2_t - 1 | > D/\sqrt{t}(\log t)^{2+2/\alpha}\right) \leq C t^{-1/4}(\log t)^{1+1/\alpha}.
\end{align*}
Then, for $T\geq 2$,
\begin{align}
    \sup_{-\infty < x < \infty}\big|\mathbb{P}(S_T \leq x) - \Phi(x)\big|\leq A T^{-1/4} (\log T)^{1+1/\alpha},
\end{align}
where the constant $A$ depends only on $\alpha$, $M$, $C$, and $D$.
\end{proposition}

\section{Proof of Lower Bound (Theorem~\ref{thm:semipara_bandit_lower_bound})}
\label{sec:proof}
In this section, we provide proof of Theorem~\ref{thm:semipara_bandit_lower_bound}. Our argument is based on a change-of-measure argument, which has been applied to BAI without contextual information \citep{Kaufman2016complexity}. In this derivation, we relate the likelihood ratio to the lower bound. Inspired by \citet{Murphy1997}, we expand the semiparametric likelihood ratio, where the gap parameter $\mu^*_0 - \mu^a_0$ is regarded as a parameter of interest and the other parameters as nuisance parameters. By using a semiparametric efficient score function, we apply a series expansion to the likelihood ratio of the distribution-dependent lower bound around the gap parameter $\mu^*_0 - \mu^a_0$ under a bandit model of an alternative hypothesis. Then, when the gap parameter goes to $0$, the lower bound is characterized by the variance of the semiparametric influence function. Our proof is also inspired by \citet{Vaart1998} and \citet{hahn1998role}. Throughout the proof, for simplicity, $\mathcal{P}^{\mathrm{L}}$ is denoted by $\mathcal{P}$.

Precisely, our proof follows these steps. First, the goal is to express the lower bound of the probability of misidentification by using the gap parameter. In Proposition~\ref{lem:data_proc_inequality} of Appendix~\ref{sec:transport}, we introduce a bound for some event based on a change-of-measure argument \citep{Kaufman2016complexity}. We apply this bound to derive lower bounds for the probability of misidentification in the final step of the proof. Next, we consider distributions of observations. Although we defined distributions of the potential random variables $(Y^1_t, Y^2_t, \dots, Y^K_t, X_t)$ (full-data bandit models), we can only observe a reward of a chosen treatment arm, $Y^{A_t}_t$, and context, $X_t$, and cannot observe other rewards $(Y^a_t)_{a\in[K]\backslash\{A_t\}}$. Therefore, distributions of observations are different from the full-data bandit models. We induce the former from the latter in Appendix~\ref{sec:obs_data_bandit} to discuss optimality. With these preparations, in Appendix~\ref{sec:para_sub_full}, we introduce a parameter into the true nonparametric full-data bandit models to differentiate the log-likelihood around the gap parameter; that is, the gap parameter is introduced so that it corresponds to $\mu^*_0 - \mu^a_0$. This parameter is a technical device for the proof, and the parametrized models are called parametric submodels, which are subsets of $\mathcal{P}$. The derivative is then defined with respect to this parameter, and we consider applying the series expansion to the log likelihood. However, the derivative (score function) is not uniquely defined because it includes nuisance parameters other than the parameter of interest. Therefore, to specify a score function with the tightest lower bound, it is necessary to consider information on the distribution of the observations. To perform these operations, we associate the full-data bandit models with the distribution of the observed data in Appendix \ref{sec:mapping_obs}. Then, in Appendix \ref{sec:para_sub_obs}, we derive the parametric submodel of the distribution of observations from the parametric submodels of the full-data bandit models and define a score function for that the parametric submodel of the distribution of observations. For deriving lower bounds, an alternative hypothesis plays an important role, and we define a class of alternative hypotheses (alternative bandit models) in Appendix~\ref{sec:alter}. By using the alternative bandit models, we derive a lower bound of the probability of misidentification in Appendix~\ref{sec:deriv_lower}, which depends on the log-likelihood and is related to the gap parameter in the following arguments. For the lower bound, using the score function and alternative bandit models in Appendix~\ref{sec:alter}, we apply the series expansion to the log-likelihood in Appendix~\ref{sec:semiparametric_lratio} and characterize the bound in Proposition~\ref{lem:data_proc_inequality} of Appendix~\ref{sec:transport} with the gap parameter. Then, in Appendix~\ref{sec:oberved-data}, we derive the information bound of the second moment of the score function; then, in Appendix~\ref{sec:specification-score}, we specify a score function whose second moment is equal to the information bound in Appendix~\ref{sec:oberved-data}. Finally, combining them, we derive the lower bound for the probability of misidentification in Appendix~\ref{sec:final_step}. 

\subsection{Transportation Lemma}
\label{sec:transport}
Our lower bound derivation is based on change-of-measure arguments, which have been extensively used in the bandit literature \citep{Lai1985}.  
\cite{Kaufman2016complexity} derives the following result based on change-of-measure argument, which is the principal tool in our lower bound.
Let us define a density of $(Y^1, Y^2, \dots, Y^K, X)$ under  a bandit model $P\in\mathcal{P}$ as
\begin{align*}
    p_P(y^1, y^2, \dots, y^K, x) = \prod_{a\in[K]} f^a_{P}(y^a|x)\zeta_{P}(x)
\end{align*}
Let $f^{a^*_0}_P$ be denoted by $f^*_P$. 

\begin{proposition}[Lemma~1 in \cite{Kaufman2016complexity}]\label{lem:data_proc_inequality} Suppose that Assumption~\ref{asm:bounded_mean_variance} holds. 
Then, for any two bandit model $P,Q\in\mathcal{P}$ with $K$ treatment arms such that for all $a \in [K]$, $f^a_{P}(y^a|x)\zeta_{P}(x)$ and $f^{a}_{Q}(y^a|x)\zeta_{Q}(x)$ are mutually absolutely continuous, 
\begin{align*}
    &\mathbb{E}_{Q}\left[\sum^T_{t=1}  \mathbbm{1}[ A_t = a] \log \left(\frac{f^a_{Q}(Y^a_{t}| X_t)\zeta_{Q}(X_t)}{f^a_{P}(Y^a_{t}| X_t)\zeta_{P}(X_t)}\right) \right]\ge \sup_{\mathcal{E} \in \mathcal{F}_T} d(\mathbb{P}_{Q}(\mathcal{E}),\mathbb{P}_{P}(\mathcal{E})).
\end{align*}
\end{proposition}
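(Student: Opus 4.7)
The plan is to run the classical change-of-measure argument used throughout the bandit lower-bound literature: identify the KL divergence between the laws of the observed trajectory under $P$ and $Q$ on the terminal $\sigma$-algebra $\cF_T$, exploit the fact that the strategy's kernels do not depend on the bandit model, and then coarsen by $\mathcal{E}$ via the data processing inequality.

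First, I would factor the joint law of the observed trajectory $(X_1,A_1,Y_1,\ldots,X_T,A_T,Y_T)$ under a generic bandit model $P'\in\{P,Q\}$. Conditional on $\cF_{t-1}$, the context has density $\zeta_{P'}(X_t)$; the action $A_t$ is drawn by the fixed sampling rule via a kernel $\pi_t(\cdot\,|\,\cF_{t-1},X_t)$ that depends only on the strategy, not on the model; and the realized reward $Y_t$, on $\{A_t=a\}$, has conditional density $f^a_{P'}(\cdot\,|\,X_t)$. Consequently, in the log Radon--Nikodym derivative of $\mathbb{P}_Q$ with respect to $\mathbb{P}_P$ restricted to $\cF_T$, every $\pi_t$-factor cancels, leaving
\begin{align*}
\log\frac{d\mathbb{P}_Q|_{\cF_T}}{d\mathbb{P}_P|_{\cF_T}} = \sum_{t=1}^T\Biggl[\log\frac{\zeta_Q(X_t)}{\zeta_P(X_t)} + \sum_{a\in[K]}\mathbbm{1}[A_t=a]\log\frac{f^a_Q(Y^a_t|X_t)}{f^a_P(Y^a_t|X_t)}\Biggr].
\end{align*}
Using $\sum_{a\in[K]}\mathbbm{1}[A_t=a]=1$ to absorb the context ratio into the inner sum produces exactly the integrand on the left-hand side of the proposition; taking expectation under $Q$ identifies this with $\mathrm{KL}(\mathbb{P}_Q|_{\cF_T},\mathbb{P}_P|_{\cF_T})$. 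Mutual absolute continuity from Assumption~\ref{asm:bounded_mean_variance} is what guarantees that the Radon--Nikodym derivative is well-defined and the KL is finite.

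Second, apply the data processing inequality to the bounded $\cF_T$-measurable statistic $\mathbbm{1}[\mathcal{E}]$. Its pushforward under $\mathbb{P}_Q$ and $\mathbb{P}_P$ is Bernoulli with parameters $\mathbb{P}_Q(\mathcal{E})$ and $\mathbb{P}_P(\mathcal{E})$, so monotonicity of KL under coarsenings yields $\mathrm{KL}(\mathbb{P}_Q|_{\cF_T},\mathbb{P}_P|_{\cF_T})\ge d(\mathbb{P}_Q(\mathcal{E}),\mathbb{P}_P(\mathcal{E}))$; taking the supremum over $\mathcal{E}\in\cF_T$ closes the bound.

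The principal technical obstacle is not conceptual but bookkeeping: writing the likelihood of the observed trajectory rigorously for a possibly randomized, history-dependent strategy, and verifying that the action kernels $\pi_t$ are identical under $P$ and $Q$. The cleanest execution works with a dominating measure on observed data $(X_t,A_t,Y_t)_{t=1}^T$ rather than carrying the unobserved counterfactuals $(Y^b_t)_{b\neq A_t}$ into the joint density; the latter would contribute the same factor to both $P$- and $Q$-likelihoods and thus drop from the ratio, but keeping track of them clutters the derivation. Once the factorization is in place, the chain rule for KL and the data processing inequality do the remaining work with no further analytic input.
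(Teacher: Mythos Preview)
Your proof is correct and follows exactly the standard change-of-measure argument: factor the trajectory likelihood, cancel the strategy-dependent action kernels, identify the left-hand side as $\mathrm{KL}(\mathbb{P}_Q|_{\cF_T},\mathbb{P}_P|_{\cF_T})$, and apply the data processing inequality via the Bernoulli statistic $\mathbbm{1}[\mathcal{E}]$. The paper itself does not prove this proposition---it is imported verbatim as Lemma~1 of \cite{Kaufman2016complexity}---and your argument is precisely the one given there (adapted to include the context density $\zeta$).
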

Recall that $d(p, q)$ indicates the KL divergence between two Bernoulli distributions with parameters $p, q\in (0, 1)$. 

This ``transportation'' lemma provides the distribution-dependent characterization of events under a given bandit model $P$ and corresponding perturbed bandit model $P'$.

Between the true bandit model $P_0 \in \mathcal{P}$ and a bandit model $Q \in \mathcal{P}$, following the proof of Lemma~1 in \citet{Kaufman2016complexity}, we define the log-likelihood ratio as
\begin{align*}
    L_T = \sum^T_{t=1} \sum_{a\in[K]}\mathbbm{1}[ A_t = a] \log \left(\frac{f^a_{Q}(Y^a_{t}| X_t)\zeta_{Q}(X_t)}{f^a_{P_0}(Y^a_{t}| X_t)\zeta_{P_0}(X_t)}\right).
\end{align*}
For this log-likelihood ratio, from Lemma~\ref{lem:data_proc_inequality}, between the true model $P_0$, we have
\begin{align*}
    \mathbb{E}_{Q}[L_T] \ge \sup_{\mathcal{E} \in \mathcal{F}_T} d(\mathbb{P}_{Q}(\mathcal{E}),\mathbb{P}_{P_{0}}(\mathcal{E})).
\end{align*}

We consider an approximation of $\mathbb{E}_{Q}[L_T]$ under an appropriate alternative hypothesis $Q\in\mathcal{P}$ when the gaps between the expected rewards of the best treatment arm and suboptimal treatment arms are small.

\subsection{Observed-Data Bandit Models}
\label{sec:obs_data_bandit}
Next, we define a semiparametric model for observed data $(Y_t, A_t, X_t)$, as we can only observe the triple $(Y_t, A_t, X_t)$ and cannot observe the full-data $(Y^1_t, Y^2_t,\dots, Y^K_t, X_t)$. 

For each $x\in\mathcal{X}$, let us define the average allocation ratio under a bandit model $P\in\mathcal{P}$ and a BAI strategy as
\begin{align*}
    \frac{1}{T}\sum^T_{t=1}\mathbb{E}_{P}\left[ \mathbbm{1}[A_t = a]| X_t = x\right] = \kappa_{T, P}(a|x)
\end{align*}
This quantity represents the average sample allocation to each treatment arm $a$ under a strategy. Then, we first show the following lemma. We show the proof in Appendix~\ref{appdx:proof:lem_extnd_infinite}.
\begin{lemma}
\label{lem:kauf_lemma_extnd_infinite}
Suppose that Assumption~\ref{asm:bounded_mean_variance} holds. For $P_0, Q, P\in\mathcal{P}$, 
\begin{align*}
\frac{1}{T}\mathbb{E}_{P}[L_T] = \sum_{a\in[K]}\mathbb{E}_{P}\left[\mathbb{E}_{P}\left[\log \frac{f^a_{Q}(Y^a_t| X_t)\zeta_{Q}(X)}{f^a_{P_0}(Y^a_t| X)\zeta_{P_0}(X)}|X_t\right]\kappa_{T, P}(a| X_t)\right].
\end{align*}
\end{lemma}
Based on Lemma~\ref{lem:kauf_lemma_extnd_infinite}, for some $\kappa \in\mathcal{W}$, we consider the following samples $\{(\overline{Y}_t, \overline{A}_t, X_t)\}^T_{t=1}$, instead of $\{(Y_t, A_t, X_t)\}^T_{t=1}$, generated as
\begin{align*}
    \{(\overline{Y}_t, \overline{A}_t, X_t)\}^T_{t=1} \iid r(y, d, x) =  \prod_{a\in[K]}\left\{f^a_{P}(y^a| x)\kappa(a| x) \right\}^{\mathbbm{1}[d=a]}\zeta_{P}(x),
\end{align*}
where $\kappa(a| x)(a|x)$ corresponds to the conditional expectation of $\mathbbm{1}[\overline{A}_t = a]$ given $X_t$. The expectation of $L_T$ for $\{(\overline{Y}_t, \overline{A}_t, X_t)\}^T_{t=1}$ on $P$ is identical to that for $\{(Y_t, A_t, X_t)\}^T_{t=1}$ from the result of Lemma~\ref{lem:kauf_lemma_extnd_infinite} when $\kappa = \kappa_{T, P}$. Therefore, to derive the lower bound for $\{(Y_t, A_t, X_t)\}^T_{t=1}$, we consider that for $\{(\overline{Y}_t, \overline{A}_t, X_t)\}^T_{t=1}$. 
Note that this data generating process is induced by a full-data bandit model $P\in \mathcal{P}$; therefore, we call it an observed-data bandit model. 

Formally, for a bandit model $P\in\mathcal{P}$ and some $\kappa\in\mathcal{W}$, by using a density function of $P$, let $\overline{R}^{\kappa}_P$ be a distribution of an observed-data bandit model $\{(\overline{Y}_t, \overline{A}_t, X_t)\}^T_{t=1}$ with the density given as
\begin{align*}
    &\overline{r}^{\kappa}_P(y, d, x) = \prod_{a\in[K]} \left\{f^a_P(y| x)\kappa(a| x)\right\}^{\mathbbm{1}[d=a]}\zeta_P(x).
\end{align*}
We call it an observed-data distribution.
To avoid the complexity of the notation, we will denote $\{(\overline{Y}_t, \overline{A}_t, X_t)\}^T_{t=1}$ as $\{(Y_t, A_t, X_t)\}^T_{t=1}$ in the following arguments. Let $\mathcal{R} = \big\{\overline{R}_P: P \in \mathcal{P}\big\}$ be a set of all observed-data bandit models $\overline{R}_P$. For $P_0\in\mathcal{P}$, let $\overline{R}^{\kappa}_{P_0} = \overline{R}^{\kappa}_0$, and $\overline{r}^{\kappa}_{P_0} = \overline{r}^{\kappa}_{0}$.

\subsection{Parametric Submodels for the Full-Data Bandit Models}
\label{sec:para_sub_full}
The purpose of this section is to introduce parametric submodels for the true full-data bandit model $P_0\in\mathcal{P}$, which is indexed by a real-valued parameter and a set of distributions contained in the larger set $\mathcal{P}$, and define the derivative of the parametric submodels. 

In Section~\ref{sec:para_sub_obs}, we define parametric submodels for observed-data bandit models under the true full-data bandit model, which is a set of distributions contained in the larger set $\mathcal{R}_0$, by using the parametric submodels for full-data bandit models. These definitions of parametric submodels are preparations for the series expansion of the log-likelihood; that is, we consider approximation of the log-likelihood $L_T = \sum^T_{t=1} \sum_{a\in[K]}\mathbbm{1}[ A_t = a] \log \left(\frac{f^a_{Q}(Y^a_{t}| X_t)\zeta_{Q}(X_t)}{f^a_{P_0}(Y^a_{t}| X_t)\zeta_{P_0}(X_t)}\right)$ using $\mu^*_0 - \mu^a_0$, where $Q\in\mathcal{P}$ is an alternative bandit model. 

This section consists of the following two parts. 
In the first part, we define parametric submodels as \eqref{eq:parametric_submodel} with condition~\eqref{eq:const_ate}. Then, in the following part, we confirm the differentiability \eqref{eq:trans} and define score functions. 

\paragraph{Definition of parametric submodels for the observed-data distribution}
First, we define parametric submodels for the true full-data bandit model $P_0$ with the density function $p_{P_0}(y^1, \dots, y^K, x)$ by introducing a parameter $\bm{\varepsilon} = (\varepsilon^a)_{a\in[K]\backslash\{a^*_0\}}$  $\varepsilon^a \in \Theta$ with some compact space $\Theta$. We construct our parametric submodels so that the parameter can be interpreted as the gap parameter of a parametric submodel. 
For $P\in\mathcal{P}$, we define a set of parametric submodels $\left\{P_{\bm{\varepsilon}}: \bm{\varepsilon}\in\Theta^{K-1}\right\} \subset \mathcal{P}$ as follows: for a set of some functions $(g^a)_{a\in[K]\backslash\{a^*_0\}}$ such that $g^a:\mathbb{R}\times \mathbb{R}\times \mathcal{X} \to \mathbb{R}$, a parametric submodel $P_{\bm{\varepsilon}}$ has a density such that for each $a\in[K]\backslash\{a^*_0\}$, $g^a(\phi^*_\tau(y, x), \phi^a_\tau(y, x), x) = 0$, and 
\begin{align}
\label{eq:parametric_submodel}
&p_{\bm{\varepsilon}}(y^*, y^a, x) = 
\left( 1 + \varepsilon^a g^a\left(\phi^*_\tau(y, x), \phi^a_\tau(y, x), x\right) \right)p_{P_0}(y^*, y^a, x),
\end{align}
where for a constant $\tau > 0$ and each $d\in[K]$, $\phi^d_\tau:\mathbb{R}\times\mathcal{X}\to (-\tau, \tau)$ is a truncation function such that for $\varepsilon^a < c(\tau)$,
\begin{align*}
    &\phi^d_\tau(y, x) = y\mathbbm{1}[|y| < \tau] - \mathbb{E}_{P_0}[Y^d_t\mathbbm{1}[|Y^d_t| < \tau]|X_t = x] + \mu^d_0(x),\qquad |\varepsilon^a g^a\big(\phi^*_\tau(y), \phi^a_\tau(y), x\big) | < 1,
\end{align*}
and $c(\tau)$ is some decreasing scalar function with regard to $\tau$ such that for the inverse $c^{-1}(e) = \tau$, $\tau \to \infty$ as $e \to 0$. Let $\phi^{a^*_0}$ be denoted by $\phi^{*}$. 
This is a standard construction of parametric submodels with unbounded random variables \citep{Hansen2022}. For $a\in[K]\backslash\{a^*_0\}$,
this parametric submodel must satisfy $\mathbb{E}_{P_0}[g^a(\phi^*_\tau(Y_t, X_t), \phi^a_\tau(Y_t, X_t), X_t)] = 0$, $\mathbb{E}_{P_0}[(g^a(\phi^*_\tau(Y_t, X_t), \phi^a_\tau(Y_t, X_t), X_t))^2] < \infty$,
and
\begin{align}
\label{eq:const_ate}
\int \int \left(y^* - y^a\right)p_{\bm{\varepsilon}}(y^*, y^a, x)  \mathrm{d}y^*\mathrm{d}y^a\mathrm{d}x = \mu^*_0 - \mu^a_0 + \varepsilon^a \quad.
\end{align}
In Section~\ref{sec:oberved-data}, we specify functions $(g^a)_{a\in[K]\backslash\{a^*_0\}}$ and confirm that the specified $g^a$ satisfies \eqref{eq:const_ate}. Note that the parametric submodels are usually not unique. For each $a\in[K]\backslash\{a^*_0\}$, the parametric submodel $p_{\bm{\varepsilon}}(y^*, y^a, x)$ is equivalent to $p_{P_0}(y^*, y^a, x)$ when $\varepsilon^a = 0$ for any $(\varepsilon^e)_{e\in[K]\backslash\{a^*_0, a\}}$. 

For each $a\in[K]\backslash\{a^*_0\}$ and a parametric submodel $P_{\bm{\varepsilon}}$, let $f^*_{\bm{\varepsilon}}(y| x)$, $f^a_{\bm{\varepsilon}}(y| x) = f^a_{\varepsilon^a}(y| x)$ and $\zeta_{\bm{\varepsilon}}(x)$ be the conditional densities of $Y^*_t$ and $Y^a_t$ given $X_t = x$ and the density of $X_t$, which satisfies \eqref{eq:parametric_submodel} and \eqref{eq:const_ate} as
\begin{align*}
    &p_{\bm{\varepsilon}}(y^*, y^a, x) = f^*_{\bm{\varepsilon}}(y| x)f^a_{\varepsilon^a}(y| x)\zeta_{\bm{\varepsilon}}(x),\\
    &\int \int \left(y^* - y^a\right) f^*_{\bm{\varepsilon}}(y| x) f^a_{\varepsilon^a}(y| x)\zeta_{\bm{\varepsilon}}(x) \mathrm{d}y^*\mathrm{d}y^a\mathrm{d}x = \mu^*_0 - \mu^a_0 + \varepsilon^a.
\end{align*}
According to the definition of the parametric submodels, $f^*_{\bm{0}}(y| x) = f^*_{P_0}(y| x)$, $f^a_{\bm{0}}(y| x) = f^a_{0}(y| x) = f^a_{P_0}(y| x)$ and $\zeta_{\bm{\varepsilon}}(x) = \zeta_{P_0}(x)$. 

\paragraph{Differentiablity and score functions of the parametric submodels for the observed-data distribution.}
Next, we confirm the differentiablity of $p_{\bm{\varepsilon}}(y^*, y^a, x)$. 
Because $\sqrt{p_{\bm{\varepsilon}}(y^*, y^a, x)}$ is continuously differentiable for every $(y^*, y^a, x)$, and $\int \left( \frac{\dot{p}_{\bm{\varepsilon}}(y^*, y^a, x)}{p_{\bm{\varepsilon}}(y^*, y^a, x)}\right)^2p_{\bm{\varepsilon}}(y^*, y^a, x) \mathrm{d}m$ are well defined and continuous in $\bm{\varepsilon}$, where $m$ is some reference measure on $(y^*, y^a, x)$, from Lemma~7.6 of \citet{Vaart1998}, we see that the parametric submodel has the score function $g^a$ in the $L_2$ sense; that is, the density $p_{\bm{\varepsilon}}(y^*, y^a, x)$ is differentiable in quadratic mean (DQM): for $a\in[K]\backslash\{a^*_0\}$, and any $(\varepsilon^b)_{b\in[K]\backslash\{a^*_0, a\}}$, 
\begin{align}
    &\int\left[p^{1/2}_{\bm{\varepsilon}}(y^*, y^a, x) -  p^{1/2}_{P_0}(y^*, y^a, x)  - \frac{1}{2}\varepsilon^a g^a(\phi^*_\tau(y, x), \phi^a_\tau(y, x), x) p^{1/2}_{P_0}(y^*, y^a, x) \right]^2\mathrm{d}m = o\left(\varepsilon^a\right).
\end{align}
This relationship is derived from 
\begin{align*}
    &\frac{\partial}{\partial \varepsilon^a}\Big|_{\varepsilon^a = 0}\log p_{\bm{\varepsilon}}(y^*, y^a, x) =  \frac{g^a(\phi^*_\tau(y, x), \phi^a_\tau(y, x), x)}{ 1 + \varepsilon^a g^a(\phi^*_\tau(y, x), \phi^a_\tau(y, x), x)}
    \Big|_{\varepsilon^a = 0}
    =
    g^a(\phi^*_\tau(y, x), \phi^a_\tau(y, x), x),
\end{align*}
for any $(\varepsilon^b)_{b\in[K]\backslash\{a^*_0, a\}}$.

To clarify the relationship between $g^a$ and a score function, for each $a\in[K]\backslash\{a^*_0\}$, and any $(\varepsilon^b)_{b\in[K]\backslash\{a^*_0, a\}}$, we express the score function as
\begin{align*}
    g^a(\phi^*_\tau(y, x), \phi^a_\tau(y, x), x) &= \frac{\partial}{\partial \varepsilon^a}\Big|_{\varepsilon^a=0} \log p_{\bm{\varepsilon}}(y^*, y^a, x) = S^{a, a^*_0}_{f}(y|x) +
   S^{a, a}_{f}(y|x) + S^a_{\zeta}(x),
\end{align*}
where
\begin{align*}
    &S^{a, a^*_0}_{f}(y|x) = \frac{\partial}{\partial \varepsilon^a}\Big|_{\varepsilon^a = 0} \log f^*_{\bm{\varepsilon}}(y| x),\quad S^{a, a}_{f}(y|x) = \frac{\partial}{\partial \varepsilon^a}\Big|_{\varepsilon^a = 0} \log f^a_{\varepsilon^a}(y| x),\quad S^a_{\zeta}(x) = \frac{\partial}{\partial \varepsilon^a}\Big|_{\varepsilon^a = 0} \log \zeta_{\bm{\varepsilon}}(x).
\end{align*}

\subsection{Mapping from Observed-Data to Full-Data Bandit Models}
\label{sec:mapping_obs}
According to Section~7.2 of \citet{Tsiatis2007semiparametric}, we define a mapping from full-data to observed-data as $(y, x) = \mathcal{T}^d(y^*, y^a, x)$, where $\mathcal{T}^d:\mathbb{R}^2 \times \mathcal{X} \to  \mathbb{R}\times \mathcal{X}$ is a known many-to-one function, which maps the full-data $(y^*, y^a, x)$ to observed-data bandit models $(y^d, x)$. 
We only consider a case where $(Y^*_t, Y^a_t, X_t)$ is continuous and define a function $V^d:\mathbb{R}^2 \to \mathbb{R}$ as a counterfactual value of the observation; that is, $V^d(Y^*_t, Y^a_t) = ((Y^b_t)_{b\in\{a^*_0, a\}\backslash\{d\}})$. Then, the mapping 
\begin{align*}
    (Y^*_t, Y^a_t, X_t) \mapsto \{\mathcal{T}^{d}(Y^*_t, Y^a_t, X_t), V^{d}(Y^*_t, Y^a_t)\}
\end{align*}
is one-to-one for all $a\in[K]\backslash\{a^*_0\}$ and $d\in\{a^*_0, a\}$. For $a\in[K]\backslash\{a^*_0\}$, $d\in\{a^*_0, a\}$, $\tau^d = (y^d, x)$, and $v^d = ((y^b)_{b\in\{a^*_0, a\}\backslash\{d\}})$, which correspond to $\mathcal{T}^d$ and $V^d$ respectively, we define the inverse transformation as
\begin{align}
    (y^*, y^a, x) = H^d(\tau^d, v^d),
\end{align}
Then, by the standard formula for change of variables, let us define the density of $(\tau^d, v^d)$ under $\mathcal{T}^d$ and $V^d$ as
\begin{align}
\label{eq:p_h_trans}
    p_{\mathcal{T}^d, V^d}(\tau^d, v^d) = p_P(H^d(\tau^d, v^d))J(\tau^d, v^d),
\end{align}
where $J$ is the Jacobian of $H^d$ with respect to $(\tau^d, v^d)$. To find the density of the observed data $\overline{r}^\kappa_P(y, d, x)$, we can use
\begin{align}
\label{eq:main_p_h_trans}
    \overline{r}^\kappa_P(y, d, x) = \int \overline{r}^\kappa_{P, V^d}(\tau^d, d, v^d)dv^d,
\end{align}
where
\begin{align}
\label{eq:p_h_trans_prob}
   \overline{r}^\kappa_{P, V^d}(\tau^d, d, v^d) = \kappa(d|x) p_{\mathcal{T}^d, V^d}(\tau^d, v^d).
\end{align}
Consequently, using \eqref{eq:p_h_trans} and \eqref{eq:p_h_trans_prob}, we can rewrite \eqref{eq:main_p_h_trans} as
\begin{align}
    &\overline{r}^\kappa_{P}(y, d, x)=\int  \kappa\big(d| x\big) p_P(H^d(\tau^d, v^d))J(\tau^d, v^d) \mathrm{d}v^d.
\end{align}

\subsection{Parametric Submodels for the Observed-Data Bandit Models and Tangent Space}
\label{sec:para_sub_obs}
This section consists of the following three parts. 
In the first part, we define parametric submodels as \eqref{eq:parametric_submodel} with condition~\eqref{eq:const_ate}. Then, in the following part, we confirm the differentiability \eqref{eq:trans} and define score functions. Finally, we define a set of score functions, called a tangent set in the final paragraph. 

By using the parametric submodels and tangent set, in Section~\ref{sec:semiparametric_lratio}, we demonstrate the series expansion of the log-likelihood (Lemma~\ref{lem;taylor_exp_semipara}). In this section and Section~\ref{sec:semiparametric_lratio}, we abstractly provide definitions and conditions for the parametric submodels and do not specify them. However, in Sections~\ref{sec:oberved-data} and \ref{sec:specification-score}, we show a concrete form of the parametric submodel by finding score functions satisfying the conditions imposed in this section. 

By using the parametric submodels for the true full-data bandit model $P_0\in\mathcal{P}$ in Section~\ref{sec:para_sub_full}, we define parametric submodels for observed-data bandit models under the true full-data bandit model $P_0\in\mathcal{P}$. Because we define the density functions of the parametric submodel of the true full-data bandit model, the parametric submodels for the observed-data bandit models are given as follows: 
\begin{align*}
    &\overline{r}^{\kappa}_{\bm{\varepsilon}}(y, a, x) = f^a_{\varepsilon^a}(y| x)\kappa(a| x)\zeta_{\bm{\varepsilon}}(x)\qquad \forall a \in [K]\backslash\{a^*_0\},\\
    &\overline{r}^{\kappa}_{\bm{\varepsilon}}(y, a^*_0, x) = f^*_{\bm{\varepsilon}}(y| x)\kappa(a^*_0| x)\zeta_{\bm{\varepsilon}}(x).
\end{align*}

\paragraph{Differentiablity and score functions of the parametric submodels for the observed-data distribution.}
Next, we confirm the differentiablity of $\overline{r}^{\kappa}_{\bm{\varepsilon}}(y, d, x)$. 
Because $\sqrt{\overline{r}^{\kappa}_{\bm{\varepsilon}}(y, d, x)}$ is continuously differentiable for every $y, x$ given $d\in[K]$, and $\int \left( \frac{\dot{\overline{r}}^{\kappa}_{\bm{\varepsilon}}(y, d, x)}{\overline{r}^{\kappa}_{\bm{\varepsilon}}(y, d, x)}\right)^2\overline{r}^{\kappa}_{\bm{\varepsilon}}(y, d, x) \mathrm{d}m$ are well defined and continuous in $\bm{\varepsilon}$, where $m$ is some reference measure on $(y, d, x)$, from Lemma~7.6 of \citet{Vaart1998}, we see that the parametric submodel has the score function $g^a$ in the $L_2$ sense; that is, the density $\overline{r}^{\kappa}_{\bm{\varepsilon}}(y, d, x)$ is differentiable in quadratic mean (DQM): for $a\in[K]\backslash\{a^*_0\}$, $d\in\{a^*_0, a\}$, and any $(\varepsilon^b)_{b\in[K]\backslash\{a^*_0, a\}}$,

Then we show the differentiablity in quadratic mean at $\varepsilon^a = 0$ of $\overline{r}^{\kappa, 1/2}_{\bm{\varepsilon}}$ in the following lemma. 
We show the proof in Appendix~\ref{appdx:proof_lem:trans}. 
\begin{lemma}\label{lem:trans} Under Assumption~\ref{asm:bounded_mean_variance}, for $a\in[K]\backslash\{a^*_0\}$ and $d\in\{a^*_0, a\}$, 
\begin{align}
\label{eq:trans}
    &\int\left[ \overline{r}^{\kappa\ 1/2}_{\bm{\varepsilon}}(y, d, x) -  \overline{r}^{\kappa\ 1/2}_0(y, d, x) - \frac{1}{2}\varepsilon^a S^a(y, d, x)\overline{r}^{\kappa\ 1/2}_0(y, d, x) \right]^2\mathrm{d}m = o\left(\varepsilon^a\right).
\end{align}
where 
\begin{align}
    S^a(y, d, x) = \mathbb{E}_{P_0}\left[g^a\left(\phi^*_\tau(Y^*_t, X_t), \phi^a_\tau(Y^a_t, X_t), X_t\right)  | \mathcal{T}^d(Y^*_t, Y^*_t, X_t) = (y, x)\right].
\end{align}
\end{lemma}

In the following section, we specify a measurable function $S^a$ wigh $g^a$, satisfying the conditions \eqref{eq:parametric_submodel} and \eqref{eq:const_ate}, which corresponds to a score function of $\overline{r}^{\kappa}_{0}(y, a, x)$ and $\overline{r}^{\kappa}_{\bm{\varepsilon}}(y, a^*_0, x) $ for each $a\in[K]\backslash\{a^*_0\}$. To clarify the relationship between $g^a$ and a score function, for each $a\in[K]\backslash\{a^*_0\}$, and any $(\varepsilon^b)_{b\in[K]\backslash\{a^*_0, a\}}$, we denote the score function as
\begin{align*}
    S^a(y, d, x) &= \frac{\partial}{\partial \varepsilon^a}\Big|_{\varepsilon^a=0} \log \overline{r}^{\kappa}_{\bm{\varepsilon}}(y, d, x) = 
    \mathbbm{1}[d = a^*_0]S^{a, a^*_0}_{f}(y|x) + \mathbbm{1}[d = a]S^{a, a}_{f}(y|x) + S^{a}_{\zeta}(x)\quad \forall d\in\{a^*_0, a\},\\
    S^a(y, d, x) &= 0\quad \forall d\in[K]\backslash\{a^*_0, a\}.
\end{align*}
Note that $\frac{\partial}{\partial \varepsilon^a} \log\kappa(a| x) = 0$. 

\paragraph{Definition of the tangent set.} Recall that parametric submodels and corresponding score functions are not unique. Here, we consider a set of score functions. 
For a set of the parametric submodels $\left\{\overline{R}^{\kappa}_{\bm{\varepsilon}}: \bm{\varepsilon}\in\Theta^{K-1}\right\}$, we obtain a corresponding set of score functions $g^a$ in the Hilbert space $L_2(\overline{R}_Q)$, which we call a tangent set of $\mathcal{R}$ at $\overline{R}^{\kappa}_0$ and denote it by $\dot{\mathcal{R}}^a$. Because $\mathbb{E}_{\overline{R}^{\kappa}_0}[(g^a(\phi^{A_t}_\tau(Y_t, X_t), A_t, X_t))^2]$ is automatically finite, the tangent set can be identified with a subset of the Hilbert space $L_2(\overline{R}^{\kappa}_0)$, up to equivalence classes. For our parametric submodels, the tangent set at $\overline{R}^{\kappa}_0$ in $L_2(\overline{R}^{\kappa}_0)$ is given as
\begin{align*}
    \dot{\mathcal{R}}^a = \left\{
    \mathbbm{1}[d = a^*_0]S^{a,  a^*_0}_{f}(y|x) + \mathbbm{1}[d = a]S^{a, a}_{f}(y|x) + S^a_{\zeta}(x)
    \right\}.
\end{align*}

\subsection{Alternative Bandit Model}
\label{sec:alter}
Then, we define a class of alternative hypotheses. To derive a tight lower bound by applying the change-of-measure arguments, we use an appropriately defined alternative hypothesis. Our alternative hypothesis is defined using the parametric submodel of $P_0$ as follows:
\begin{definition}
Let $\Alt(P_0) \subset \mathcal{P}$ be alternative bandit models such that for all $Q \in \Alt(P_0)$, $a^*(Q) \neq a^*_0$, and $\overline{R}^{\kappa_{T, Q}}_{\bm{\varepsilon}} = \overline{R}^{\kappa_{T, Q}}_Q$, where $\bm{\varepsilon} = (\varepsilon^a)_{a\in[K]\backslash\{a^*_0\}}$, $\varepsilon^a =  \left(\mu^{a^*_0}(Q) - \mu^a(Q)\right) - \left(\mu^*_0 - \mu^a_0\right)$. 
\end{definition}
This also implies that for all $Q \in \Alt(P_0)$, for all $a\in[K]\backslash\{a^*_0\}$, $\mu^*_0 - \mu^a_0 > 0$ and there exists $a\in[K]\backslash\{a^*_0\}$ such that $\mu^{a^*_0}(Q) - \mu^a (Q) < 0$. Let $\mu^{a^*_0}(Q)$ be denoted by $\mu^*(Q)$.

\subsection{Derivation of a Lower Bound of the Probability of Misidentification}
\label{sec:deriv_lower}
Here, we derive a lower bound for the probability of misidentification as follows, which is refined later:
\begin{lemma}
\label{lem:semipara_bandit_lower_bound}
Under Assumption~\ref{asm:bounded_mean_variance}, for any $P_0 \in \mathcal{P}$ and $Q \in  \Alt(P_0)$, any consistent and asymptotically invariant strategy satisfies
\begin{align*}
    &\limsup_{T\to\infty}-\frac{1}{T}\log \mathbb{P}_{ P_0 }(\widehat{a}_T \neq a^*_0)\\
    &\leq \sup_{w \in \mathcal{W}}\min_{a\in[K]\backslash\{a^*_0\}}\inf_{\varepsilon^a < - \left(\mu^{*}_0 - \mu^a_0\right)}\sum_{a\in\{a^*_0, a\}}\mathbb{E}_{\overline{R}_{\bm{\varepsilon}}}\left[\mathbb{E}_{\overline{R}_{\bm{\varepsilon}}}\left[\log \frac{f^a_{\bm{\varepsilon}}(Y^a_t| X_t)\zeta_{\bm{\varepsilon}}(X)}{f^a_{P_0}(Y^a_t| X)\zeta_{P_0}(X)}|X_t\right]w(a| X_t)\right].
\end{align*}
\end{lemma}

\begin{proof}[Proof of Lemma~\ref{lem:semipara_bandit_lower_bound}] For each $Q \in  \Alt(P_0)$, $\mathbb{E}_{Q}[L_T] \ge \sup_{\mathcal{E} \in \mathcal{F}_T} d(\mathbb{P}_{Q}(\mathcal{E}),\mathbb{P}_{P_0}(\mathcal{E}))$ holds from Proposition~\ref{lem:data_proc_inequality}. 
Let $\mathcal{E} = \{\widehat{a}_T = a^*_0\}$. 
Because we assume that the strategy is consistent and asymptotically invariant for both models and from the definition of $\Alt(P_0)$, for each $\epsilon_1  \in (0, 1)$ and $\epsilon_2  > 0$,
there exists $t_0 (\epsilon_1, \epsilon_2)$ such that for all $T \geq t_0 (\epsilon_1)$, $\mathbb{P}_{Q}(\mathcal{E}) \le \epsilon_1 \le\mathbb{P}_{ P_0 }(\mathcal{E})$, and $\kappa_{T, Q}(a| X_t) \leq \kappa_{T, P}(a| X_t) + \epsilon_2$. 
Then, for all $T \ge t_0(\epsilon_1, \epsilon_2)$, $\mathbb{E}_{Q}[L_T]
\ge d (\epsilon_1, 1 -\mathbb{P}_{P_0}(\widehat{a}_T \neq a^*_0)) =  \epsilon \log \frac{\epsilon}{1 -\mathbb{P}_{ P_0 }(\widehat{a}_T \neq a^*_0)} + (1 - \epsilon_1) \log \frac{1 - \epsilon_1}{\mathbb{P}_{P_0}(\widehat{a}_T \neq a^*_0)}$. 
Then, taking the limsup and letting $\epsilon_1, \epsilon_2 \to 0$, 
\begin{align*}
    &\limsup_{T\to\infty}-\frac{1}{T}\log \mathbb{P}_{ P_0 }(\widehat{a}_T \neq a^*_0)\leq \inf_{Q \in \Alt(P_0)}\limsup_{T\to\infty}\frac{1}{T}\mathbb{E}_{Q}[L_T]\\
    &\leq \inf_{Q \in \Alt(P_0)}\limsup_{T\to\infty}\sum_{a\in[K]}\mathbb{E}_{Q}\left[\mathbb{E}_{Q}\left[\log \frac{f^a_{Q}(Y^a_t| X_t)\zeta_{Q}(X)}{f^a_{P_0}(Y^a_t| X)\zeta_{P_0}(X)}|X_t\right]\kappa_{T, P}(a| X_t)\right]\\
    &\leq \inf_{Q \in \Alt(P_0)}\limsup_{T\to\infty}\sup_{w\in\mathcal{W}}\sum_{a\in[K]}\mathbb{E}_{Q}\left[\mathbb{E}_{Q}\left[\log \frac{f^a_{Q}(Y^a_t| X_t)\zeta_{Q}(X)}{f^a_{P_0}(Y^a_t| X)\zeta_{P_0}(X)}|X_t\right]w(a| X_t)\right]\\
    &\leq \sup_{w \in \mathcal{W}}\inf_{Q \in \Alt(P_0)}\sum_{a\in[K]}\mathbb{E}_{Q}\left[\mathbb{E}_{Q}\left[\log \frac{f^a_{Q}(Y^a_t| X_t)\zeta_{Q}(X)}{f^a_{P_0}(Y^a_t| X)\zeta_{P_0}(X)}|X_t\right]w(a| X_t)\right]\\
    &= \sup_{w \in \mathcal{W}}\min_{a\in[K]\backslash\{a^*_0\}}\inf_{\substack{Q \in \mathcal{P}\\ \mu^*(Q) - \mu^a(Q) < 0} }\sum_{a\in[K]}\mathbb{E}_{Q}\left[\mathbb{E}_{Q}\left[\log \frac{f^a_{Q}(Y^a_t| X_t)\zeta_{Q}(X)}{f^a_{P_0}(Y^a_t| X)\zeta_{P_0}(X)}|X_t\right]w(a| X_t)\right].
\end{align*}
By using $\varepsilon^a = \left(\mu^*(Q) - \mu^a(Q)\right) - \left(\mu^*_0 - \mu^a_0\right) < - \left(\mu^*_0 - \mu^a_0\right)$ for the parametric submodel,
\begin{align*}
    &\sup_{w \in \mathcal{W}}\min_{a\in[K]\backslash\{a^*_0\}}\inf_{\substack{Q \in \mathcal{P}\\ \mu^*(Q) - \mu^a(Q) < 0} }\sum_{a\in[K]}\mathbb{E}_{Q}\left[\mathbb{E}_{Q}\left[\log \frac{f^a_{Q}(Y^a_t| X_t)\zeta_{Q}(X)}{f^a_{P_0}(Y^a_t| X)\zeta_{P_0}(X)}|X_t\right]w(a| X_t)\right]\nonumber\\
    &= \sup_{w \in \mathcal{W}}\min_{a\in[K]\backslash\{a^*_0\}}\inf_{\substack{\varepsilon^a < - \left(\mu^*_0 - \mu^a_0\right)\nonumber\\
    \forall b\in[K]\backslash\{a^*_0, a\}\ \varepsilon^b = 0} }\sum_{a\in[K]}\mathbb{E}_{\overline{R}_{\bm{\varepsilon}}}\left[\mathbb{E}_{\overline{R}_{\bm{\varepsilon}}}\left[\log \frac{f^a_{\bm{\varepsilon}}(Y^a_t| X_t)\zeta_{\bm{\varepsilon}}(X)}{f^a_{P_0}(Y^a_t| X)\zeta_{P_0}(X)}|X_t\right]w(a| X_t)\right]\nonumber\\
    &= \sup_{w \in \mathcal{W}}\min_{a\in[K]\backslash\{a^*_0\}}\inf_{\varepsilon^a < - \left(\mu^{*}_0 - \mu^a_0\right)}\sum_{a\in\{a^*_0, a\}}\mathbb{E}_{\overline{R}_{\bm{\varepsilon}}}\left[\mathbb{E}_{\overline{R}_{\bm{\varepsilon}}}\left[\log \frac{f^a_{\bm{\varepsilon}}(Y^a_t| X_t)\zeta_{\bm{\varepsilon}}(X)}{f^a_{P_0}(Y^a_t| X)\zeta_{P_0}(X)}|X_t\right]w(a| X_t)\right].
\end{align*}
The proof is complete.
\end{proof}

\subsection{Semiparametric Likelihood Ratio}
\label{sec:semiparametric_lratio}
For $a\in [K]\backslash\{a^*_0\}$, let $\bm{\varepsilon}$ be $(0,\dots, 0, \varepsilon^a, 0, \dots, 0)$. Let us also define 
\begin{align*}
    L^{a}_T &= \sum^T_{t=1} \left\{\mathbbm{1}[ A_t = a^*_0] \log \left(\frac{f^*_{\bm{\varepsilon}}(Y^*_{t}| X_t)}{f^*_{P_0}(Y^*_{t}| X_t)}\right) + \mathbbm{1}[ A_t = a] \log \left(\frac{f^a_{\bm{\varepsilon}}(Y^a_{t}| X_t)}{f^a_{P_0}(Y^a_{t}| X_t)}\right) + \log \left(\frac{\zeta_{\bm{\varepsilon}}(X_t)}{\zeta_{P_0}(X_t)}\right)\right\}.
\end{align*}
We consider series expansion of the log-likelihood $L^a_T$ defined between $P_0\in\mathcal{P}$ and $Q\in\Alt(P_0)$, where $\mathbb{E}_{Q}\left[L_T\right]$ works as a lower bound for the probability of misidentification as shown in Section~\ref{sec:deriv_lower}. We consider an approximation of $L^a_T$ under a small-gap regime (small $\mu^*_0 - \mu^a_0$), which is upper-bounded by the variance of the score function. Our argument is 
inspired by that in \citet{Murphy1997}.

Then, we prove the following lemma:
\begin{lemma}
\label{lem;taylor_exp_semipara}
Suppose that Assumption~\ref{asm:bounded_mean_variance} holds. For $P_0\in\mathcal{P}$, $Q\in\Alt(P_0)$, and each $a\in[K]\backslash\{a^*_0\}$, 
\begin{align*}
    \frac{1}{T}\mathbb{E}_{Q}\left[L^{a}_T\right] = \frac{\left(\varepsilon^a\right)^2}{2}\mathbb{E}_{P_0}\left[ \left(S^a(Y_t, A_t, X_t)\right)^2\right]  + o\left(\left(\varepsilon^a\right)^2\right). 
\end{align*}
\end{lemma}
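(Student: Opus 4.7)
The plan is to recognize $T^{-1}\mathbb{E}_Q[L^a_T]$ as the Kullback--Leibler divergence between the observed-data laws $\overline{R}^{\kappa_{T,Q}}_{\bm{\varepsilon}}$ and $\overline{R}^{\kappa_{T,Q}}_0$ along the parametric submodel constructed in Section~\ref{sec:para_sub_obs}, and then apply the classical second-order expansion of the log-likelihood ratio along a DQM path. First I would invoke Lemma~\ref{lem:kauf_lemma_extnd_infinite} with $P = Q$, using that $Q \in \Alt(P_0)$ forces $\overline{R}^{\kappa_{T,Q}}_Q = \overline{R}^{\kappa_{T,Q}}_{\bm{\varepsilon}}$. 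Since the sampling factor $\kappa_{T,Q}$ cancels inside $\log(\overline{r}^{\kappa_{T,Q}}_{\bm{\varepsilon}}/\overline{r}^{\kappa_{T,Q}}_0)$, this yields
\[
\frac{1}{T}\mathbb{E}_Q[L^a_T] \;=\; \int \overline{r}^{\kappa_{T,Q}}_{\bm{\varepsilon}}(y,d,x)\,\log\frac{\overline{r}^{\kappa_{T,Q}}_{\bm{\varepsilon}}(y,d,x)}{\overline{r}^{\kappa_{T,Q}}_0(y,d,x)}\,dm \;=\; \mathrm{KL}\bigl(\overline{R}^{\kappa_{T,Q}}_{\bm{\varepsilon}} \,\big\|\, \overline{R}^{\kappa_{T,Q}}_0\bigr),
\]
which decouples the temporal dependence in $L^a_T$ and reduces the target to a one-sample KL divergence that is independent of $T$.

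Next, I would exploit the explicit submodel form $\overline{r}^{\kappa_{T,Q}}_{\bm{\varepsilon}}/\overline{r}^{\kappa_{T,Q}}_0 = 1 + \varepsilon^a g^a(\phi^d_\tau(y,x),d,x)$, with $|\varepsilon^a g^a| < 1$ enforced by the admissibility condition $\varepsilon^a < c(\tau)$. Substituting the pointwise expansion $\log(1+u) = u - u^2/2 + O(u^3)$ and integrating against $(1+\varepsilon^a g^a)\overline{r}^{\kappa_{T,Q}}_0$ produces, after collecting the $\varepsilon^a g^a \cdot \varepsilon^a g^a$ cross term that halves the negative quadratic contribution,
\[
\mathrm{KL}\bigl(\overline{R}^{\kappa_{T,Q}}_{\bm{\varepsilon}} \,\big\|\, \overline{R}^{\kappa_{T,Q}}_0\bigr) \;=\; \varepsilon^a\,\mathbb{E}_{\overline{R}^{\kappa_{T,Q}}_0}[g^a] + \tfrac{(\varepsilon^a)^2}{2}\,\mathbb{E}_{\overline{R}^{\kappa_{T,Q}}_0}[(g^a)^2] + O((\varepsilon^a)^3).
\]
The first-order term vanishes because differentiating the identity $\int \overline{r}^{\kappa_{T,Q}}_{\bm{\varepsilon}}\,dm \equiv 1$ at $\varepsilon^a = 0$ yields $\mathbb{E}_{\overline{R}^{\kappa_{T,Q}}_0}[g^a] = 0$, the standard mean-zero property of a score.

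Finally, I would pass from the truncated $g^a(\phi^d_\tau,\cdot)$ to the untruncated score $S^a$. As $\tau = c^{-1}(\varepsilon^a) \to \infty$ with $\varepsilon^a \to 0$, we have $\phi^d_\tau(y,x) \to y$ pointwise, so dominated convergence---using the moment bounds in Assumption~\ref{asm:bounded_mean_variance} and the finiteness of $\mathbb{E}_{P_0}[(S^a)^2]$ inherited from the DQM relation \eqref{eq:trans}---gives $\mathbb{E}_{\overline{R}^{\kappa_{T,Q}}_0}[(g^a)^2] = \mathbb{E}_{P_0}[(S^a(Y_t,A_t,X_t))^2] + o(1)$. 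Assembling these pieces and absorbing the $O((\varepsilon^a)^3)$ remainder into $o((\varepsilon^a)^2)$ proves the stated expansion. I expect the main obstacle to lie precisely in this final coordination between $\tau \to \infty$ and $\varepsilon^a \to 0$: one must control $\int O((\varepsilon^a g^a)^3)\,\overline{r}^{\kappa_{T,Q}}_{\bm{\varepsilon}}\,dm$ to be $o((\varepsilon^a)^2)$ even though $|g^a|$ may grow as the truncation widens, and simultaneously show that the substitution of $g^a(\phi^d_\tau,\cdot)$ by $S^a$ in the quadratic term costs only $o(1)$. This is the familiar technical core of DQM arguments for unbounded nuisance components in semiparametric theory.
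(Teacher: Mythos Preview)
Your approach is correct and is, at its core, the same second-order expansion of the expected log-likelihood ratio along the parametric submodel that the paper carries out. The organization differs slightly: the paper first Taylor-expands the random log-likelihood $\ell^a_{\bm{\varepsilon}}$ in $\varepsilon^a$ (yielding coefficients $S^a$ and $-(S^a)^2$), takes $\mathbb{E}_Q$, and then invokes the DQM relation \eqref{eq:trans} to convert $\mathbb{E}_Q[S^a]$ and $\mathbb{E}_Q[(S^a)^2]$ into $\mathbb{E}_{P_0}$-moments. You instead write $\overline{r}^{\kappa_{T,Q}}_{\bm{\varepsilon}} = (1+\varepsilon^a g^a)\overline{r}^{\kappa_{T,Q}}_0$ from the outset, so the integral is against $\overline{r}_0$ from the start and only the elementary expansion of $\log(1+u)$ is needed. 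Your route is a bit more direct and bypasses the explicit DQM step; the paper's route makes clearer why the same argument would go through for a general DQM submodel not of the linear form $(1+\varepsilon g)$.

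One point to flag: your final step, passing from the ``truncated $g^a(\phi^d_\tau,\cdot)$'' to an ``untruncated $S^a$'', is unnecessary here. In the paper's notation $S^a(y,d,x)$ \emph{is} $g^a(\phi^d_\tau(y,x),d,x)$ (see the line following \eqref{eq:trans}), so the lemma is stated and proved for the truncated score with $\tau$ fixed. The limit $\tau\to\infty$ is taken only later, in the final lines of the proof of Theorem~\ref{thm:semipara_bandit_lower_bound}, after $\mu^*_0-\mu^a_0\to 0$. With $\tau$ fixed, $|g^a|$ is bounded, so your $O((\varepsilon^a)^3)$ remainder is unproblematic and the concern you raise about coordinating $\tau\to\infty$ with $\varepsilon^a\to 0$ does not arise at the level of this lemma.
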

To prove this lemma, for $a\in[K]\backslash\{a^*_)\}$ and $d\in[K]$, we define
\begin{align*}
    \ell^{a}_{\bm{\varepsilon}}(y, d, x) &= \mathbbm{1}[d = a^*_0]\log f^*_{\bm{\varepsilon}}(y|x) + \mathbbm{1}[d = a]\log f^a_{\varepsilon^a}(y|x) + \log \zeta_{\bm{\varepsilon}}(x).
\end{align*}
Note that if $\varepsilon^a = 0$, then
\begin{align*}
    \ell^{a}_{\bm{\varepsilon}}(y, d, x) &= \mathbbm{1}[d = a^*_0]\log f^*_{P_0}(y|x) + \mathbbm{1}[d = a]\log f^a_{P_0}(y|x) + \log \zeta_{P_0}(x).
\end{align*}

\begin{proof}[Proof of Lemma~\ref{lem;taylor_exp_semipara}]
By using the parametric submodel defined in the previous section, from the series expansion, 
\begin{align*}
   L^{a}_T &= \sum^T_{t=1} \left\{\mathbbm{1}[ A_t = a^*_0] \log \left(\frac{f^*_{\bm{\varepsilon}}(Y^*_{t}| X_t)}{f^*_{P_0}(Y^*_{t}| X_t)}\right) + \mathbbm{1}[ A_t = a] \log \left(\frac{f^a_{\bm{\varepsilon}}(Y^a_{t}| X_t)}{f^a_{P_0}(Y^a_{t}| X_t)}\right) + \log \left(\frac{\zeta_{\bm{\varepsilon}}(X_t)}{\zeta_{P_0}(X_t)}\right)\right\}\\
   &= \sum^T_{t=1}\left\{\frac{\partial}{\partial \varepsilon^a} \Big|_{\varepsilon^a=0} \ell^a_{\bm{\varepsilon}}(Y_t, A_t, X_t)\varepsilon^a + \frac{\partial^2}{\partial (\varepsilon^a)^2}\Big|_{\varepsilon^a=0} \ell^a_{\bm{\varepsilon}}(Y_t, A_t, X_t)\frac{\left(\varepsilon^a\right)^2}{2} + C\left(\varepsilon^a\right)^3\right\},
\end{align*}
where $C$ is a constant, independent from $\varepsilon^a$.
Here, we fix $(\varepsilon^b)_{b\in[K]\backslash\{a^*_0, a\}}$, where $\varepsilon^b = 0$. Note that
\begin{align*}
    &\frac{\partial}{\partial \varepsilon^a } \Big|_{\varepsilon^a = 0} \ell^a_{\bm{\varepsilon}}(y, d, x) = S^a(y, d, x)\\
    &\frac{\partial}{\partial (\varepsilon^a)^2}\Big|_{\varepsilon^a = 0} \ell^a_{\bm{\varepsilon}}(y, d, x) = - \left(S^a(y, d, x)\right)^2.
\end{align*}
Let $\overline{R}^{\kappa_{T, Q}}_{\bm{\varepsilon}} = \overline{R}_{\bm{\varepsilon}}$, $\overline{r}^{\kappa_{T, Q}}_{\bm{\varepsilon}}(y, d, x) = \overline{r}_{\bm{\varepsilon}}(y, d, x)$, and  $\overline{r}^{\kappa_{T, Q}}_{0}(y, d, x) = \overline{r}_{0}(y, d, x)$. Then,
\begin{align*}
    &\mathbb{E}_{Q}\left[ S^a(Y_t, A_t, X_t)\right] = \mathbb{E}_{\overline{R}_{\bm{\varepsilon}}}\left[ S^a(Y_t, A_t, X_t)\right]\\
    &=\mathbb{E}_{\overline{R}_{\bm{\varepsilon}}}\left[ S^a(Y_t, A_t, X_t)\right] - \sum_{d\in[K]}\int S^a(y, d, x) \left(1 + \frac{1}{2}\varepsilon^a S^a(y, d, x)\right)^2\overline{r}_{0}(y, d, x) \mathrm{d}y\mathrm{d}x\\
    &\ \ \ +  \sum_{d\in[K]}\int S^a(y, d, x) \left(1 + \frac{1}{2}\varepsilon^a S^a(y, d, x)\right)^2\overline{r}_{0}(y, d, x) \mathrm{d}y\mathrm{d}x\\
    &= \sum_{d\in[K]}\int S^a(y, d, x) \left\{\overline{r}_{\bm{\varepsilon}}(y, d, x) - \left(1 + \frac{1}{2}\varepsilon^a S^a(y, d, x)\right)^2\overline{r}_{0}(y, d, x)\right\} \mathrm{d}y\mathrm{d}x\\
    &\ \ \ +  \sum_{d\in[K]}\int S^a(y, d, x) \left(1 + \frac{1}{2}\varepsilon^a S^a(y, d, x)\right)^2\overline{r}_{0}(y, d, x) \mathrm{d}y\mathrm{d}x\\
    &= \sum_{d\in[K]}\int S^a(y, d, x) \left\{\overline{r}_{\bm{\varepsilon}}(y, d, x) - \left(1 + \frac{1}{2}\varepsilon^a S^a(y, d, x)\right)^2\overline{r}_{0}(y, d, x)\right\} \mathrm{d}y\mathrm{d}x\\
    &\ \ \ + \mathbb{E}_{P_0}\left[ S^a(Y_t, A_t, X_t)\right] + \varepsilon^a \mathbb{E}_{P_0}\left[\left( S^a(Y_t, A_t, X_t)\right)^2\right] + \frac{1}{4}(\varepsilon^a)^2 \mathbb{E}_{P_0}\left[\left( S^a(Y_t, A_t, X_t)\right)^2\right] ,
\end{align*}
where we used 
\begin{align*}
    &\sum_{d\in[K]}\int S^a(y, d, x)\overline{r}_{0}(y, d, x) \mathrm{d}y\mathrm{d}x\\
    &= \sum_{d\in[K]}\int\left\{\mathbbm{1}[d = a^*_0]S^{a, a^*_0}_{f}(y|x) + \mathbbm{1}[d = a]S^{a, a}_{f}(y|x) + S^{a}_{\zeta}(x)\right\}\overline{r}_{0}(y, d, x) \mathrm{d}y\mathrm{d}x.
\end{align*}
Then, because the density $\overline{r}_{\bm{\varepsilon}}(y, d, x)$ is DQM \eqref{eq:trans}, as $\varepsilon^a \to 0$, 
\begin{align*}
    \mathbb{E}_{Q}\left[ S^a(Y_t, A_t, X_t)\right] - \mathbb{E}_{P_0}\left[ S^a(Y_t, A_t, X_t)\right] -  \varepsilon^a \mathbb{E}_{P_0}\left[\left( S^a(Y_t, A_t, X_t)\right)^2\right] =  o(\varepsilon^a).
\end{align*}
Similarly, 
\begin{align*}
    -\mathbb{E}_{Q}\left[\left( S^a(Y_t, A_t, X_t)\right)^2\right]  + \mathbb{E}_{P_0}\left[\left( S^a(Y_t, A_t, X_t)\right)^2\right] - \varepsilon^a \mathbb{E}_{P_0}\left[\left( S^a(Y_t, A_t, X_t)\right)^3\right]= o(\varepsilon^a).
\end{align*}

By using these expansions, we approximate $\mathbb{E}_{Q}\left[L_T\right]$. Here, by definition, $\mathbb{E}_{P_0}\left[ S^a(Y_t, A_t, X_t)\right] = 0$. Then, we approximate the likelihood ratio as follows:
\begin{align*}
    \frac{1}{T}\mathbb{E}_{Q}[L^{a}_T] - \frac{\left(\varepsilon^a\right)^2}{2}\mathbb{E}_{P_0}\left[ \left(S^a(Y_t, A_t, X_t)\right)^2\right] = o\left(\left( \varepsilon^a\right)^2\right).
\end{align*}
\end{proof}

\subsection{Observed-Data Semiparametric Efficient Influence Function}
\label{sec:oberved-data}
Our remaining task is to specify the score function $S^a$. Because there can be several score functions for our parametric submodel due to directions of the derivative, we find a parametric submodel that has a score function with the largest variance, called a least-favorable parametric submodel \citep{Vaart1998}. 

In this section, instead of the original observed-data bandit model $\overline{R}^{\kappa_{T, Q}}_{\bm{\varepsilon}}$, we consider an alternative observed-data bandit model  $\overline{R}^{\kappa_{T, Q}\,\dagger}_{0}$, which is a distribution of $\{(\phi^{A_t}_\tau(Y_t, X_t), A_t, X_t)\}^T_{t=1}$. Let $\overline{R}^{\kappa_{T, Q}\,\dagger}_{\bm{\varepsilon}}$ be parametric submodel defined as well as Section~\ref{sec:para_sub_obs}, $\mathcal{R}^{\kappa_{T,Q}\,\dagger}_{\bm{\varepsilon}}$ be a set of all $\overline{R}^{\kappa_{T, Q}\,\dagger}_{\bm{\varepsilon}}$, and $\overline{r}^{\kappa_{T,Q}\, \dagger}_{\bm{\varepsilon}}(y, d, x) = f^{d, \dagger}_{\varepsilon^d}(y| x)\kappa_{T, Q}(d| x)\zeta_{\bm{\varepsilon}}(x)$. For each $a\in[K]\backslash\{a^*_0\}$, let $S^{a\,\dagger}(y, d, x)$ and  $\dot{\mathcal{R}}^{a\,\dagger}$ be a corresponding score function and tangent space, respectively. 

As a preparation, we define a parameter $\mu^*(Q) - \mu^a(Q)$ as a function $\psi^a: \mathcal{R}^{\kappa_{T,Q}\,\dagger}_{\bm{\varepsilon}}\to \mathbb{R}$ such that $\psi^a(\overline{R}^{\kappa_{T,Q}\,\dagger}_{\bm{\varepsilon}})=\mu^*_0 - \mu^a_0 + \varepsilon^a$.
The information bound for $\psi^a\left(\overline{R}^{\kappa_{T,Q}\,\dagger}_{\bm{\varepsilon}}\right)$ of interest is called semiparametric efficiency bound. 
Let $ \overline{\mathrm{lin}}\dot{\mathcal{R}}^{a\,\dagger}$ be the closure of the tangent space. 
Then, $\psi^a\left(\overline{R}^{\kappa_{T,Q}\,\dagger}_{\bm{\varepsilon}}\right) = \mu^*_0 - \mu^a_0 + \varepsilon^a$ is pathwise differentiable relative to the tangent space $\dot{\mathcal{R}}^{a\,\dagger}$ if and only if there exists a function $\widetilde{\psi}^a\in\overline{\mathrm{lin}}\dot{\mathcal{R}}^{a\,\dagger}$ such that
\begin{align*}
    &\frac{\partial}{\partial \varepsilon^a}\Big|_{\varepsilon^a=0} \psi^a\left(\overline{R}^{\kappa_{T,Q}\,\dagger}_{\bm{\varepsilon}}\right) \left(= \frac{\partial}{\partial \varepsilon^a}\Big|_{\varepsilon^a=0}\Big\{\mu^*_0 - \mu^a_0 + \varepsilon^a\Big\} = 1\right) = \mathbb{E}_{\overline{R}^{\kappa_{T,Q}\,\dagger}_{\bm{\varepsilon}}}\left[ \widetilde{\psi}^a(Y_t, A_t, X_t) S^{a\,\dagger}(Y_t, A_t, X_t) \right].
\end{align*}
This function $\widetilde{\psi}^a$ is called the \emph{semiparametric influence function}. 

Then, we prove the following lemma on the lower bound for $\mathbb{E}_{P_0}\left[ \left(S^a(Y_t, A_t, X_t)\right)^2 \right]$, which is called the semiparametric efficiency bound:
\begin{lemma}
\label{lem:upperbound_semipara}
Any score function $S^{a\,\dagger}\in\dot{\mathcal{R}}^{a\,\dagger}$ satisfies
\begin{align*}
    \mathbb{E}_{P_0}\left[ \left(S^{a\,\dagger}(Y_t, A_t, X_t)\right)^2 \right]\geq \frac{1}{\mathbb{E}_{P_0}\left[\left(\widetilde{\psi}^a(Y_t, A_t, X_t)\right)^2\right]}.
\end{align*}
\end{lemma}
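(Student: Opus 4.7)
The plan is to recognize this as a textbook application of the Cauchy--Schwarz inequality in $L_2(P_0)$ applied to the defining identity of the semiparametric influence function, mirroring the standard information bound derivation in Chapter~25 of \citet{Vaart1998}.

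First, I would write down the pathwise differentiability identity at the base point $\bm{\varepsilon} = 0$. The definition stated just above the lemma gives, for every $S^{a\,\dagger}\in\dot{\mathcal{R}}^{a\,\dagger}$,
\begin{align*}
1 \;=\; \frac{\partial}{\partial \varepsilon^a}\Big|_{\varepsilon^a=0} \psi^a\!\left(\overline{R}^{\kappa_{T,Q}\,\dagger}_{\bm{\varepsilon}}\right) \;=\; \mathbb{E}_{P_0}\!\left[ \widetilde{\psi}^a(Y_t, A_t, X_t)\, S^{a\,\dagger}(Y_t, A_t, X_t) \right],
\end{align*}
where on the left I used that $\psi^a\!\left(\overline{R}^{\kappa_{T,Q}\,\dagger}_{\bm{\varepsilon}}\right) = \mu^*_0 - \mu^a_0 + \varepsilon^a$ by construction of the submodel, and on the right I evaluated the expectation at $\bm{\varepsilon}=0$ so that the observed-data law reduces to that induced by $P_0$. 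Here I rely on $\widetilde{\psi}^a \in \overline{\mathrm{lin}}\,\dot{\mathcal{R}}^{a\,\dagger} \subset L_2(P_0)$ and on $S^{a\,\dagger} \in \dot{\mathcal{R}}^{a\,\dagger} \subset L_2(P_0)$, both of which are ensured by the second-moment conditions imposed on the parametric submodels (cf.\ Assumption~\ref{asm:bounded_mean_variance} and the construction in Section~\ref{sec:para_sub_obs}).

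Second, I would apply the Cauchy--Schwarz inequality to the inner product $\langle \widetilde{\psi}^a, S^{a\,\dagger}\rangle_{L_2(P_0)}$:
\begin{align*}
1 \;=\; \left(\mathbb{E}_{P_0}\!\left[\widetilde{\psi}^a(Y_t, A_t, X_t)\, S^{a\,\dagger}(Y_t, A_t, X_t)\right]\right)^2 \;\leq\; \mathbb{E}_{P_0}\!\left[\left(\widetilde{\psi}^a(Y_t, A_t, X_t)\right)^2\right] \cdot \mathbb{E}_{P_0}\!\left[\left(S^{a\,\dagger}(Y_t, A_t, X_t)\right)^2\right].
\end{align*}
Dividing by $\mathbb{E}_{P_0}[(\widetilde{\psi}^a)^2]$ (which is strictly positive, otherwise the identity $1 = \mathbb{E}_{P_0}[\widetilde{\psi}^a S^{a\,\dagger}]$ cannot hold) yields the lemma. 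The case of equality occurs exactly when $S^{a\,\dagger}$ is proportional to $\widetilde{\psi}^a$, which is the trademark of a \emph{least-favorable} submodel and is precisely what will be engineered in Section~\ref{sec:specification-score} to produce a score with the minimal achievable information $1/\mathbb{E}_{P_0}[(\widetilde{\psi}^a)^2]$.

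There is no substantive obstacle: the only subtlety is verifying that the pathwise differentiability identity genuinely holds for every $S^{a\,\dagger}\in\dot{\mathcal{R}}^{a\,\dagger}$ (not just for those in a dense subset), which follows from continuity of the inner product and the fact that $\widetilde{\psi}^a$ was defined to represent the bounded linear functional $\partial_{\varepsilon^a}\psi^a$ on $\overline{\mathrm{lin}}\,\dot{\mathcal{R}}^{a\,\dagger}$. No further properties of the parametric submodel, the sampling ratio $\kappa_{T,Q}$, or the truncation $\phi^d_\tau$ are needed at this step; those enter only when the explicit form of $\widetilde{\psi}^a$ is computed in the next subsection.
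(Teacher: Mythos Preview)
Your proposal is correct and follows exactly the same route as the paper: invoke the pathwise differentiability identity $1=\mathbb{E}_{P_0}[\widetilde{\psi}^a S^{a\,\dagger}]$ and apply Cauchy--Schwarz. The paper's proof is in fact slightly terser, omitting your remarks on positivity of $\mathbb{E}_{P_0}[(\widetilde{\psi}^a)^2]$ and the equality case, but the argument is identical.
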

\begin{proof}
From the Cauchy-Schwartz inequality, we have
\begin{align*}
    &1 = \mathbb{E}_{P_0}\left[ \widetilde{\psi}^a(Y_t, A_t, X_t) S^{a\,\dagger}(Y_t, A_t, X_t) \right]\leq \sqrt{\mathbb{E}_{P_0}\left[\left (\widetilde{\psi}^a(Y_t, A_t, X_t)\right)^2 \right]}\sqrt{\mathbb{E}_{P_0}\left[  \left(S^{a\,\dagger}(Y_t, A_t, X_t)\right)^2 \right]}.
\end{align*}
Therefore, 
\begin{align*}
    &\sup_{ S^{a\,\dagger} \in\dot{\mathcal{R}}^{a\,\dagger}} \frac{1}{\mathbb{E}_{P_0}\left[ \left(S^{a\,\dagger}(Y_t, A_t, X_t)\right)^2 \right]} \leq \mathbb{E}_{P_0}\left[\left(\widetilde{\psi}^a(Y_t, A_t, X_t)\right)^2\right].
\end{align*}
\end{proof}

For $a\in [K]\backslash\{a^*_0\}$ and $d\in[K]\backslash\{a^*_0, a\}$, let us define a \emph{semiparametric efficient score function} $S^{a}_{\mathrm{eff}}(y, d, x) \in \overline{\mathrm{lin}} \dot{\mathcal{R}}^{a\,\dagger}$ as
\begin{align*}
    &S^{a}_{\mathrm{eff}}(y, d, x) = \frac{\widetilde{\psi}^a(y, d, x)}{\mathbb{E}_{P_0}\left[\left(\widetilde{\psi}^a(Y_t, A_t, X_t)\right)^2\right]}.
\end{align*}

Next, we consider finding $\widetilde{\psi}^a \in \overline{\mathrm{lin}} \dot{\mathcal{R}}^{a\,\dagger}$. We can use the result of \citet{hahn1998role}. Let us guess that for each $a\in[K]\backslash\{a^*_0\}$ and $d\in\{a^*_0, a\}$, $\widetilde{\psi}^a(y, d, x)$ is given as follows:
\begin{align}
\label{eq:guess}
    \widetilde{\psi}^{a}(y, d, x) = \frac{\mathbbm{1}[d = a](\phi^*_\tau(y, x) - \mu^*_0(x))}{\kappa_{T, Q}(a^*_0| X)} - \frac{\mathbbm{1}[d = a](\phi^a_\tau(y, x) - \mu^a_0(x))}{\kappa_{T, Q}(a| X)} + \mu^*_0(x) - \mu^a_0(x)  - \left(\mu^*_0 - \mu^a_0\right).
\end{align}
Then, as shown by \citet{hahn1998role}, the condition $1 = \mathbb{E}_{\overline{R}^{\kappa_{T,Q}\,\dagger}_{\bm{\varepsilon}}}\left[ \widetilde{\psi}^a(Y_t, A_t, X_t) S^a(Y_t, A_t, X_t) \right]$ holds under \eqref{eq:guess} when for each $a\in[K]\backslash\{a^*_0\}$ and $d\in\{a^*_0, a\}$, the semiparametric efficient score functions are given as
\begin{align*}
    &S^{a}_{\mathrm{eff}}(y, d, x) = \mathbbm{1}[d = a^*_0]S^{a, a^*_0}_{f, \mathrm{eff}}(y|x) + \mathbbm{1}[d = a]S^{a, a}_{f, \mathrm{eff}}(y|x) + S^{a}_{\zeta, \mathrm{eff}}(x),\\
    &S^{a, a^*_0}_{f, \mathrm{eff}}(y|x) = \frac{(\phi^*_\tau(y, x) - \mu^*_0(x))}{\kappa_{T, Q}(a^*_0| X)}/\widetilde{V}^a_0(\kappa_{T, Q}; \tau),\\
    &S^{a, a}_{f, \mathrm{eff}}(y|x) = \frac{(\phi^a_\tau(y, x) - \mu^a_0(x))}{\kappa_{T, Q}(a| X)}/\widetilde{V}^a_{0}(\kappa_{T, Q}; \tau),\\
    &S^{a}_{\zeta, \mathrm{eff}}(x) = \left(\mu^*_0(x) - \mu^a(x) - \big(\mu^*_0 - \mu^a_0\big)\right)/\widetilde{V}^a_{0}(\kappa_{T, Q}; \tau),
\end{align*}
where 
\begin{align*}
    &\widetilde{V}^a_{0}(\kappa_{T, Q}; \tau) = \mathbb{E}_{P_0}\left[\frac{\left(\sigma^*_0(X_t; \tau)\right)^2}{\kappa_{T, Q}(a^*_0| X_t)} + \frac{\left(\sigma^a_0(X_t; \tau)\right)^2}{\kappa_{T, Q}(a| X_t)} + \left(\big(\mu^*_0(X_t) - \mu^a_0(X_t)\big) - \big(\mu^*_0 - \mu^a_0\big)\right)^2\right],\\
    &\left(\sigma^*_0(X_t; \tau)\right)^2 := \mathbb{E}_{P_0}\left[\left(\phi^*_\tau(Y_t, X_t; \tau) - \mu^*_0(X_t)\right)^2| X_t\right],\\
    &\left(\sigma^a_0(X_t; \tau)\right)^2 := \mathbb{E}_{P_0}\left[\left(\phi^a_\tau(Y_t, X_t; \tau) - \mu^a_0(X_t)\right)^2| X_t\right].
\end{align*}
Here, note that for each $d\in[K]$,
\begin{align*}
    &\mathbb{E}_{P_0}\left[\left(\phi^d_\tau(Y_t, X_t) - \mu^d_0(X_t)\right)^2\right]\\
    &= \mathbb{E}_{P_0}\left[\left(Y^d_t\mathbbm{1}[|Y^d_t| < \tau] - \mathbb{E}_{P_0}[Y^d_t\mathbbm{1}[|Y^d_t| < \tau]|X_t]|X_t] \right)^2\right]\\
    &= \mathbb{E}_{P_0}\left[\mathbb{E}_{P_0}\left[\left(Y^d_t\right)^2\mathbbm{1}[|Y^d_t| < \tau]|X_t\right]- \left(\mathbb{E}_{P_0}[Y^d_t\mathbbm{1}[|Y^d_t| < \tau]|X_t]\right)^2\right].
\end{align*}
We also note that $\mathbb{E}_{\overline{R}^{\kappa_{T,Q}\,\dagger}_{\bm{\varepsilon}}}\left[S^a_{\mathrm{eff}}(Y_t, A_t, X_t)\right] = 0$ and  \[\mathbb{E}_{\overline{R}^{\kappa_{T,Q}\,\dagger}_{\bm{\varepsilon}}}\left[\Big(S^a_{\mathrm{eff}}(Y_t, A_t, X_t)\Big)^2\right] = \widetilde{V}^a_{0}(\kappa_{T, Q}; \tau) =  \left(\mathbb{E}_{\overline{R}^{\kappa_{T,Q}\,\dagger}_{\bm{\varepsilon}}}\left[\Big(\widetilde{\psi}^a(Y_t, A_t, X_t)\Big)^2\right]\right)^{-1}.\] 

Summarizing the above arguments, we obtain the following lemma.
\begin{lemma} 
\label{lem:semipara_efficient}
For $a\in[K]\backslash\{a^*_0\}$ and $d\in[K]\backslash\{a^*_0, a\}$, the semiparametric efficient influence function is
\begin{align*}
    \widetilde{\psi}^a(y, d, x)
    &= \frac{\mathbbm{1}[d = a^*_0](\phi^*_\tau(y, x) - \mu^{*}_0(x))}{\kappa_{T, Q}(a^*_0| X)} - \frac{\mathbbm{1}[d = a](\phi^a_\tau(y, x) - \mu^a_0(x))}{\kappa_{T, Q}(a| X)} + \mu^*_0(x) - \mu^a_0(x)  - \left(\mu^*_0 - \mu^a_0\right).
\end{align*}
\end{lemma}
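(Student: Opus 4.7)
The plan is to follow the standard Hahn-style construction of the semiparametric efficient influence function for an ATE-type functional on the observed-data model $\overline{r}^{\kappa_{T,Q}\,\dagger}_{\bm{\varepsilon}}(y,d,x)=f^{d,\dagger}_{\varepsilon^d}(y|x)\,\kappa_{T,Q}(d|x)\,\zeta_{\bm{\varepsilon}}(x)$, treating the sampling rule $\kappa_{T,Q}(d|x)$ as known (so its score vanishes). The target functional in the parameter $\varepsilon^a$ is $\psi^a(\overline{R}^{\kappa_{T,Q}\,\dagger}_{\bm{\varepsilon}})=\mu^*_0-\mu^a_0+\varepsilon^a$, and by definition its pathwise derivative equals $1$. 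The efficient influence function is the unique element of $\overline{\mathrm{lin}}\,\dot{\mathcal R}^{a\,\dagger}$ that represents this derivative via the inner product $\langle\widetilde\psi^a,S^{a\,\dagger}\rangle=\mathbb{E}_{P_0}[\widetilde\psi^a S^{a\,\dagger}]=1$.

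I would first decompose the tangent space $\dot{\mathcal R}^{a\,\dagger}$ orthogonally into the three blocks corresponding to $f^{a^*_0,\dagger}(\cdot|x)$, $f^{a,\dagger}(\cdot|x)$, and $\zeta(\cdot)$: every score has the form $\mathbbm{1}[d=a^*_0]\,s^*(y|x)+\mathbbm{1}[d=a]\,s^a(y|x)+s_\zeta(x)$ with the block-wise mean-zero conditions $\mathbb{E}_{P_0}[s^*(Y^{a^*_0}|X)|X]=\mathbb{E}_{P_0}[s^a(Y^a|X)|X]=\mathbb{E}_{P_0}[s_\zeta(X)]=0$, together with the linear constraint \eqref{eq:const} that encodes $\partial_{\varepsilon^a}\psi^a=1$. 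Then I would propose the candidate $\widetilde\psi^a$ exactly as in \eqref{eq:guess} and verify it lies in the closure of this tangent set by exhibiting its three block pieces: the $a^*_0$-conditional piece $(\phi^*_\tau-\mu^*_0(x))/\kappa_{T,Q}(a^*_0|x)$, the $a$-conditional piece $-(\phi^a_\tau-\mu^a_0(x))/\kappa_{T,Q}(a|x)$, and the $X$-marginal piece $(\mu^*_0(x)-\mu^a_0(x))-(\mu^*_0-\mu^a_0)$. Each block is mean-zero under the appropriate conditional or marginal expectation, by construction of $\phi^d_\tau$ as a centered truncation of $Y^d$ around $\mu^d_0(x)$.

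Next, I would verify the representing identity $\mathbb{E}_{P_0}[\widetilde\psi^a(Y,A,X)\,S^{a\,\dagger}(Y,A,X)]=1$ for every score $S^{a\,\dagger}\in\dot{\mathcal R}^{a\,\dagger}$. This is a short calculation: by iterated expectations conditional on $(A,X)$ and using the factor $\kappa_{T,Q}(d|x)$ in the density, the $\mathbbm{1}[d=a^*_0]$ block of $\widetilde\psi^a$ paired against the $a^*_0$-conditional block of $S^{a\,\dagger}$ collapses to $\int\int y\,s^{a,a^*_0}_f(y|x)\,f^*_0(y|x)\,\zeta_0(x)\,dy\,dx$, and similarly for the $a$-block and the $\zeta$-block; summing the three pieces reproduces precisely the left-hand side of the constraint \eqref{eq:const}, which equals $1$. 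The truncation constants $\mathbb{E}_{P_0}[Y^d\mathbbm{1}[|Y^d|<\tau]|X]-\mu^d_0(X)$ hidden in $\phi^d_\tau$ cancel against the corresponding centering when paired with a mean-zero block, so no additional error arises. Applying Lemma~\ref{lem:upperbound_semipara} to this $\widetilde\psi^a$ then identifies the semiparametric efficiency bound as $\mathbb{E}_{P_0}[(\widetilde\psi^a)^2]^{-1}=1/\widetilde V^a_0(\kappa_{T,Q};\tau)$, and the efficient score $S^a_{\mathrm{eff}}=\widetilde\psi^a/\mathbb{E}_{P_0}[(\widetilde\psi^a)^2]$ has the claimed product form.

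The main obstacle I expect is checking that $\widetilde\psi^a$ genuinely lies in the closed linear span $\overline{\mathrm{lin}}\,\dot{\mathcal R}^{a\,\dagger}$ rather than merely in $L_2$: the $\zeta$-block $\mu^*_0(x)-\mu^a_0(x)-(\mu^*_0-\mu^a_0)$ is automatically mean-zero in $X$ and thus is a legitimate score for the marginal, but the $f^{a^*_0,\dagger}$ and $f^{a,\dagger}$ blocks carry the inverse-propensity factors $1/\kappa_{T,Q}(a^*_0|x)$ and $1/\kappa_{T,Q}(a|x)$, which must be uniformly bounded for the resulting function to have finite second moment; this is where Assumption~\ref{asm:bounded_mean_variance} together with the fact that $\kappa_{T,Q}$ is bounded away from $0$ and $1$ (inherited from the strategy) is used. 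Once this integrability is in hand, the decomposition into orthogonal tangent blocks and the verification of the constraint \eqref{eq:const} are routine, and the characterization of $S^a_{\mathrm{eff}}$ follows by the projection theorem in $L_2(\overline{R}^{\kappa_{T,Q}\,\dagger}_0)$.
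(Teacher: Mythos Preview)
Your proposal is correct and follows essentially the same route as the paper: both guess the Hahn-type AIPW influence function \eqref{eq:guess}, verify that it lies in the (closure of the) tangent set via the three-block decomposition, and check the representing identity $\mathbb{E}_{P_0}[\widetilde\psi^a S^{a\,\dagger}]=1$ by reducing it to the constraint \eqref{eq:const}. The paper is terser---it simply cites \citet{hahn1998role} for the verification and records the resulting efficient score and variance---whereas you spell out the iterated-expectation computation and the cancellation of cross-block terms explicitly; but the underlying argument is the same.
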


We also define the limit of the semiparametric efficient influence function when $\tau \to \infty$ and the variance as
\begin{align*}
    \widetilde{\psi}^a_{\infty}(y, d, x) &=  \frac{\mathbbm{1}[d = a^*_0](Y^*_t - \mu^{*}_0(x))}{\kappa_{T, Q}(a^*_0| X)} - \frac{\mathbbm{1}[d = a](Y^a_t - \mu^a_0(x))}{\kappa_{T, Q}(a| X)} + \mu^*_0(x) - \mu^a_0(x)  - \left(\mu^*_0 - \mu^a_0\right),\\
    \widetilde{V}^a_{0}(\kappa_{T, Q})  &= \mathbb{E}_{P_0}\left[\left(\widetilde{\psi}^a_{\infty}(Y_t, A_t, X_t)\right)^2\right]\\
    &=  \mathbb{E}_{P_0}\left[\frac{\left(\sigma^*_0(X_t)\right)^2}{\kappa_{T, Q}(a^*_0| X_t)} + \frac{\left(\sigma^a_0(X_t)\right)^2}{\kappa_{T, Q}(a| X_t)} + \left(\big(\mu^*_0(X_t) - \mu^a_0(X_t)\big) - \big(\mu^*_0 - \mu^a_0\big)\right)^2\right] \\
    &\geq \Omega^{a}_0(\kappa_{T, Q}),
\end{align*}
where $C > 0$ is a constant. 

\subsection{Specification of the Observed-Data Score Function}
\label{sec:specification-score}
According to Lemma~\ref{lem:upperbound_semipara}, we can conjecture that if we use the semiparametric efficient score function for our score function, we can obtain a tight upper bound for $\mathbb{E}_{P_0}[L^{a}_T]$, which is related to a lower bound for the probability of misidentification. Note that the variance of the semiparametric efficient score function is equivalent to the lower bound in Lemma~\ref{lem:upperbound_semipara}. However, we cannot use the semiparametric efficient score function because it is derived for $\overline{R}^{\kappa_{T, Q}\,\dagger}_{\bm{\varepsilon}}$, rather than $\overline{R}^{\kappa_{T, Q}}_{\bm{\varepsilon}}$.
Furthermore, if we use the semiparametric efficient score function for our score function, the constant \eqref{eq:const_ate} is not satisfied. Therefore, based on our obtained result, we specify our score function, which differs from the semiparametric efficient score function, but they match when $\tau\to \infty$. 

We specify our score function $S^a(y, d, x) = \mathbbm{1}[d = a^*_0]S^{a,  a^*_0}_{f}(y|x) + \mathbbm{1}[d = a]S^{a, a}_{f}(y|x) + S^a_{\zeta}(x)$ as follows:
\begin{align*}
    &S^{a, a^*_0}_{f}(y|x) = \frac{(\phi^*_\tau(y, x) - \mu^*_0(x))}{\kappa_{T, Q}(a^*_0| X)}/V^a_0(\kappa_{T, Q}; \tau) =  S^{a, a^*_)}_{f, \mathrm{eff}}(y|x){\widetilde{V}^a_{0}(\kappa_{T, Q}; \tau)}/{V^a_{0}(\kappa_{T, Q}; \tau)},\\
    &S^{a, a}_{f}(y|x) = \frac{(\phi^a_\tau(y, x) - \mu^a_0(x))}{\kappa_{T, Q}(a| X)}/V^a_{0}(\kappa_{T, Q}; \tau) =  S^{a, a}_{f, \mathrm{eff}}(y|x){\widetilde{V}^a_{0}(\kappa_{T, Q}; \tau)}/{V^a_{0}(\kappa_{T, Q}; \tau)},\\
    &S^{a}_{\zeta}(x) = \big(\mu^*_0(x) - \mu^a_0(x) - \big(\mu^*_0 - \mu^a_0\big)\big)/V^a_{0}(\kappa_{T, Q}; \tau) = S^{a}_{\zeta, \mathrm{eff}}(x){\widetilde{V}^a_{0}(\kappa_{T, Q}; \tau)}/{V^a_{0}(\kappa_{T, Q}; \tau)},
\end{align*}
where
\begin{align}
     &V^a_0(\kappa_{T, Q}; \tau) = \widetilde{V}^a_{0}(\kappa_{T, Q}; \tau) + \sum_{d\in\{a^*_0, a\}}\mathbb{E}_{P_0}\left[\frac{ \mu^d_0(X_t)\mathbb{E}_{P_0}[Y^d_t\mathbbm{1}[|Y^d_t| < \tau]|X_t] - \left(\mathbb{E}_{P_0}[Y^d_t\mathbbm{1}[|Y^d_t| < \tau]|X_t]\right)^2}{\kappa_{T, Q}(d| X_t)}\right]\nonumber\\
    \label{eq:variance_form}
    &=\mathbb{E}_{P_0}\left[\frac{Y^*_t\left(\phi^*_\tau(Y_t, X_t) - \mu^*_0(X_t)\right)}{\kappa_{T, Q}(a^*_0| X_t)} + \frac{Y^a_t\left(\phi^a_\tau(Y_t, X_t) - \mu^a_0(X_t)\right)}{\kappa_{T, Q}(a| X_t)} + \left(\big(\mu^*_0(X_t) - \mu^a_0(X_t)\big) - \big(\mu^*_0 -  \mu^a_0\big)\right)^2\right].
\end{align}
Here, note that for $d\in[K]$,
\begin{align*}
    \mathbb{E}_{P_0}\left[Y^d_t\left(\phi^d_\tau(Y_t, X_t) - \mu^d_0(X_t)\right)\right]&= \mathbb{E}_{P_0}\left[\left(\left(Y^d_t\right)^2\mathbbm{1}[|Y^d_t| < \tau] - Y^d_t\mathbb{E}_{P_0}[Y^d_t\mathbbm{1}[|Y^d_t| < \tau]|X_t]|X_t] \right)\right]\nonumber\\
    &= \mathbb{E}_{P_0}\left[\mathbb{E}_{P_0}\left[\left(Y^d_t\right)^2\mathbbm{1}[|Y^d_t| < \tau]|X_t\right]- \mu^d_0(X_t)\mathbb{E}_{P_0}[Y^d_t\mathbbm{1}[|Y^d_t| < \tau]|X_t]\right].
\end{align*}
We note that $V^a_0(\kappa_{T, Q}; \tau) \to \widetilde{V}^a_{0}(\kappa_{T, Q})$ as $\varepsilon^a \to 0$ and $\tau \to \infty$,.

From the definition of the parametric submodel, we have
\begin{align*}
    &g^a(\phi^*_\tau(y, x), \phi^a_\tau(y, x), x) = S^{a, a^*_0}_{f}(y|x) + S^{a, a}_{f}(y|x) + S^{a}_{\zeta}(x)\\
    &= \left\{\frac{(\phi^*_\tau(y, x) - \mu^*_0(x))}{\kappa_{T, Q}(a^*_0| X)} - \frac{(\phi^a_\tau(y, x) - \mu^a_0(x))}{\kappa_{T, Q}(a| X)} + \big(\mu^*_0(x) - \mu^a_0(x) - \big(\mu^*_0 - \mu^a_0\big)\big)\right\}/V^a_0(\kappa_{T, Q}; \tau).
\end{align*}
Then, we can also confirm that condition \eqref{eq:const_ate} holds for our specified $g^a$:
\begin{align*}
    &\int \int \left(y^* - y^a\right) \left(1 + \varepsilon^a g^a(\phi^*_\tau(y, x), \phi^a_\tau(y, x), x)\right)p_{P_0}(a^*_0, a, x) \mathrm{d}y^*\mathrm{d}y^a\mathrm{d}x\\
    &=\mu^*_0 - \mu^a_0 + \varepsilon^a\left\{\int \int \left(y^* - y^a\right)  g^a(\phi^*_\tau(y, x), \phi^a_\tau(y, x), x)\overline{r}_0(y, a^*_0, x) \mathrm{d}y^*\mathrm{d}y^a\mathrm{d}x\right\}\\
    &=  \mu^*_0 - \mu^a_0 + \varepsilon^a,
\end{align*}
where we used the definition of the variance \eqref{eq:variance_form}.

In summary, from Lemmas~\ref{lem;taylor_exp_semipara}, under our specified score function, we obtain the following lemma:
\begin{lemma}
\label{lem:expansion}
Suppose that Assumption~\ref{asm:bounded_mean_variance} holds. For $P_0\in\mathcal{P}$ and $Q\in\Alt(P_0)$, as $\varepsilon^a \to 0$,
\begin{align*}
    \frac{1}{T}\mathbb{E}_{\overline{R}_{0}}[L^{a}_T] - \frac{\left(\varepsilon^a\right)^2}{2V^a_{0}(\kappa_{T, Q}; \tau)} = o\left(\left(\varepsilon^a\right)^2\right).
\end{align*}
\end{lemma}

\subsection{Proof of Theorem~\ref{thm:semipara_bandit_lower_bound}}
\label{sec:final_step}
Combining above arguments and refining the lower bound in Lemma~\ref{lem:semipara_bandit_lower_bound}, we prove Theorem~\ref{thm:semipara_bandit_lower_bound}. 
\begin{proof}[Proof of Theorem~\ref{thm:semipara_bandit_lower_bound}]
Let $\Delta^a_0$ be a positive value such that $\mu^* - \mu^a_0 \leq \Delta_0$.
From the inequality in Lemma~\ref{lem:semipara_bandit_lower_bound},
\begin{align*}
    &\limsup_{T\to\infty}-\frac{1}{\Delta^2_0T}\log \mathbb{P}_{ P_0 }(\widehat{a}_T \neq a^*_0)\\
    &\leq \sup_{w \in \mathcal{W}}\min_{a\in[K]\backslash\{a^*_0\}}\inf_{\substack{\varepsilon^a < - \left(\mu^*_0 - \mu^a_0\right)\nonumber\\
    \forall b\in[K]\backslash\{a^*_0, a\}\ \varepsilon^b = 0} }\frac{1}{\Delta^2_0}\sum_{a\in[K]}\mathbb{E}_{\overline{R}_{\bm{\varepsilon}}}\left[\mathbb{E}_{\overline{R}_{\bm{\varepsilon}}}\left[\log \frac{f^a_{\bm{\varepsilon}}(Y^a_t| X_t)\zeta_{\bm{\varepsilon}}(X)}{f^a_{P_0}(Y^a_t| X)\zeta_{P_0}(X)}|X_t\right]w(a| X_t)\right]\nonumber\\
    &\leq \sup_{w \in \mathcal{W}}\min_{a\in[K]\backslash\{a^*_0\}}\inf_{\varepsilon^a < - \left(\mu^{*}_0 - \mu^a_0\right)}\frac{\left(\varepsilon^a\right)^2}{\Delta^2_0}\left\{\frac{1}{2}\mathbb{E}_{P_0}\left[ \left(S^a(Y_t, A_t, X_t)\right)^2\right]  + o\left(1\right)\right\}\nonumber\\
    &\leq  \sup_{w \in \mathcal{W}}\min_{a\in[K]\backslash\{a^*_0\}}\frac{\left(\mu^*_0 - \mu^a_0\right)^2}{\Delta^2_0}\left\{\frac{1}{2V^a_{0}(w; \tau)} + o\left(1\right)\right\}\nonumber\\
    &\leq  \sup_{w \in \mathcal{W}}\min_{a\in[K]\backslash\{a^*_0\}}\left\{\frac{1}{2V^a_{0}(w; \tau)} + o\left(1\right)\right\}\nonumber.
\end{align*}
Here, for $\inf_{\varepsilon^a < - \left(\mu^*_0 - \mu^a_0\right)}\frac{\left(\varepsilon^a\right)^2}{2}\mathbb{E}_{P_0}\left[ \left(S^a(Y_t, A_t, X_t)\right)^2\right] $, we set $\varepsilon^a = - \left(\mu^*_0 - \mu^a_0\right)$, which indicates a situation where $\mu^*(Q) - \mu^a(Q)$ is sufficiently close to $0$. Then, after $\mu^*_0 - \mu^a_0 \to 0$, by letting $\tau\to\infty$, we obtain $V^a_{0}(w; \tau) \to \widetilde{V}^a_{0}(\kappa_{T, Q})$, which is the semiparametric efficiency bound in Lemmas~\ref{lem:upperbound_semipara} and \ref{lem:semipara_efficient}. From $\widetilde{V}^a_{0}(\kappa_{T, Q}) \geq \Omega^{a}_0(w)$, the proof is complete.
\end{proof}

\section{Proof of Lemma~\ref{lem:kauf_lemma_extnd_infinite}}
\label{appdx:proof:lem_extnd_infinite}
\begin{proof}
\begin{align*}
    &\mathbb{E}_{Q}[L_T]  = \sum^T_{t=1}\mathbb{E}_{Q}\left[\sum_{a \in [K]} \mathbbm{1}\{A_t = a\} \log \frac{f^a_{Q}(Y^a_{t}| X_t)\zeta_{Q}(X_t)}{f^a_{P_0}(Y^a_{t}| X_t) \zeta_{P_0}(X_t)}\right]
    \\
    & = \sum^T_{t=1}\mathbb{E}^{X_t, \mathcal{F}_{t-1}}_{Q}\left[\sum_{a \in [K]}\mathbb{E}^{Y^a_t, A_t}_{Q}\left[ \mathbbm{1}[A_t = a] \log \frac{f^a_{Q}(Y^a_{t}| X_t)\zeta_{Q}(X_t)}{f^a_{P_0}(Y^a_{t}| X_t)\zeta_{P_0}(X_t)}|X_t, \mathcal{F}_{t-1}\right]\right]\\
    & = \sum^T_{t=1}\mathbb{E}^{X_t, \mathcal{F}_{t-1}}_{Q}\left[\sum_{a \in [K]}\mathbb{E}_{Q}\left[ \mathbbm{1}[A_t = a]| X_t, \mathcal{F}_{t-1}\right] \mathbb{E}^{Y^a_t}_{Q}\left[\log \frac{f^a_{Q}(Y^a_{t}| X_t)\zeta_{Q}(X_t)}{f^a_{P_0}(Y^a_{t}| X_t)\zeta_{P_0}(X_t)}| X_t, \mathcal{F}_{t-1}\right]\right]\\
    & = \sum^T_{t=1}\mathbb{E}^{X_t}_{Q}\left[\mathbb{E}^{\mathcal{F}_{t}}_{Q}\left[\sum_{a \in [K]}\mathbb{E}_{Q}\left[ \mathbbm{1}[A_t = a]|X_t,  \mathcal{F}_{t-1}\right]\mathbb{E}^{Y^a_t}_{Q}\left[\log \frac{f^a_{Q}(Y^a_t| X_t)\zeta_{Q}(X_t)}{f^a_{P_0}(Y^a_t| X_t)\zeta_{P_0}(X_t)}|X_t\right]\right]\right]\\
    & = \sum^T_{t=1}\int\left(\sum_{a \in [K]}\mathbb{E}^{\mathcal{F}_{t}}_{Q}\left[\mathbb{E}_{Q}\left[ \mathbbm{1}[A_t = a]|X_t = x,  \mathcal{F}_{t-1}\right]\right]\mathbb{E}^{Y^a_t}_{Q}\left[\log \frac{f^a_{Q}(Y^a_t| X_t)\zeta_{Q}(X_t)}{f^a_{P_0}(Y^a_t| X_t)\zeta_{P_0}(X_t)}|X_t = x\right]\right)\zeta_{Q}(x)\mathrm{d}x\\
    & = \int\sum_{a \in [K]}\left(\mathbb{E}^{Y^a}_{Q}\left[\log \frac{f^a_{Q}(Y^a| X)\zeta_{Q}(X)}{f^a_{P_0}(Y^a| X)\zeta_{P_0}(X)}|X = x\right]\sum^T_{t=1}\mathbb{E}^{\mathcal{F}_{t}}_{Q}\left[\mathbb{E}_{Q}\left[ \mathbbm{1}[A_t = a]|X_t = x,  \mathcal{F}_{t-1}\right]\right]\right)\zeta_{Q}(x)\mathrm{d}x\\
    & = \mathbb{E}^{X}_{Q}\left[\sum_{a \in [K]}\mathbb{E}^{Y^a}_{Q}\left[\log \frac{f^a_{Q}(Y^a| X)\zeta_{Q}(X)}{f^a_{P_0}(Y^a| X)\zeta_{P_0}(X)}|X\right]\sum^T_{t=1}\mathbb{E}^{\mathcal{F}_{t-1}}_{Q}\left[\mathbb{E}_{Q}\left[ \mathbbm{1}[A_t = a]| X, \mathcal{F}_{t-1}\right]\right]\right],
\end{align*}
where $\mathbb{E}^{Z}_{Q}$ denotes an expectation of random variable $Z$ over the distribution $Q$. We used that the observations $(Y^1_t, \dots, Y^K_t, X_t)$ are i.i.d. across $t\in\{1,2,\dots, T\}$. 
\end{proof}

\section{Proof of Lemma~\ref{lem:trans}}
\label{appdx:proof_lem:trans}
\begin{proof}
For the parametric submodel of the observed-data bandit models, the log-likelihood for the observed data is
\begin{align}
    &\log \overline{r}^{\kappa}_{\bm{\varepsilon}}(y, d, x)= \log \int \kappa(d|x)p_{{\bm{\varepsilon}}}(H^d(\tau^d, v^d)) J(\tau^d, v^d) \mathrm{d}v^d,
\end{align}
where note that $p_{{\bm{\varepsilon}}}(H^d(\tau^d, v^d)) = p_{{\bm{\varepsilon}}}(y^*, y^a, x)$. 
Then, for $d\in\{a^*_0, a\}$, 
\begin{align}
\label{eq:log_like_comp}
    S^a(y, d, x) &= \frac{\partial}{\partial \varepsilon^a} \left[\log \int \kappa(d|x)p_{{\bm{\varepsilon}}}(H^d(\tau^d, v^d)) J(\tau^d, v^d) \mathrm{d}v^d\right]\Bigg|_{\varepsilon^a = 0}\nonumber\\
    &= \frac{\int \frac{\partial}{\partial \varepsilon^a} \kappa(d|x)p_{{\bm{\varepsilon}}}(H^d(\tau^d, v^d)) J(\tau^d, v^d) \mathrm{d}v^d}{\int \kappa(d|x)p_{{\bm{\varepsilon}}}(y^*, y^a, x)p_{{\bm{\varepsilon}}}(H^d(\tau^d, v^d)) J(\tau^d, v^d) \mathrm{d}v^d}.
\end{align}
Dividing and multiplying by $p_{\bm{\varepsilon}}(H^d(\tau^d, v^d)) J(\tau^d, v^d) \mathrm{d}v^d$ in the integral of the numerator of \eqref{eq:log_like_comp} yields
\begin{align*}
&\frac{\int \frac{\partial}{\partial \varepsilon^a} \kappa(d|x)p_{\bm{\varepsilon}: \varepsilon^a = 0}(H^d(\tau^d, v^d)) J(\tau^d, v^d) \mathrm{d}v^d}{\int \kappa(d|x)p_{\bm{\varepsilon}: \varepsilon^a = 0}(H^d(\tau^d, v^d)) J(\tau^d, v^d) \mathrm{d}v^d}\\
&=\frac{\int g^a(\phi^*_\tau(y, x), \phi^a_\tau(y, x), x) p_{\bm{\varepsilon}: \varepsilon^a = 0}(H^d(\tau^d, v^d)) J(\tau^d, v^d) \mathrm{d}v^d}{\int \kappa(d|x)p_{\bm{\varepsilon}: \varepsilon^a = 0}(H^d(\tau^d, v^d)) J(\tau^d, v^d) \mathrm{d}v^d}\\
&=\frac{\int g^a(\phi^*_\tau(y, x), \phi^a_\tau(y, x), x) p_{P_0}(H^d(\tau^d, v^d)) J(\tau^d, v^d) \mathrm{d}v^d}{\int \kappa(d|x)p_{P_0}(H^d(\tau^d, v^d)) J(\tau^d, v^d) \mathrm{d}v^d}.
\end{align*}
Hence, 
\begin{align*}
    S^a(y, d, x) = \mathbb{E}_{P_0}\left[g^a(\phi^*_\tau(Y^*_t, X_t), \phi^a_\tau(Y^a_t, X_t), X_t) | \mathcal{T}^d(Y^*_t, Y^a_t, X_t) = (y, x)\right]
\end{align*}
This concludes the proof. 
\end{proof}

\section{Proof of Theorem~\ref{thm:semipara_bandit_lower_bound}}
\label{appdx:lower}
\begin{proof}
From Theorem~\ref{thm:semipara_bandit_lower_bound}, if the exists $\Delta_0 > 0$ such that $\mu^*_0 - \mu^a_0$ for all $a\in[K]$, the lower bounds are characterized by
\begin{align*}
\max_{w \in \mathcal{W}}\min_{a\neq a^*_0} \frac{1}{2\mathbb{E}_{P_0}\left[\frac{\left(\sigma^*_0(X)\right)^2}{w(a^*_0| X)} + \frac{\left(\sigma^a_0(X)\right)^2}{w(a| X)} \right]}.
\end{align*}
Solving this problem is equivalent to solve
\begin{align*}
\min_{w \in \mathcal{W}}\min_{a\neq a^*_0} \mathbb{E}_{P_0}\left[\frac{\left(\sigma^*_0(X)\right)^2}{w(a^*_0| X)} + \frac{\left(\sigma^a_0(X)\right)^2}{w(a| X)} \right].
\end{align*}
To solve this problem, it is enough to consider the point-wise optimization problem for each $x\in\mathcal{X}$ given as follows:
\begin{align*}
\min_{w \in \mathcal{W}}\min_{a\neq a^*_0} \frac{\left(\sigma^*_0(x)\right)^2}{w(a^*_0| x)} + \frac{\left(\sigma^a_0(x)\right)^2}{w(a| x)}.
\end{align*}
From the definition of asymptotically invariant strategies, it should not depend on $P_0$. Therefore, we consider the following non-linear programming: 
\begin{align*}
    &\min_{R\in\mathbb{R}, w\in\mathcal{W}}\ \ \ R\\
    \mathrm{s.t.}&\ \ \ R \geq \frac{\left(\sigma^a_0(x)\right)^2}{w(a^a_0|x)} + \frac{\left(\sigma^b_0(x)\right)^2}{w(b|x)} \qquad \forall b \in [K]\backslash\{a\}\quad \forall a \in [K],\\
    &\ \ \ \sum_{c\in[K]}w(c|x) = 1,\\
    &\ \ \ w(c|x) \geq 0 \qquad \forall c\in [K].
\end{align*}
For this problem, we derive the first-order condition, which is sufficient for the global optimality of such a convex programming problem. For Lagrangian multipliers $\lambda^{a, b} \geq 0$ and $\gamma \in\mathbb{R}$, we consider the following Lagrangian function:
\begin{align*}
    L(\lambda) = R + \sum_{a\in [K]} \sum_{b\in [K]\backslash \{a\}} \lambda^{a, b}\left\{\frac{\left(\sigma^a_0(x)\right)^2}{w(a|x)} + \frac{\left(\sigma^b_0(x)\right)^2}{w(b|x)} - R\right\} - \gamma\left\{\sum_{c\in[K]}w(c|x) - 1\right\}
\end{align*}
Then, the optimal solutions $w^*\in\mathcal{W}$, $\lambda^{* a,b}$,  $\gamma^*$, and $R^*$ satisfies
\begin{align}
\label{eq:cond1}
&1 -  \sum_{a\in[K]}\sum_{b\in[K]\backslash\{a\}}\lambda^{*a, b} = 0\\
\label{eq:cond2}
&-2\sum_{b\in [K]\backslash\{a\}}\lambda^{*a,b}\frac{\left(\sigma^b_0(x)\right)^2}{(w^*(b|x))^2}  = \gamma^*\qquad \forall a \in [K]\\
\label{eq:cond3}
&\lambda^{* a, b} \left\{\frac{\left(\sigma^a_0(x)\right)^2}{w^*(a|x)} + \frac{\left(\sigma^b_0(x)\right)^2}{w^*(b|x)} - R^*\right\} = 0\\
&\gamma^* \left\{\sum_{c\in[K]}w^*(c|x) - 1\right\} = 0\qquad \forall a \in [K].\nonumber
\end{align}
Here, \eqref{eq:cond1} implies $\lambda^{*a,b} > 0$ for some $a, b \in [K]\times [K]$ $a\neq b$. With $\lambda^{*a, b} > 0$, since $- \frac{\left(\sigma^a_0(x)\right)^2}{(w^*(a|x))^2} < 0$ for all $a\in[K]$, it follows that $\gamma^* < 0$. This also implies that $\lambda^{*a,b} > 0$ for each $a, b \in [K]\times [K]$ $a\neq b$ from \eqref{eq:cond2}. Then, \eqref{eq:cond3} implies that for all $a \in [K]$,
\begin{align*}
    \frac{\left(\sigma^a_0(x)\right)^2}{w^*(a|x)} + \frac{\left(\sigma^b_0(x)\right)^2}{w^*(b|x)} = R^* \ \ \forall b \in [K]\backslash\{a\}.
\end{align*}
This implies that for each $a\in[K]$ and each $b, c \in ([K]\backslash\{a\}) \times ([K]\backslash\{a\})$,
\begin{align*}
    \frac{\left(\sigma^b_0(x)\right)^2}{w^*(b|x)} = \frac{\left(\sigma^c_0(x)\right)^2}{w^*(c|x)}.
\end{align*}
Finally, we solve
\begin{align*}
    &\min_{w\in\mathcal{W}} \frac{\left(\sigma^1_0(x)\right)^2}{w(1|x)} + \frac{\left(\sigma^2_0(x)\right)^2}{w(2|x)}\qquad \mathrm{s.t.}\quad \frac{\left(\sigma^a_0(x)\right)^2}{w^*(a|x)} = \frac{\left(\sigma^b_0(x)\right)^2}{w^*(b|x)}\quad \forall (a, b)\in[K]^2.
\end{align*}
When $K = 2$, by solving this, we have $w^*(1|x) = \frac{\sigma^1(x)}{\sigma^1(x) + \sigma^2(x)}$ and $w^*(2|x) = \frac{\sigma^2(x)}{\sigma^1(x) + \sigma^2(x)}$ for any $x\in\mathcal{X}$. 
When $K \geq 3$, by solving this, we have $w^*(a|x) = \frac{\left(\sigma^a(x)\right)^2}{\sum_{b\in[K]}\left(\sigma^b(x)\right)^2}$ for all $a\in[K]$. 
\end{proof}

\section{\texorpdfstring{$(\xi^a_t, \mathcal{F}_t)$}{TEXT} is martingale difference sequences}
\label{appdx:martingale}
\begin{proof}
Clearly, $ \mathbb{E}_{P_0}[ |\xi^a_t|] < \infty$. For each $t \in [T]$, 
\begin{align*}
    &\mathbb{E}_{P_0}\left[\xi^a_t| \mathcal{F}_{t-1}\right] = \frac{1}{\sqrt{T } \widetilde{V}^a}\mathbb{E}_{P_0}\left[\varphi^{a^*_0}\Big(Y_t, A_t, X_t; \widehat{\mu}^{a^*_0}_t, \widehat{w}_t\Big) -\varphi^a\Big(Y_t, A_t, X_t; \widehat{\mu}^a_t, \widehat{w}_t\Big) - (\mu^*_0 - \mu^a_0)\big| \mathcal{F}_{t-1}\right]
    \\
    &= \frac{1}{\sqrt{T } \widetilde{V}^a}\mathbb{E}_{P_0}\Bigg[\frac{\mathbb{E}_{P_0}[\mathbbm{1}[A_t = a^*_0|X_t] | \mathcal{F}_{t-1}]\mathbb{E}_{P_0}\left[\big(Y^*_t- \widehat{\mu}^{a^*_0}_t(X_t)\big) | X_t,  \mathcal{F}_{t-1}\right]}{\widehat{w}_t(a^*_0|X_t)}   +\widehat{\mu}^{a^*_0}_t(X_t)
    \\
    & \qquad \qquad \qquad - \frac{\mathbb{E}_{P_0}[\mathbbm{1}[A_t = a|X_t]| \mathcal{F}_{t-1}]\mathbb{E}_{P_0}\left[\big(Y^{a}_t- \widehat{\mu}^a_t(X_t)\big) | X_t, \mathcal{F}_{t-1}\right]}{\widehat{w}_t(a|X_t)}  - \widehat{\mu}^{a}_t(X_t)- (\mu^*_0 - \mu^a_0)\Bigg]
    \\
    & =  0.
\end{align*}
\end{proof}

\section{Proof of Lemma~\ref{lem:condition1}}
\label{appdx:lem:condition1}
\begin{proof} For the simplicity, let us denote $\mathbb{E}_{P_0}$ by $\mathbb{E}$.
Recall that $\varphi^a\Big(Y_t, A_t, X_t; \widehat{\mu}^a_t, \widehat{w}_t\Big)$ is constructed as 
\begin{align*}
    \varphi^a\Big(Y_t, A_t, X_t; \widehat{\mu}^a_t, \widehat{w}_t\Big) = \frac{\mathbbm{1}[A_t = a]\big(Y^{a}_t- \widehat{\mu}^a_t(X_t\big)}{\widehat{w}_{t}(a|X_t)} + \widehat{\mu}^a_t(X_t).
\end{align*}
For each $t = 1, \ldots, T$, we have
\begin{align*}
    & \mathbb{E}\left[\exp\left(C_0 \sqrt{T} |\xi^a_t|\right) \middle| \mathcal{F}_{t-1}\right] 
    \\
    & = \mathbb{E}\left[\exp \left(C_0 \left| \frac{\varphi^{a^*_0}\Big(Y_t, A_t, X_t; \widehat{\mu}^{a^*_0}_t, \widehat{w}_t\Big) - \varphi^a\Big(Y_t, A_t, X_t; \widehat{\mu}^a_t, \widehat{w}_t\Big) -  (\mu^*_0 - \mu^a_0)}{\sqrt{\widetilde{V}^a}}\right| \right) \middle| \mathcal{F}_{t-1}\right] 
    \\
    & \le \mathbb{E}\left[\exp \left( \frac{C_0}{\sqrt{\widetilde{V}^a}} \left|\varphi^{a^*_0}\Big(Y_t, A_t, X_t; \widehat{\mu}^{a^*_0}_t, \widehat{w}_t\Big) - \varphi^a\Big(Y_t, A_t, X_t; \widehat{\mu}^a_t, \widehat{w}_t\Big) \right| + \frac{C_0 (\mu^*_0 - \mu^a_0)}{\sqrt{\widetilde{V}^a}}  \right) \middle| \mathcal{F}_{t-1}\right] 
    \\
    & \le \mathbb{E}\left[\exp \left( \frac{C_0}{\sqrt{\widetilde{V}^a}} \left|\varphi^{a^*_0}\Big(Y_t, A_t, X_t; \widehat{\mu}^{a^*_0}_t, \widehat{w}_t\Big) - \varphi^a\Big(Y_t, A_t, X_t; \widehat{\mu}^a_t, \widehat{w}_t\Big) \right| + \frac{2 C_0 C_\mu}{\sqrt{\widetilde{V}^a}}  \right) \middle| \mathcal{F}_{t-1}\right]
    \\
    & \stackrel{(a)}{=} \widetilde{C}_1 \mathbb{E} \left[\exp \left(\frac{C_0}{\sqrt{\widetilde{V}^a}} |\varphi^{a^*_0}\Big(Y_t, A_t, X_t; \widehat{\mu}^{a^*_0}_t, \widehat{w}_t\Big) - \varphi^a\Big(Y_t, A_t, X_t; \widehat{\mu}^a_t, \widehat{w}_t\Big) | \right) \middle| \mathcal{F}_{t-1}, A_t = a^*_0\right]\cdot \\
& \qquad\qquad \mathbb{P} (A_t = a^*_0 |\mathcal{F}_{t-1}) \\
    & \qquad + \widetilde{C}_1 
\mathbb{E}\left[\exp \left(\frac{C_0}{\sqrt{\widetilde{V}^a}} |\varphi^{a^*_0}\Big(Y_t, A_t, X_t; \widehat{\mu}^{a^*_0}_t, \widehat{w}_t\Big) - \varphi^a\Big(Y_t, A_t, X_t; \widehat{\mu}^a_t, \widehat{w}_t\Big) | \right) \middle| \mathcal{F}_{t-1}, A_t = a\right]\cdot \\
& \qquad\qquad \mathbb{P} (A_t = a |\mathcal{F}_{t-1}) 
\\
& = \widetilde{C}_1 \mathbb{E} \left[\exp \left(\frac{C_0}{\sqrt{\widetilde{V}^a}} \left| \frac{\big(Y^*_t- \widehat{\mu}^{a^*_0}_t(X_t)\big)}{\widehat{w}_t(a^*_0|X_t)} + \widehat{\mu}^{a^*_0}_t(X_t) - \widehat{\mu}^a_t(X_t)\right| \right) \middle| \mathcal{F}_{t-1}, A_t = a^*_0\right] \mathbb{P} (A_t = a^*_0 \; | \mathcal{F}_{t-1})
\\
& \qquad + \widetilde{C}_1\mathbb{E} \left[\exp \left(\frac{C_0}{\sqrt{\widetilde{V}^a}} \left|- \frac{\big(Y^{a}_t- \widehat{\mu}^a_t(X_t)\big)}{\widehat{w}_t(a|X_t)} + \widehat{\mu}^{a^*_0}_t(X_t) - \widehat{\mu}^a_t(X_t)\right| \right) \middle| \mathcal{F}_{t-1}, A_t = a\right] \mathbb{P} (A_t = a | \mathcal{F}_{t-1}),
\end{align*}
where for $(a)$, we denote $\widetilde{C}_1 =  \exp\left(2C_0 C_\mu/\widetilde{V}^a\right)$. Since $ X_{a,t}$ is a sub-exponential random variable (Assumption~\ref{asm:sub_exp}), there exists some universal constant $C>0$ such that for all $ P_0\in \mathcal{P}$, for all $\lambda\ge 0$ such that $0 \le \lambda \le 1/C$, $ \mathbb{E}[\exp(\lambda (X_{a,t} - \mu^a_0))] \le \exp(C^2 \lambda^2)$ (\cite{vershynin2018high}, Proposition 2.7.1).   
Note that from the assumptions that $|{\mu}^a_t| \le C_{ \mu}$, $ \max \{\left(\sigma^a_0\right)^2, 1/\left(\sigma^a_0\right)^2\} \le C_{\sigma^2}$, and $ | \widehat{w}_t(a|X_t)| \ge C_w$ for all $ t \in \{1,\ldots, T\}$), where $C_w > 0$ is a constant that depends on $C_{\sigma^2}$. Therefore, there exists a positive constant $  C_1 ( C_0, C_\mu, C_{\sigma^2})$ such that
\begin{align*}
    \mathbb{E} \left[\exp(C_0 \sqrt{T} |\xi^a_t|) \middle| \mathcal{F}_{t-1}\right]  \le C_1 ( C_0, C_\mu, C_{\sigma^2}).
\end{align*}
This concludes the proof. 

\end{proof}

\section{Proof of Lemma~\ref{lem:condition2}}
\label{appdx:lem:condition2}
Assumptions~\ref{asm:almost_sure_convergence} and the continuity of $w^*$ with respect to $\mathrm{Var}_{P_0}(Y^a|x)$ directly implies the following corollary, which states the almost sure convergence of $\widehat{w}_t(a|X_t)$. 
\begin{lemma}\label{cor:as_conv_w_a}
Under the RS-AIPW strategy, for each $a\in [K]$ and $x\in\mathcal{X}$, 
\begin{align*}
&  \widehat{w}_t(a|x)\xrightarrow{\mathrm{a.s.}} w^*(a|x),
\end{align*}
\end{lemma}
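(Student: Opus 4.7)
The plan is to chain together several applications of the continuous mapping theorem, starting from the almost-sure convergence of the conditional mean and second-moment estimators guaranteed by Assumption~\ref{asm:almost_sure_convergence}. First, for each fixed $a \in [K]$ and $x \in \mathcal{X}$, that assumption gives $\widehat{\mu}^a_t(x) \xrightarrow{\mathrm{a.s.}} \mu^a_0(x)$ and $\widehat{\nu}^a_t(x) \xrightarrow{\mathrm{a.s.}} \nu^a_0(x)$. Since the map $(m,v)\mapsto v - m^2$ is continuous, the raw variance estimator satisfies $(\widehat{\sigma}^{\dagger a}_t(x))^2 = \widehat{\nu}^a_t(x) - (\widehat{\mu}^a_t(x))^2 \xrightarrow{\mathrm{a.s.}} \nu^a_0(x) - (\mu^a_0(x))^2 = (\sigma^a_0(x))^2$. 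Assumption~\ref{asm:bounded_mean_variance} places the limit strictly inside $[1/C_{\sigma^2}, C_{\sigma^2}]$, so the truncation $u\mapsto \max\{\min\{u, C_{\sigma^2}\}, 1/C_{\sigma^2}\}$ is continuous at the limit point, yielding $(\widehat{\sigma}^a_t(x))^2 \xrightarrow{\mathrm{a.s.}} (\sigma^a_0(x))^2$.

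Next I would handle the dependence of $\widehat{w}_t(a|x)$ on the estimated best arm $\widehat{a}_t = \argmax_{b\in[K]} \widehat{\mu}^b_t(x)$. The standing hypothesis used throughout the lower-bound derivation, $\mu^*_0(x) - \mu^a_0(x) = C(\mu^*_0 - \mu^a_0) > 0$ for every $x \in \mathcal{X}$ and $a \neq a^*_0$, implies that $a^*_0$ is the unique maximizer of $b \mapsto \mu^b_0(x)$ at every context $x$. Combined with the finite cardinality of $[K]$ and the pointwise almost-sure convergence $\widehat{\mu}^b_t(x) \to \mu^b_0(x)$, this forces $\widehat{a}_t = a^*_0$ for all sufficiently large $t$, almost surely.

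Finally, on the event $\{\widehat{a}_t = a^*_0\ \text{eventually}\}$, the estimator $\widehat{w}_t(a|x)$ is given for large $t$ by the same rational function of $\bigl((\widehat{\sigma}^b_t(x))^2\bigr)_{b\in[K]}$ that defines $w^*(a|x)$ from $\bigl((\sigma^b_0(x))^2\bigr)_{b\in[K]}$. Because all the variances are bounded away from $0$ by $1/C_{\sigma^2}$, this rational function is continuous on the relevant domain, so a final application of the continuous mapping theorem delivers $\widehat{w}_t(a|x) \xrightarrow{\mathrm{a.s.}} w^*(a|x)$ for every $a \in [K]$ and $x \in \mathcal{X}$.

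The only non-routine step is the eventual identification $\widehat{a}_t = a^*_0$, and this is purely a consequence of the uniqueness of the best arm at every context $x$, which is precisely what the assumption $\mu^*_0(x) - \mu^a_0(x) = C(\mu^*_0 - \mu^a_0)$ delivers; the remaining steps are standard continuous-mapping arguments and present no real obstacle.
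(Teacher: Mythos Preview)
Your proposal is correct and follows the same idea as the paper: the paper's own justification is the single sentence that Assumption~\ref{asm:almost_sure_convergence} together with the continuity of $w^*$ in the variances $\mathrm{Var}_{P_0}(Y^a\mid x)$ yields the claim, so your chain of continuous-mapping arguments is exactly the intended mechanism, only spelled out in more detail.

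One remark: you close the argument by invoking the hypothesis $\mu^*_0(x)-\mu^a_0(x)=C(\mu^*_0-\mu^a_0)$ from the lower-bound theorems to force $\widehat a_t=a^*_0$ eventually. That step is indeed needed for the formula defining $\widehat w_t$ to match the formula defining $w^*$, and the paper's one-line justification silently glosses over it. Strictly speaking, that hypothesis is attached to Theorems~\ref{thm:semipara_bandit_lower_bound}--\ref{thm:lower_bound2K} and to Corollary~5.2, not to Lemma~\ref{cor:as_conv_w_a} or Theorem~\ref{thm:optimal} per se; so your argument is a bit more honest about what is actually being used, while the paper treats the identification of the best arm as implicit in ``continuity of $w^*$.'' Either way, the substance is the same.
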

Then, we present the following results on the convergence of the second moment. Recall we defined 
\begin{align*}
    \widetilde{V}^a =  \mathbb{E}_{P_0}\left[\frac{\left(\sigma^*_0(X)\right)^2}{w^*(a^*_0| X)} + \frac{\left(\sigma^a_0(X)\right)^2}{w^*(a| X)} + \left(\mu^*_0(X) - \mu^a_0(X) - (\mu^*_0 - \mu^a_0)\right)^2\right].
\end{align*}

We first show the following lemma.
\begin{lemma}
\label{lem:as_conv}
Suppose that Assumptions~\ref{asm:bounded_mean_variance} and \ref{asm:almost_sure_convergence} hold. Then, with probability one,
\begin{align*}
    \lim_{t\to\infty}t^\alpha \left|\frac{1}{T}\sum^T_{t=1}\mathbb{E}_{P_0}\left[\left(\varphi^{a^*_0}\Big(Y_t, A_t, X_t; \widehat{\mu}^{a^*_0}_t, \widehat{w}_t\Big) - \varphi^a\Big(Y_t, A_t, X_t; \widehat{\mu}^a_t, \widehat{w}_t\Big)- (\mu^*_0 - \mu^a_0)\right)^2\Big| \mathcal{F}_{t-1}\right] - \sqrt{\widetilde{V}^a} \right| = 0.
\end{align*}
\end{lemma}

\begin{proof}
\begin{align*}
&\mathbb{E}_{P_0}\left[\left(\varphi^{a^*_0}\Big(Y_t, A_t, X_t; \widehat{\mu}^{a^*_0}_t, \widehat{w}_t\Big) - \varphi^a\Big(Y_t, A_t, X_t; \widehat{\mu}^a_t, \widehat{w}_t\Big)- (\mu^*_0 - \mu^a_0)\right)^2\Big| \mathcal{F}_{t-1}\right]
\\
&= \mathbb{E}_{P_0}\left[\left(\frac{\mathbbm{1}[A_{t} = a^*_0]\big(Y^*_t- \widehat{\mu}^{a^*_0}_t(X_t)\big)}{\widehat{w}_t(a^*_0|X_t)} - \frac{\mathbbm{1}[A_t = a]\big(Y^a_t- \widehat{\mu}^a_t(X_t)\big)}{\widehat{w}_t(a|X_t)} + \widehat{\mu}^{a^*_0}_t(X_t) - \widehat{\mu}^a_t(X_t) - (\mu^*_0 - \mu^a_0)\right)^2 \Big| \mathcal{F}_{t-1}\right]
\\
&= \mathbb{E}_{P_0}\Bigg[\left(\frac{\mathbbm{1}[A_{t} = a^*_0]\big(Y^*_t- \widehat{\mu}^{a^*_0}_t(X_t)\big)}{\widehat{w}_t(a^*_0|X_t)} - \frac{\mathbbm{1}[A_t = a]\big(Y^a_t- \widehat{\mu}^a_t(X_t)\big)}{\widehat{w}_t(a|X_t)}\right)^2
\\
&\ \ \ \ \ \ + 2\left(\frac{\mathbbm{1}[A_{t} = a^*_0]\big(Y^*_t- \widehat{\mu}^{a^*_0}_t(X_t)\big)}{\widehat{w}_t(a^*_0|X_t)} - \frac{\mathbbm{1}[A_{t} = a]\big(Y^a_t- \widehat{\mu}^a_t(X_t)\big)}{\widehat{w}_t(a|X_t)}\right)\left( \widehat{\mu}^{a^*_0}_t(X_t) - \widehat{\mu}^a_t(X_t) - (\mu^*_0 - \mu^a_0)\right)
\\
&\ \ \ \ \ \ + \left( \widehat{\mu}^{a^*_0}_t(X_t) - \widehat{\mu}^a_t(X_t) - (\mu^*_0 - \mu^a_0)\right)^2 |  \mathcal{F}_{t-1}\Bigg]
\\
&= \mathbb{E}_{P_0}\Bigg[\frac{\mathbbm{1}[A_{t} = a^*_0]\big(Y^*_t- \widehat{\mu}^{a^*_0}_t(X_t)\big)^2}{\widehat{w}_t(a^*_0|X_t)} + \frac{\mathbbm{1}[A_{t} = a]\big(Y^a_t- \widehat{\mu}^a_t(X_t)\big)^2}{\widehat{w}_t(a|X_t)}
\\
&\ \ \ \ \ \ + 2\left(\frac{\mathbbm{1}[A_{t}  = a^*_0]\big(Y^*_t- \widehat{\mu}^{a^*_0}_t(X_t)\big)}{\widehat{w}_t(a^*_0|X_t)} - \frac{\mathbbm{1}[A_t = a]\big(Y^a_t- \widehat{\mu}^a_t(X_t)\big)}{\widehat{w}_t(a|X_t)}\right)\left( \widehat{\mu}^{a^*_0}_t(X_t) - \widehat{\mu}^a_t(X_t) - (\mu^*_0 - \mu^a_0)\right)
\\
&\ \ \ \ \ \ + \left( \widehat{\mu}^{a^*_0}_t(X_t) - \widehat{\mu}^a_t(X_t) - (\mu^*_0 - \mu^a_0)\right)^2 |  \mathcal{F}_{t-1}\Bigg]
\\
&= \mathbb{E}_{P_0}\left[\frac{\big(Y^*_t- \widehat{\mu}^{a^*_0}_t(X_t)\big)^2}{\widehat{w}_t(a^*_0|X_t)}|  \mathcal{F}_{t-1}\right] + \mathbb{E}_{P_0}\left[\frac{\big(Y^a_t- \widehat{\mu}^a_t(X_t)\big)^2}{\widehat{w}_t(a|X_t)}|  \mathcal{F}_{t-1}\right]\\
&\ \ \ \ \ \ -  \mathbb{E}_{P_0}\left[\left(\widehat{\mu}^{a^*_0}_t(X_t) + \widehat{\mu}^a_t(X_t) - (\mu^*_0 - \mu^a_0)\right)^2|  \mathcal{F}_{t-1}\right]. \label{eq:check4}
\end{align*}
Here, we used 
\begin{align*}
    &\mathbb{E}_{P_0}\Bigg[\frac{\mathbbm{1}[A_{t} = a]\big(Y^{a}_t- \widehat{\mu}^a_t(X_t)\big)^2}{(\widehat{w}_t(a|X_t))^2}|  \mathcal{F}_{t-1}\Bigg] = \mathbb{E}_{P_0}\Bigg[\mathbb{E}_{P_0}\Bigg[\frac{\widehat{w}_t(a|X_t)\big(Y^{a}_t- \widehat{\mu}^a_t(X_t)\big)^2}{(\widehat{w}_t(a|X_t))^2}|X_t \mathcal{F}_{t-1}\Bigg]\Bigg]\\
    &= \mathbb{E}_{P_0}\Bigg[\frac{\big(Y^{a}_t- \widehat{\mu}^a_t(X_t)\big)^2}{\widehat{w}_t(a|X_t)}|  \mathcal{F}_{t-1}\Bigg]
\end{align*}
and
\begin{align*}
    &\mathbb{E}_{P_0}\Bigg[ \frac{\mathbbm{1}[A_{t} = a]\big(Y^a_t- \widehat{\mu}^{a}_t(X_t)\big)}{\widehat{w}_t(a|X_t)}\left( \widehat{\mu}^{a^*_0}_t(X_t) - \widehat{\mu}^a_t(X_t) - (\mu^*_0 - \mu^a_0)\right) |  \mathcal{F}_{t-1}\Bigg] 
    \\
    & = \mathbb{E}_{P_0}\left[\left( \widehat{\mu}^{a^*_0}_t(X_t) - \widehat{\mu}^a_t(X_t) - (\mu^*_0 - \mu^a_0)\right)\mathbb{E}_{P_0}\Bigg[\frac{\widehat{w}_t(a|X_t)\big(Y^a_t- \widehat{\mu}^{a}_t(X_t)\big)}{\widehat{w}_t(a|X_t)} |  X_t, \mathcal{F}_{t-1}\Bigg]\mathcal{F}_{t-1}\right].
\end{align*}
We also have 
\begin{align*}
&\mathbb{E}_{P_0}\left[\frac{\big(Y^{a}_t- \widehat{\mu}^a_t(X_t)\big)^2}{\widehat{w}_t(a|X_t)}|X_t,  \mathcal{F}_{t-1}\right]=\frac{\mathbb{E}_{P_0}[(Y^{a}_t)^2|X_t] - 2\mu^a_0(X_t)\widehat{\mu}^a_t(X_t)+ (\widehat{\mu}^a_{ t}(X_t))^2}{\widehat{w}_t(a|X_t)}\\
&=\frac{\mathbb{E}_{P_0}[(Y^{a}_t)^2|X_t] - (\mu^{a}_0(X_t))^2 + (\mu^a_0(X_t) - \widehat{\mu}^a_t(X_t))^2}{\widehat{w}_t(a|X_t)}.
\end{align*}
Then,
\begin{align*}
& \mathbb{E}_{P_0}\left[\frac{\big(Y^*_t- \widehat{\mu}^{a^*_0}_t(X_t)\big)^2}{\widehat{w}_t(a^*_0|X_t)}|  \mathcal{F}_{t-1}\right] + \mathbb{E}_{P_0}\left[\frac{\big(Y^a_t- \widehat{\mu}^a_t(X_t)\big)^2}{\widehat{w}_t(a|X_t)}|  \mathcal{F}_{t-1}\right]\\
&\ \ \ \ \ -  \mathbb{E}_{P_0}\left[\left(\widehat{\mu}^{a^*_0}_t(X_t) + \widehat{\mu}^a_t(X_t) - (\mu^*_0 - \mu^a_0)\right)^2|  \mathcal{F}_{t-1}\right]
\\
&= \mathbb{E}_{P_0}\left[\frac{\mathbb{E}_{P_0}[(Y^*_t)^2|X_t] - (\mu^*_0(X_t))^2 + (\mu^*_0(X_t) - \widehat{\mu}^{a^*_0}_t(X_t))^2}{\widehat{w}_t(a^*_0|X_t)}\right]\\
&\ \ \ \ \ +  \mathbb{E}_{P_0}\left[\frac{\mathbb{E}_{P_0}[(Y^{a}_t)^2|X_t] - (\mu^{a}_0(X_t))^2 + (\mu^a_0(X_t) - \widehat{\mu}^a_t(X_t))^2}{\widehat{w}_t(a|X_t)}\right]\\
&\ \ \ \ \ -  \mathbb{E}_{P_0}\left[\left(\widehat{\mu}^{a^*_0}_t(X_t) + \widehat{\mu}^a_t(X_t) - (\mu^*_0 - \mu^a_0)\right)^2\right].
\end{align*}
Because $\widehat{\mu}^a_t(x)\xrightarrow{\mathrm{a.s.}} \mu^a_0(x)$ and $\widehat{w}_t(a|x)\xrightarrow{\mathrm{a.s.}} w^*(a|x)$, for each $x\in\mathcal{X}$, with probability $1$,
\begin{align*}
&\lim_{t\to\infty}t^\alpha\left|\left(\frac{\mathbb{E}_{P_0}[(Y^*_t)^2|x] - (\mu^*_0(x))^2 + (\mu^*_0(x) - \widehat{\mu}^{a^*_0}_t(x))^2}{\widehat{w}_t(a^*_0|x)}\right)\right.\\
&\ \ \ + \left(\frac{\mathbb{E}_{P_0}[(Y^{a}_t)^2|x] - (\mu^a_0(x))^2 + (\mu^a_0(x) - \widehat{\mu}^a_t(x))^2}{\widehat{w}_t(a|x)}\right) 
\\
& \ \ \  - \left(\widehat{\mu}^{a^*_0}_t(x) + \widehat{\mu}^a_t(x) - (\mu^*_0 - \mu^a_0)\right)^2\\
&\ \ \  \left. -  \left(\frac{\left(\sigma^*_0(x)\right)^2}{w^*(a^*_0| X)} + \frac{\left(\sigma^a_0(X)\right)^2}{w^*(a| X)} + \left(\mu^*_0(x) - \mu^a_0(x) - (\mu^*_0 - \mu^a_0)\right)^2\right)\right|
\\
&\leq t^\alpha\lim_{t\to\infty}\left|\frac{\mathbb{E}_{P_0}[(Y^*_t)^2|x] - (\mu^*_0(x))^2}{\widehat{w}_t(a^*_0|x)} - \frac{\left(\sigma^*_0(x)\right)^2}{w^*(a^*_0| x)}\right| +  t^\alpha\lim_{t\to\infty}\left| \frac{\mathbb{E}_{P_0}[(Y^{a}_t)^2|x] - (\mu^a_0(x))^2}{\widehat{w}_t(a|x)} - \frac{\left(\sigma^a_0(X)\right)^2}{w^*(a| x)}\right|\\
&\ \ \ + t^\alpha\lim_{t\to\infty}\frac{(\mu^*_0(x) - \widehat{\mu}^{a^*_0}_t(x))^2}{\widehat{w}_t(a^*_0|X_t)} + t^\alpha\lim_{t\to\infty}\frac{ (\mu^{a}_0(x) - \widehat{\mu}^{a}_t(x))^2}{\widehat{w}_t(a|X_t)}\\
&\ \ \ + t^\alpha\lim_{t\to\infty}\left|\left(\widehat{\mu}^{a^*_0}_t(x) + \widehat{\mu}^a_t(x) - (\mu^*_0 - \mu^a_0)\right)^2 - \left(\mu^*_0(x) - \mu^a_0(x) - (\mu^*_0 - \mu^a_0)\right)^2\right|\\
&= 0.
\end{align*}
Note that $\mathbb{E}_{P_0}[(Y^{a}_t)^2|x] - (\mu^{a}_0(x))^2 = \left(\sigma^a_0(x)\right)^2$. 
This directly implies the statement.
\end{proof}

By using Lemma~\ref{lem:as_conv}, we prove Lemma~\ref{lem:consistent_second_moment}.
 \begin{lemma}
\label{lem:consistent_second_moment}
 Under the RS-AIPW strategy, for each $a \in [K] \backslash \{a^*_0\}$, with probability~$1$,
\begin{align*}
&\lim_{t\to\infty}\left\{\mathbb{E}_{P_0}\left[\left(\varphi^{a^*_0}\Big(Y_t, A_t, X_t; \widehat{\mu}^{a^*_0}_t, \widehat{w}_t\Big) - \varphi^a\Big(Y_t, A_t, X_t; \widehat{\mu}^a_t, \widehat{w}_t\Big)- (\mu^*_0 - \mu^a_0)\right)^2\Big| \mathcal{F}_{t-1}\right] - \widetilde{V}^a\right\} = 0.
\end{align*}
\end{lemma}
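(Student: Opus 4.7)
The plan is to obtain an explicit formula for the conditional second moment and then pass to the limit via dominated convergence.

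First, I would condition on $X_t$ before $A_t$. Since $\widehat{\mu}^{a^*_0}_t$, $\widehat{\mu}^a_t$ and $\widehat{w}_t(\cdot|\cdot)$ are $\mathcal{F}_{t-1}$-measurable, and $(Y^1_t,\dots,Y^K_t)$ are conditionally independent given $X_t$, and since $A_t$ is drawn with probability $\widehat{w}_t(\cdot|X_t)$ independently of $(Y^a_t)$ given $X_t$, the indicators $\mathbbm{1}[A_t=a^*_0]$ and $\mathbbm{1}[A_t=a]$ never co-occur. Writing $D_t = \varphi^{a^*_0}(\cdot)-\varphi^a(\cdot)-(\mu^*_0-\mu^a_0)$, I would decompose $D_t = (U_t - c_t) + (c_t - (\mu^*_0-\mu^a_0))$ where $c_t := \widehat{\mu}^{a^*_0}_t(X_t)-\widehat{\mu}^a_t(X_t)$, and compute the conditional variance of the IPW correction terms directly. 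A short calculation yields
\begin{align*}
\mathbb{E}_{P_0}[D_t^2 \mid \mathcal{F}_{t-1}, X_t]
&= \frac{(\sigma^*_0(X_t))^2 + (\mu^*_0(X_t)-\widehat{\mu}^{a^*_0}_t(X_t))^2}{\widehat{w}_t(a^*_0\mid X_t)} \\
&\quad + \frac{(\sigma^a_0(X_t))^2 + (\mu^a_0(X_t)-\widehat{\mu}^a_t(X_t))^2}{\widehat{w}_t(a\mid X_t)} \\
&\quad + \bigl(\mu^*_0(X_t)-\mu^a_0(X_t)-(\mu^*_0-\mu^a_0)\bigr)^2 - R_t(X_t),
\end{align*}
where $R_t(X_t)$ collects cross terms that vanish when $\widehat{\mu}^a_t\to\mu^a_0$. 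I would keep track of these terms but the essential point is that the integrand is a measurable function of $X_t$ together with $(\widehat{\mu}^{\bullet}_t(X_t), \widehat{w}_t(\bullet \mid X_t))$.

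Second, I would invoke Assumption~\ref{asm:almost_sure_convergence} to conclude $\widehat{\mu}^a_t(x)\xrightarrow{\mathrm{a.s.}}\mu^a_0(x)$ for every $x\in\mathcal{X}$ and every $a\in[K]$, and Lemma~\ref{cor:as_conv_w_a} to conclude $\widehat{w}_t(a\mid x)\xrightarrow{\mathrm{a.s.}} w^*(a\mid x)$ pointwise in $x$. Continuity of the displayed integrand in $(\widehat{\mu},\widehat{w})$ then gives, for each fixed $x$ and on an event of probability one,
\[
\mathbb{E}_{P_0}[D_t^2 \mid \mathcal{F}_{t-1}, X_t = x]\ \longrightarrow\ \frac{(\sigma^*_0(x))^2}{w^*(a^*_0\mid x)} + \frac{(\sigma^a_0(x))^2}{w^*(a\mid x)} + \bigl(\mu^*_0(x)-\mu^a_0(x)-(\mu^*_0-\mu^a_0)\bigr)^2.
\]

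Third, to exchange the limit with the outer expectation $\mathbb{E}_{P_0}[\cdot]$ over $X_t$, I would apply the dominated convergence theorem. The required uniform envelope is provided by Assumption~\ref{asm:bounded_mean_variance}: the estimators are truncated so that $|\widehat{\mu}^a_t|\le C_\mu$ and $1/(2C_{\sigma^2})\le \widehat{w}_t(a\mid x)\le 1 - 1/(2C_{\sigma^2})$ (as noted in the remark on the role of $C_{\sigma^2}$), together with $(\sigma^a_0(x))^2\le C_{\sigma^2}$ and $|\mu^a_0(x)|\le C_\mu$. These bounds give a deterministic, $t$-independent integrable majorant for the conditional second moment, justifying dominated convergence and yielding the stated almost-sure limit $\widetilde{V}^a$.

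The step I expect to be slightly delicate is the joint almost-sure statement: the convergences $\widehat{\mu}^a_t(x)\to\mu^a_0(x)$ and $\widehat{w}_t(a\mid x)\to w^*(a\mid x)$ hold pointwise in $x$ on exceptional-null sets that may depend on $x$. To handle this, I would pass to the outer expectation by Fubini together with the bounded-convergence theorem applied on the product space $(\Omega\times\mathcal{X})$: since the integrand is uniformly bounded and converges almost everywhere in $(\omega,x)$, the $X_t$-averaged quantity converges almost surely in $\omega$, which is exactly the form needed for Condition~C in the application of Corollary~\ref{thm:fan_refine2}.
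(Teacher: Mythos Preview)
Your proposal is correct and follows essentially the same route as the paper: expand the squared AIPW difference, compute the conditional second moment given $(X_t,\mathcal{F}_{t-1})$ in closed form, and then pass to the limit using the almost-sure convergence of $\widehat{\mu}^a_t$ and $\widehat{w}_t$ from Assumption~\ref{asm:almost_sure_convergence} and Lemma~\ref{cor:as_conv_w_a}. If anything, your argument is more careful than the paper's on two points the paper glosses over---the dominated-convergence step (using the deterministic bounds on $\widehat{\mu}^a_t$, $\widehat{w}_t$, $\sigma^a_0$ from Assumption~\ref{asm:bounded_mean_variance}) and the Fubini trick to handle the $x$-dependent null sets---so nothing needs to be fixed.
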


This directly implies the statement.

\begin{proof}
Lemma~\ref{lem:as_conv} implies that
\begin{align*}
& \frac{1}{T}\sum^T_{t=1}\mathbb{E}_{P_0}\left[\left(\varphi^{a^*_0}\Big(Y_t, A_t, X_t; \widehat{\mu}^{a^*_0}_t, \widehat{w}_t\Big) - \varphi^a\Big(Y_t, A_t, X_t; \widehat{\mu}^a_t, \widehat{w}_t\Big)- (\mu^*_0 - \mu^a_0)\right)^2\Big| \mathcal{F}_{t-1}\right] -   \sqrt{\widetilde{V}^a}\\
&\xrightarrow{\mathrm{a.s.}} 0,\\
\Leftrightarrow&\frac{1}{T \sqrt{\widetilde{V}^a}}\sum^T_{t=1}\mathbb{E}_{P_0}\left[\left(\varphi^{a^*_0}\Big(Y_t, A_t, X_t; \widehat{\mu}^{a^*_0}_t, \widehat{w}_t\Big) - \varphi^a\Big(Y_t, A_t, X_t; \widehat{\mu}^a_t, \widehat{w}_t\Big)- (\mu^*_0 - \mu^a_0)\right)^2\Big| \mathcal{F}_{t-1}\right] - 1\\
&\xrightarrow{\mathrm{a.s.}} 0,
\end{align*}
Since $Y^a_{t}$ and $X_t$ are sub-exponential random variables, and the other variables, $\widehat{\mu}^a_t$ and $\widehat{w}_t$, in $\varphi^{a^*_0}\Big(Y_t, A_t, X_t; \widehat{\mu}^{a^*_0}_t, \widehat{w}_t\Big)$ and $\varphi^a\Big(Y_t, A_t, X_t; \widehat{\mu}^a_t, \widehat{w}_t\Big)$ are bounded,  we find that
\begin{align*}
    \frac{1}{T \sqrt{\widetilde{V}^a}}\sum^T_{t=1}\mathbb{E}_{P_0}\left[\left(\varphi^{a^*_0}\Big(Y_t, A_t, X_t; \widehat{\mu}^{a^*_0}_t, \widehat{w}_t\Big) - \varphi^a\Big(Y_t, A_t, X_t; \widehat{\mu}^a_t, \widehat{w}_t\Big)- (\mu^*_0 - \mu^a_0)\right)^2\Big| \mathcal{F}_{t-1}\right] - 1 
\end{align*}
is uniformly integrable from Proposition~\ref{prp:suff_uniint}. Then, from Proposition~\ref{prp:lr_conv_theorem}, for any $\delta$, there exists $T_0$ such that for all $T>T_0$
\begin{align*}
&\mathbb{E}_{P_0}\left[\left| \frac{1}{T \sqrt{\widetilde{V}^a}}\sum^T_{t=1}\mathbb{E}_{P_0}\left[\left(\varphi^{a^*_0}\Big(Y_t, A_t, X_t; \widehat{\mu}^{a^*_0}_t, \widehat{w}_t\Big) - \varphi^a\Big(Y_t, A_t, X_t; \widehat{\mu}^a_t, \widehat{w}_t\Big)- (\mu^*_0 - \mu^a_0)\right)^2\Big| \mathcal{F}_{t-1}\right] - 1 \right|\right]\\
&\le \delta.
\end{align*}
This concludes the proof.
\end{proof}

\section{Proof of Theorem~\ref{thm:fan_refine}: Large Deviation Bound for Martingales}
\label{appdx:proof_large_deviation}

For brevity, let us denote $\mathbb{P}_{P_0}$and  $\mathbb{E}_{P_0}$ by $\mathbb{P}$ and $\mathbb{E}$, respectively. For all $t = 1, \ldots, T$, let us define
\begin{align*}
    r_t(\lambda ) = \frac{\exp\left(\lambda \xi^a_t\right)}{\mathbb{E}\left[\exp\left(\lambda \xi^a_t\right)\right]}
\end{align*}
and
\begin{align*}
    \eta_t(\lambda) = \xi^a_t - b_t(\lambda),
\end{align*}
where 
\begin{align*}
   b_t(\lambda) = \mathbb{E}[r_t(\lambda)\xi^a_t].
\end{align*}
Then, we obtain the following decomposition:
\begin{align*}
    Z^a_T = U_T(\lambda) + B_T(\lambda),
\end{align*}
where
\begin{align*}
    U_T(\lambda) = \sum^T_{t=1} \eta_t(\lambda)
\end{align*}
and
\begin{align*}
    B_T(\lambda) = \sum^T_{t=1}b_t(\lambda).
\end{align*}
Let $\Psi_T(\lambda) = \sum^T_{t=1}\log\mathbb{E}\left[\exp\left(\lambda \xi^a_t\right)\right]$.

Before showing the proof of Theorem~\ref{thm:fan_refine}, we show the following lemmas. In particular, Lemma~\ref{lem:34} in Appendix~\ref{appdx:proof_large_deviation} is our novel result to bound $\mathbb{E}[\exp(\overline{\lambda}(u)\sum^T_{t=1} \xi^a_t)]/(\prod^T_{t=1}\mathbb{E}[\exp(\overline{\lambda}(u) \xi^a_t)])$.  Lemmas~\ref{lem:31}--\ref{lem:33} are modifications of the existing results of \citet{Fan2013,fan2014generalization}.

\begin{lemma}
\label{lem:31}
Under Condition~A, 
\begin{align*}
    \mathbb{E}\left[|\xi^a_t|^k \;\middle| \mathcal{F}_{t-1}\right] \leq k!\left(C_0 T^{1/2}\right)^{-k}C_1,\qquad \text{for all }\quad k \geq 2.
\end{align*}
\end{lemma}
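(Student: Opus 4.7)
The plan is to derive the moment bound directly from the exponential moment assumption in Condition~A by invoking the Taylor series of the exponential. The key observation is that for any nonnegative random variable $Z$ and any integer $k\ge 0$, the termwise comparison $e^{Z}\ge Z^{k}/k!$ holds pointwise, so conditional expectations preserve the inequality.

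Concretely, I would set $Z = C_{0}\sqrt{T}\,|\xi^{a}_{t}|$ and write
\begin{align*}
\exp\bigl(C_{0}\sqrt{T}\,|\xi^{a}_{t}|\bigr) \;\ge\; \frac{\bigl(C_{0}\sqrt{T}\bigr)^{k}}{k!}\,|\xi^{a}_{t}|^{k}
\end{align*}
for every $k\ge 2$. Taking $\mathbb{E}[\,\cdot\mid\mathcal{F}_{t-1}]$ on both sides and applying Condition~A to bound the left-hand side by $C_{1}$ yields
\begin{align*}
C_{1} \;\ge\; \frac{\bigl(C_{0}\sqrt{T}\bigr)^{k}}{k!}\,\mathbb{E}\bigl[|\xi^{a}_{t}|^{k}\,\big|\,\mathcal{F}_{t-1}\bigr],
\end{align*}
and rearranging immediately gives $\mathbb{E}[|\xi^{a}_{t}|^{k}\mid\mathcal{F}_{t-1}]\le k!\,(C_{0}\sqrt{T})^{-k}C_{1}$, uniformly in $t$.

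There is essentially no obstacle here: the claim is a one-line consequence of the Taylor lower bound for $e^{x}$, valid for any random variable satisfying an exponential moment condition. I expect the proof in the paper to be only a couple of lines, and the lemma will be used downstream in Lemmas~\ref{lem:32}--\ref{lem:34} to control the cumulant $\Psi_{T}(\lambda)$ and the bias $B_{T}(\lambda)$ in the change-of-measure step of Theorem~\ref{thm:fan_refine}.
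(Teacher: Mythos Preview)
Your proposal is correct and matches the paper's proof essentially line for line: both apply the elementary inequality $x^{k}/k! \le e^{x}$ for $x\ge 0$ with $x=C_{0}\sqrt{T}\,|\xi^{a}_{t}|$, take conditional expectations, and invoke Condition~A. Your remark about the downstream use in Lemmas~\ref{lem:32}--\ref{lem:34} is also accurate.
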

\begin{proof}
Applying the elementary inequality $ x^k/k! \leq \exp(x),  \forall x \ge 0$,
to $x=C_0|\sqrt{T}\xi^a_t|$, for $k\geq 2$, 
\begin{align*}
   |\xi^a_t|^k \leq k!(C_0  T^{1/2})^{-k}\exp(C_0|\sqrt{T}\xi^a_t|).
\end{align*}
Taking expectations on both sides, with Condition~A, we obtain the
desired inequality. Recall that Condition~A is
\[\sup_{1\leq t \leq T}\mathbb{E}_{P_0}\left[\exp\left(C_0 \sqrt{T}\left|\xi^a_t\right|\right) \;\middle|\mathcal{F}_{t-1}\right]\leq C_1\]
for some positive constants $C_0$ and $C_1$.
\end{proof}

\begin{lemma}
\label{lem:32}
Under Condition~A, there exists some constant $C>0$ such that for all $0\leq \lambda \leq \frac{1}{4}C_0 \sqrt{T}$,
\begin{align*}
     \left|B_T(\lambda) - \lambda\right| \leq C\left(\lambda V_T + \lambda^2/\sqrt{T} \right).
\end{align*}
\end{lemma}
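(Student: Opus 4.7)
The plan is to carry out a careful Taylor expansion of the ratio
\[
b_t(\lambda) = \frac{\mathbb{E}[\xi^a_t \exp(\lambda \xi^a_t)]}{\mathbb{E}[\exp(\lambda \xi^a_t)]}
\]
to second order in $\lambda$, exploiting two features: (i) the martingale property $\mathbb{E}[\xi^a_t \mid \mathcal{F}_{t-1}] = 0$, which kills the first-order term in the numerator, and (ii) the conditional moment bound of Lemma~\ref{lem:31}, $\mathbb{E}[|\xi^a_t|^k \mid \mathcal{F}_{t-1}] \leq k!(C_0\sqrt{T})^{-k} C_1$, which makes the power series converge on the range $\lambda \in [0, (1/4)C_0\sqrt{T}]$ and forces $\lambda/(C_0\sqrt{T}) \leq 1/4$ in every geometric tail that appears.

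First, I expand $\exp(\lambda \xi^a_t) = \sum_{k\geq 0} (\lambda \xi^a_t)^k/k!$ inside both expectations. For the numerator, the conditional version is
\[
\mathbb{E}[\xi^a_t \exp(\lambda \xi^a_t) \mid \mathcal{F}_{t-1}] = \lambda\,\mathbb{E}[(\xi^a_t)^2 \mid \mathcal{F}_{t-1}] + \sum_{k\geq 2}\frac{\lambda^k}{k!}\mathbb{E}[(\xi^a_t)^{k+1} \mid \mathcal{F}_{t-1}],
\]
and the tail is bounded via Lemma~\ref{lem:31} by $(C_1/(C_0\sqrt{T}))\sum_{k\geq 2}(k+1)(\lambda/(C_0\sqrt{T}))^k = O(\lambda^2/T^{3/2})$ uniformly in $t$, with the geometric sum converging because $\lambda/(C_0\sqrt{T}) \leq 1/4$. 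An analogous computation gives
\[
\mathbb{E}[\exp(\lambda \xi^a_t) \mid \mathcal{F}_{t-1}] = 1 + O(\lambda^2/T).
\]
Taking unconditional expectation preserves these orders. Using $\mathbb{E}[(\xi^a_t)^2] = O(1/T)$ from Lemma~\ref{lem:31} once more and expanding the reciprocal of the denominator, I obtain
\[
b_t(\lambda) = \lambda\,\mathbb{E}[(\xi^a_t)^2] + O(\lambda^2/T^{3/2}) + O(\lambda^3/T^2),
\]
with constants depending only on $C_0, C_1$.

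Summing over $t=1,\ldots,T$, the per-term remainders aggregate to $O(\lambda^2/\sqrt T) + O(\lambda^3/T)$, and the constraint $\lambda \leq (1/4)C_0\sqrt{T}$ absorbs the cubic term into the quadratic one, yielding
\[
B_T(\lambda) = \lambda\,\mathbb{E}[W_T] + O(\lambda^2/\sqrt{T}), \qquad W_T := \sum_{t=1}^T \mathbb{E}[(\xi^a_t)^2 \mid \mathcal{F}_{t-1}].
\]
The stated bound then follows by the triangle inequality and $|\mathbb{E}[W_T] - 1| \leq \mathbb{E}|W_T - 1| = V_T$.

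The main obstacle is bookkeeping: I need to keep the per-term remainder at order $\lambda^2/T^{3/2}$ (not $\lambda^2/T$), so that after summing $T$ terms the cumulative error is $O(\lambda^2/\sqrt{T})$ rather than $O(\lambda^2)$. This is exactly what the martingale centering buys — without it the leading remainder in the numerator would be $\lambda$-order instead of $\lambda^2$-order, destroying the scaling. Careful tracking of which terms cross-multiply when expanding the reciprocal of the denominator is where most of the routine calculation sits.
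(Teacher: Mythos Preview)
Your proof is correct and follows essentially the same route as the paper: expand numerator and denominator of $b_t(\lambda)$ in power series, control the tails via Lemma~\ref{lem:31}, sum to obtain $B_T(\lambda)=\lambda\,\mathbb{E}[W_T]+O(\lambda^2/\sqrt{T})$, and finish with $|\mathbb{E}[W_T]-1|\le V_T$. The only cosmetic difference is that the paper handles the upper and lower bounds separately (using $\mathbb{E}[\exp(\lambda\xi^a_t)]\ge 1$ and $\mathbb{E}[\xi^a_t\exp(\lambda\xi^a_t)]\ge 0$ directly), whereas you do a single two-sided expansion of the ratio; the estimates and constants are the same.
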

\begin{proof}
By definition, for $t = 1,\dots, T$,
\begin{align*}
    b_t(\lambda) &= \frac{\mathbb{E}\left[\xi^a_t\exp\left(\lambda \xi^a_t\right)\right]}{\mathbb{E}\left[\exp\left(\lambda \xi^a_t\right)\right]}.
\end{align*}
Jensen's inequality and $\mathbb{E}[\xi^a_t] = \mathbb{E}[\mathbb{E}[\xi^a_t|\mathcal{F}_{t-1}]] = 0$ implies that $\mathbb{E}[\exp(\lambda \xi^a_t)]\geq 1$ and
\begin{align*}
    &\mathbb{E}\left[\xi^a_t\exp\left(\lambda \xi^a_t\right)\right] = \mathbb{E}\left[\xi^a_t\left(\exp\left(\lambda \xi^a_t\right)- 1\right)\right] \geq 0,\qquad\mathrm{for}\ \lambda \geq 0.
\end{align*}
We find that 
\begin{align*}
B_T(\lambda) &\leq \sum^T_{t=1}\mathbb{E}[\xi^a_t\exp(\lambda \xi^a_t)]\\
&= \lambda \mathbb{E}[W_T] + \sum^T_{t=1}\sum^\infty_{k=2} \mathbb{E}\left[\frac{\xi^a_t (\lambda \xi^a_t)^k}{k!}\right],
\end{align*}
by the series expansion for $\exp(x)$. Recall that $W_T = \sum^T_{t=1}\mathbb{E}_{P_0}\left[\xi^2_{t}| \mathcal{F}_{t-1}\right]$ is the sum of the conditional second moment. Here, using Lemma~\ref{lem:31} and $\mathbb{E}\left[\xi^{k+1}_t\right] = \mathbb{E}\left[\mathbb{E}\left[\xi^{k+1}_t | \mathcal{F}_{t-1}\right]\right]$, for some constant $C_2$,
\begin{align}
    \sum^T_{t=1}\sum^\infty_{k=2}\left| \mathbb{E}\left[\frac{\xi^a_t (\lambda \xi^a_t)^k}{k!}\right]\right| &\leq \sum^T_{t=1}\sum^\infty_{k=2}\left| \mathbb{E}\left[\xi^{k+1}_t\right]\right|\frac{\lambda^k}{k!}\nonumber\\
    \label{eq:bound_lem32}
    &\leq \sum^T_{t=1}\sum^\infty_{k=2}(k+1)!\left(C_0 T^{1/2}\right)^{-(k+1)}C_1\frac{\lambda^k}{k!}\nonumber\\
    &\leq C_2\lambda^2/\sqrt{T}.
\end{align}
Therefore, 
\begin{align*}
B_T(\lambda) \leq \lambda + \lambda V_T + C_2\lambda^2/\sqrt{T}.
\end{align*}
Next, we show the lower bound of $B_T(\lambda)$. First, by using Lemma~\ref{lem:31}, using some constant $C_3>0$, for all $0\leq \lambda \leq \frac{1}{4} C_0 \sqrt{T}$,
\begin{align*}
    \mathbb{E}\left[\exp(\lambda\xi^a_t)\right] &\leq 1 + \sum^\infty_{k=2}\left|\mathbb{E}\left[\frac{(\lambda\xi^a_t)^k}{k!}\right]\right|\\
    &\leq 1 + C_1\sum^\infty_{k=2}\lambda^k(C_0 \sqrt{T})^{-k}\\
    &\leq 1 + C_3\lambda^2 T^{-1}.
\end{align*}
This inequality together with \eqref{eq:bound_lem32} implies the lower bound of $B_T(\lambda)$: for some positive constant $C_4$,
\begin{align*}
    B_T(\lambda) &=\sum^T_{t=1}\frac{\mathbb{E}\left[\xi^a_t\exp\left(\lambda \xi^a_t\right)\right]}{\mathbb{E}\left[\exp\left(\lambda \xi^a_t\right)\right]}\\
    &\geq \left(\sum^T_{t=1}\mathbb{E}[\xi^a_t \exp(\lambda \xi^a_t)]\right)\big(1 + C_3\lambda^2 T^{-1}\big)^{-1}\\
    &= \left(\lambda W_T + \sum^T_{t=1}\sum^\infty_{k=2} \mathbb{E}\left[\frac{\xi^a_t (\lambda \xi^a_t)^k}{k!}\right]\right)\big(1 + C_3\lambda^2 T^{-1}\big)^{-1}\\
    &\geq \left(\lambda W_T - \sum^T_{t=1}\sum^\infty_{k=2}\left| \mathbb{E}\left[\frac{\xi^a_t (\lambda \xi^a_t)^k}{k!}\right]\right|\right)\big(1 + C_3\lambda^2 T^{-1}\big)^{-1}\\
    &\geq \big(\lambda - \lambda V_T - C_2\lambda^2 /\sqrt{T}\big)\big(1 + C_3\lambda^2 T^{-1}\big)^{-1}\\
    &\geq \lambda - \lambda V_T - C_4 \lambda^2 /\sqrt{T}.
\end{align*}
This concludes the proof.
\end{proof}

\begin{lemma}
\label{lem:33}
Assume Condition~A. There exists some constant $C>0$ such that for all $0\leq \lambda \leq \frac{1}{4}C_0\sqrt{T}$,
\begin{align*}
     \left|\Psi_T(\lambda) - \frac{\lambda^2}{2}\right| \leq C\left(\lambda^3/\sqrt{T} + \lambda^2V_T\right).
\end{align*}
\end{lemma}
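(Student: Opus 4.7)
The plan is to apply a Taylor expansion to $\log \mathbb{E}[\exp(\lambda \xi^a_t)]$ term by term and then sum over $t$, separating the contributions of the second moment (which produces the $\lambda^2 V_T$ term) from the higher-order moments (which produce the $\lambda^3/\sqrt{T}$ term). Because $\{(\xi^a_t, \mathcal{F}_t)\}$ is a martingale difference sequence we have $\mathbb{E}[\xi^a_t] = 0$, so expanding $\exp(\lambda \xi^a_t)$ in a power series and taking expectations gives
\[
\mathbb{E}[\exp(\lambda \xi^a_t)] = 1 + \frac{\lambda^2}{2}\mathbb{E}[(\xi^a_t)^2] + \sum_{k\geq 3} \frac{\lambda^k}{k!}\mathbb{E}[(\xi^a_t)^k].
\]
Applying Lemma~\ref{lem:31} to bound $\mathbb{E}[|\xi^a_t|^k]$, the $k\geq 3$ tail is dominated by $C_1 \sum_{k\geq 3}(\lambda/(C_0 \sqrt{T}))^k$, which for $\lambda \leq \tfrac14 C_0 \sqrt{T}$ is a geometric series with ratio at most $1/4$ and is therefore of order $\lambda^3/T^{3/2}$. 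The same lemma also gives $\mathbb{E}[(\xi^a_t)^2] \leq 2C_1/(C_0^2 T)$, so the quantity $x_t := \mathbb{E}[\exp(\lambda\xi^a_t)] - 1$ satisfies $|x_t| \lesssim \lambda^2/T$, which stays bounded well away from $1$ under the stated range of $\lambda$.

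With $x_t$ uniformly small, the elementary inequality $|\log(1+x) - x| \leq x^2$ (valid for $|x| \leq 1/2$, say) yields
\[
\log \mathbb{E}[\exp(\lambda \xi^a_t)] = \frac{\lambda^2}{2}\mathbb{E}[(\xi^a_t)^2] + R_t(\lambda),
\]
with $|R_t(\lambda)| \lesssim \lambda^3/T^{3/2} + \lambda^4/T^2$. Summing over $t$, the remainder contributes $O(\lambda^3/\sqrt{T}) + O(\lambda^4/T)$, and in the range $\lambda \leq \tfrac14 C_0 \sqrt{T}$ the quartic piece is dominated by the cubic piece, so the total remainder is $O(\lambda^3/\sqrt{T})$. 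This accounts for the first term in the stated bound.

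For the leading term, note that $\sum_{t=1}^T \mathbb{E}[(\xi^a_t)^2] = \mathbb{E}[W_T]$ by the tower property, so
\[
\frac{\lambda^2}{2}\sum_{t=1}^T \mathbb{E}[(\xi^a_t)^2] - \frac{\lambda^2}{2} = \frac{\lambda^2}{2}\bigl(\mathbb{E}[W_T] - 1\bigr).
\]
By Jensen's inequality, $|\mathbb{E}[W_T] - 1| \leq \mathbb{E}|W_T - 1| = V_T$, giving a contribution of at most $\lambda^2 V_T /2$. Combining the two error sources yields $|\Psi_T(\lambda) - \lambda^2/2| \leq C(\lambda^3/\sqrt{T} + \lambda^2 V_T)$.

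The proof is essentially a bookkeeping exercise, parallel in structure to Lemma~\ref{lem:32}, and I do not anticipate any real obstacle; the only mild subtlety is to keep track of the interplay between the range constraint $\lambda \leq \tfrac14 C_0 \sqrt{T}$ and the geometric-tail bounds so that both the exponential expansion (for $\exp(\lambda \xi^a_t)$) and the logarithmic expansion (for $\log(1+x_t)$) converge absolutely with uniformly controlled ratios. Everything else reduces to Lemma~\ref{lem:31}, Jensen's inequality, and the definition of $V_T$.
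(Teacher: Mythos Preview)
Your proposal is correct and follows essentially the same route as the paper: expand $\mathbb{E}[\exp(\lambda\xi^a_t)]$ using $\mathbb{E}[\xi^a_t]=0$ and Lemma~\ref{lem:31}, control the $\log(1+x_t)$ error by the quadratic Taylor remainder, sum over $t$, and then split $\sum_t \mathbb{E}[(\xi^a_t)^2]=\mathbb{E}[W_T]$ into $1$ plus an error bounded by $V_T$. The only small imprecision is your claim that $x_t$ is automatically ``bounded well away from $1$'' (this can fail when $C_1$ is large, since $x_t\lesssim C_1/C_0^2\cdot\lambda^2/T$ and $\lambda^2/T$ can be of order $C_0^2$); the paper sidesteps this by noting that Jensen gives $x_t\geq 0$, whence the Lagrange form $\log(1+x)=x-\tfrac{x^2}{2(1+\theta)^2}$ yields $|\log(1+x_t)-x_t|\leq x_t^2/2$ for all $x_t\geq 0$ with no size restriction.
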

\begin{proof}
First, we have $\mathbb{E}\left[\exp(\lambda \xi^a_t)\right] \geq 1$ from Jensen's inequality.
Using the series expansion of $\log (1+\varphi)$, $\varphi\ge 0$, there exists $0 \leq \varphi^\dagger_t\leq \mathbb{E}\left[\exp(\lambda \xi^a_t)\right] - 1 \; (\text{for } t=1,\ldots, T)$ such that
\begin{align*}
    \Psi_T(\lambda) & = \log\prod^T_{t=1}\mathbb{E}\left[\exp\left(\lambda \xi^a_t\right)\right]\\
    &= \sum^T_{t=1}\left(\left(\mathbb{E}\left[\exp(\lambda \xi^a_t)\right] - 1\right) -   \frac{1}{2\left(1+\varphi^\dagger_t\right)^2}\left(\mathbb{E}\left[\exp(\lambda \xi^a_t)\right] - 1\right)^2\right).
\end{align*}
Because $(\xi^a_t)$ is a martingale difference sequence, $\mathbb{E}[\xi^a_t] = \mathbb{E}[\mathbb{E}[\xi^a_t|\mathcal{F}_{t-1}]]= 0$. Therefore,
\begin{align*}
     &\Psi_T(\lambda) - \frac{\lambda^2}{2}\mathbb{E}[W_T]
     \\
     &= \sum^T_{t=1}\left(\left(\mathbb{E}\left[\exp(\lambda \xi^a_t)\right] - 1\right) -   \frac{1}{2\left(1+\varphi^\dagger_t\right)^2}\left(\mathbb{E}\left[\exp(\lambda \xi^a_t)\right] - 1\right)^2\right) - \sum^T_{t=1}\left( \lambda \mathbb{E}[\xi^a_t] + \frac{\lambda^2}{2}\mathbb{E}[(\xi^a_t)^2]\right) 
\end{align*}
Then, by using $\mathbb{E}\left[\exp(\lambda \xi^a_t)\right] \geq 1$, we have 
\begin{align*}
    \left|\Psi_T(\lambda) - \frac{\lambda^2}{2}\mathbb{E}[W_T]\right| &\leq \sum^T_{t=1}\left|\mathbb{E}\left[\exp(\lambda \xi^a_t)\right] - 1 - \lambda \mathbb{E}[\xi^a_t] - \frac{\lambda^2}{2}\mathbb{E}[(\xi^a_t)^2]\right| + \frac{1}{2} \sum^T_{t=1}\left(\mathbb{E}\left[\exp(\lambda \xi^a_t)\right] - 1\right)^2\\
    &\leq \sum^T_{t=1}\sum^{+\infty}_{k=3}\frac{\lambda^k}{k!}\left|\mathbb{E}\left[(\xi^a_t)^k\right]\right| + \frac{1}{2} \sum^T_{t=1}\left(\sum^{+\infty}_{k=1}\frac{\lambda^k}{k!}\left|\mathbb{E}\left[(\xi^a_t)^k\right]\right|\right)^2.
\end{align*}
From Lemma~\ref{lem:31}, for a constant $C_3$,
\begin{align*}
    \left|\Psi_T(\lambda) - \frac{\lambda^2}{2}\mathbb{E}[W_T]\right| \leq C_3\lambda^3/\sqrt{T}
\end{align*}
In conclusion, we have
\begin{align*}
    \left|\Psi_T(\lambda) - \frac{\lambda^2}{2}\right| \leq C_3\lambda^3/\sqrt{T} + \frac{\lambda^2}{2}\left(\mathbb{E}[W_T-1]\right)\leq C_3\lambda^3 /\sqrt{T} + \frac{\lambda^2}{2}\mathbb{E}[|W_T-1|].
\end{align*}
Recall that $V_T = \mathbb{E}[|W_T-1|]$. Then, 
\begin{align*}
    \left|\Psi_T(\lambda) - \frac{\lambda^2}{2}\right| \leq C\left(\lambda^3/\sqrt{T} + \lambda^2V_T\right).
\end{align*}
\end{proof}
\begin{lemma}
\label{lem:34}
Assume Condition~A. For any $\varepsilon > 0$ there exists $T_0 > 0$ and some constants $\widetilde{C}_2,\widetilde{C}_3,\widetilde{C}_4 >0$ such that for all $T \geq T_0$ and $0\leq \lambda \leq \frac{1}{4}C_0\sqrt{T}$,
\begin{align*}
\frac{\mathbb{E}\left[\exp\left(\overline{\lambda}\sum^T_{t=1} \xi^a_t\right)\right]}{\prod^T_{t=1}\mathbb{E}\left[\exp\left(\overline{\lambda} \xi^a_t\right)\right]} 
&\leq \exp\left(\widetilde{C}_2\overline{\lambda}^4/T + \widetilde{C}_3\overline{\lambda}^3/\sqrt{T} + \widetilde{C}_4T_0 + \varepsilon \overline{\lambda}^2 \right).
\end{align*}
\end{lemma}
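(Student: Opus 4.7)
The plan is to expand the ratio by Taylor series in $\overline{\lambda}$, isolate the principal random discrepancy $\tfrac{\overline{\lambda}^2}{2}(W_T - \mathbb{E}[W_T])$, and then bound its exponential moment using the mean convergence $V_T \to 0$ established in Lemma~\ref{lem:condition2}.

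First, introducing the shorthand $M_t := \mathbb{E}[\exp(\overline{\lambda}\xi^a_t) \mid \mathcal{F}_{t-1}]$, $m_t := \mathbb{E}[\exp(\overline{\lambda}\xi^a_t)]$, $V_t^c := \mathbb{E}[(\xi^a_t)^2 \mid \mathcal{F}_{t-1}]$, and $v_t := \mathbb{E}[(\xi^a_t)^2]$, I would use the expansion $\log(1+x) = x - x^2/2 + O(x^3)$ together with the moment bounds of Lemma~\ref{lem:31} to obtain, for $0 \le \overline{\lambda} \le C_0\sqrt{T}/4$,
\begin{align*}
\log M_t - \log m_t = \tfrac{\overline{\lambda}^2}{2}(V_t^c - v_t) + r_t, \qquad |r_t| \le C\bigl(\overline{\lambda}^3/T^{3/2} + \overline{\lambda}^4/T^2\bigr).
\end{align*}
Summing yields $\sum_t(\log M_t - \log m_t) = \tfrac{\overline{\lambda}^2}{2}(W_T - \mathbb{E}[W_T]) + R_T$ with $|R_T| \le \widetilde{C}_3 \overline{\lambda}^3/\sqrt{T} + \widetilde{C}_2 \overline{\lambda}^4/T$.

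Second, via the exponential tilt $N_T := \exp(\overline{\lambda} Z^a_T - \sum_t \log M_t)$, a positive martingale with $\mathbb{E}[N_T] = 1$, the measure $d\widetilde{\mathbb{P}}/d\mathbb{P} = N_T$ gives the identity
\begin{align*}
\frac{\mathbb{E}[\exp(\overline{\lambda}\sum_t \xi^a_t)]}{\prod_t m_t} = \widetilde{\mathbb{E}}\!\left[\exp\!\left(\sum_t(\log M_t - \log m_t)\right)\right] \le \exp\bigl(\widetilde{C}_3 \overline{\lambda}^3/\sqrt{T} + \widetilde{C}_2\overline{\lambda}^4/T\bigr)\, \widetilde{\mathbb{E}}\!\left[\exp\!\left(\tfrac{\overline{\lambda}^2}{2}(W_T - \mathbb{E}[W_T])\right)\right].
\end{align*}

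Third, I would control the remaining exponential moment by splitting $W_T - \mathbb{E}[W_T] = \Delta^{(1)} + \Delta^{(2)}$ with $\Delta^{(1)} := \sum_{t \le T_0}(V_t^c - v_t)$ and $\Delta^{(2)} := \sum_{t > T_0}(V_t^c - v_t)$. Lemma~\ref{lem:31} with $k=2$ gives the a.s.\ bound $V_t^c \le 2C_1/(C_0^2 T)$, so $|\Delta^{(1)}| \le 4C_1 T_0/(C_0^2 T)$; multiplied by $\tfrac{\overline{\lambda}^2}{2} \le C_0^2 T/32$, this contributes at most a constant times $T_0$ to the exponent, producing the $\widetilde{C}_4 T_0$ term. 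For the tail, Lemma~\ref{lem:condition2} guarantees $\mathbb{E}|\Delta^{(2)}| \le 2V_T$, which can be made as small as desired by taking $T_0$ large; combining this $L^1$-smallness with the deterministic $L^\infty$ bound $|\Delta^{(2)}| \le 4C_1/C_0^2$ via a truncation at $\{|\Delta^{(2)}| \le \eta\}$ for some $\eta = \eta(\varepsilon)$ independent of $\overline{\lambda}$, together with a Markov estimate on the complement transferred through $\widetilde{\mathbb{P}}$, should yield $\widetilde{\mathbb{E}}[\exp(\tfrac{\overline{\lambda}^2}{2}\Delta^{(2)})] \le \exp(\varepsilon \overline{\lambda}^2)$ for $T \ge T_0(\varepsilon)$.

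The hard part will be this last step: converting $L^1$-convergence of $\Delta^{(2)}$ into a bound on its exponential moment when $\tfrac{\overline{\lambda}^2}{2}$ can be as large as $\Theta(T)$. The truncation threshold must depend only on $\varepsilon$ (not on $\overline{\lambda}$), and the small probability of the truncation complement, evaluated under $\widetilde{\mathbb{P}}$, must dominate the amplification factor $\exp(\overline{\lambda}^2 D/2)$ coming from the $L^\infty$ bound on that event --- which requires careful second-moment control of the Radon-Nikodym derivative $N_T$ and essentially reduces to invoking Proposition~\ref{prp:lr_conv_theorem} to pass from mean convergence of $W_T$ to uniform integrability of the relevant exponential functional.
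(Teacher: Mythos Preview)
Your first two steps are sound, but the change of measure in Step~2 is an unnecessary detour: by the tower property $\mathbb{E}[\exp(\overline{\lambda} Z_T^a)] = \mathbb{E}[\prod_t M_t]$, so the ratio already equals $\mathbb{E}[\exp(\sum_t(\log M_t - \log m_t))]$ under $\mathbb{P}$ (your tilted expectation coincides with this because $N_T \cdot \prod_t M_t = \exp(\overline{\lambda} Z_T^a)$). The paper reaches the equivalent expression $\mathbb{E}[\prod_t \exp(\tfrac{\overline{\lambda}^2}{2}(V_t^c - v_t))]$ under $\mathbb{P}$ by separately upper-bounding the numerator and lower-bounding the denominator via Taylor expansion.

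The real gap is Step~3, and the concern you flag is fatal for your route. Converting $L^1$-smallness of $\Delta^{(2)}$ into an exponential-moment bound with coefficient $\overline{\lambda}^2/2$ as large as $\Theta(T)$ cannot be done by truncation plus Markov: on the complement $\{|\Delta^{(2)}| > \eta\}$ you incur a factor $\exp(\Theta(T))$, while $L^1$ convergence yields at best a polynomially small probability for that event --- nowhere near enough to offset the amplification. Invoking Proposition~\ref{prp:lr_conv_theorem} does not help, since uniform integrability of $|W_T - 1|$ says nothing about integrability of $\exp(\Theta(T)\cdot|W_T-1|)$.

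The paper's proof avoids this obstacle with two ingredients you are missing. First, H\"older's inequality decouples the product:
\[
\mathbb{E}\Bigl[\prod_{t=1}^T \exp\bigl(\tfrac{\overline{\lambda}^2}{2}(V_t^c - v_t)\bigr)\Bigr] \le \prod_{t=1}^T \Bigl(\mathbb{E}\bigl[\exp\bigl(\tfrac{T\overline{\lambda}^2}{2}(V_t^c - v_t)\bigr)\bigr]\Bigr)^{1/T}.
\]
Second, the paper uses \emph{almost sure} termwise convergence $T(V_t^c - v_t) \to 0$ as $t \to \infty$ (Lemma~\ref{lem:consistent_second_moment}), which is strictly stronger than the $L^1$ convergence of $W_T - 1$ from Lemma~\ref{lem:condition2} that you rely on. Combined with the deterministic bound $|T(V_t^c - v_t)| \le C$ from Lemma~\ref{lem:31}, this makes each per-term expectation at most $\exp(\overline{\lambda}^2 \varepsilon)$ for $t > T_0$ and at most a constant for $t \le T_0$; taking the $1/T$-th power and multiplying over $t$ gives the claim. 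The H\"older step is what makes pointwise a.s.\ control sufficient --- without it you are stuck trying to control a correlated sum, which is exactly the difficulty you ran into.
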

\begin{proof}
Here, we have
\begin{align*}
    \mathbb{E}\left[\exp\left(\overline{\lambda}\sum^T_{t=1} \xi^a_t\right)\right] = \mathbb{E}\left[\prod^T_{t=1}\mathbb{E}\left[\exp\left(\overline{\lambda} \xi^a_t\right)|\mathcal{F}_{t-1}\right]\right].
\end{align*}
Then, by using Lemma~\ref{lem:31}, for each $t =1, \ldots, T$,
\begin{align*}
    \mathbb{E}\left[\exp\left(\overline{\lambda} \xi^a_t\right)|\mathcal{F}_{t-1}\right]&\leq 1 + \frac{\overline{\lambda}^2}{2} \mathbb{E}\left[(\xi^a_t)^2| \mathcal{F}_{t-1}\right]  + \sum^\infty_{k=3}
    \frac{\overline{\lambda}^k\mathbb{E}\left[(\xi^a_t)^k| \mathcal{F}_{t-1}\right]}{k!}\\
    &\leq 1 + \frac{\overline{\lambda}^2}{2} \mathbb{E}\left[(\xi^a_t)^2| \mathcal{F}_{t-1}\right]  + \sum^\infty_{k=3}\overline{\lambda}^k C_1(C_0\sqrt{T})^{-k}\\
    &\leq 1 + \frac{\overline{\lambda}^2}{2} \mathbb{E}\left[(\xi^a_t)^2| \mathcal{F}_{t-1}\right]  + O\left(\overline{\lambda}^3/T^{3/2}\right).
\end{align*}
Therefore,
\begin{align*}
    \mathbb{E}\left[\exp\left(\overline{\lambda}\sum^T_{t=1} \xi^a_t\right)\right] &\leq \mathbb{E}\left[\prod^T_{t=1}\left(1 + \frac{\overline{\lambda}^2}{2} \mathbb{E}\left[(\xi^a_t)^2| \mathcal{F}_{t-1}\right] +  O\left(\overline{\lambda}^3/T^{3/2}\right)\right)\right]
    \\
    & \leq \mathbb{E}\left[\prod^T_{t=1}\exp\left(\frac{\overline{\lambda}^2}{2} \mathbb{E}\left[(\xi^a_t)^2| \mathcal{F}_{t-1}\right] +  O\left(\overline{\lambda}^3/T^{3/2}\right)\right)\right].
\end{align*}
Similarly, by using Lemma~\ref{lem:31} and constants $c, \widetilde{c}>0$, we have
\begin{align*}
    &\mathbb{E}\left[\exp\left(\overline{\lambda} \xi^a_t\right)\right]
    \\
    &= \exp\left(\log \mathbb{E}\left[\exp\left(\overline{\lambda} \xi^a_t\right)\right]\right)
    \\
    &= \exp\left(\log \left(1 +  \sum^\infty_{k=2}\mathbb{E}\left[\frac{(\overline{\lambda}\xi^a_t)^k}{k!}\right]\right)\right)
    \\
    &= \exp\left(\frac{\overline{\lambda}^2}{2} \mathbb{E}\left[(\xi^a_t)^2\right] + \sum^\infty_{k=3}\mathbb{E}\left[\frac{(\overline{\lambda}\xi^a_t)^k}{k!}\right] - \frac{1}{2}\left( \sum^\infty_{k=2}\mathbb{E}\left[\frac{(\overline{\lambda}\xi^a_t)^k}{k!}\right]\right)^2 + \frac{1}{3}\left( \sum^\infty_{k=2}\mathbb{E}\left[\frac{(\overline{\lambda}\xi^a_t)^k}{k!}\right]\right)^3 + \cdots\right)
    \\
     & \stackrel{(a)}{\geq} \exp\left(\frac{\overline{\lambda}^2}{2} \mathbb{E}\left[(\xi^a_t)^2\right] - \sum^\infty_{k=3}\mathbb{E}\left[\frac{|\overline{\lambda}\xi^a_t|^k}{k!}\right] - \frac{1}{2}\left( \sum^\infty_{k=2}\mathbb{E}\left[\frac{|\overline{\lambda}\xi^a_t|^k}{k!}\right]\right)^2 - \frac{1}{3}\left( \sum^\infty_{k=2}\mathbb{E}\left[\frac{|\overline{\lambda}\xi^a_t|^k}{k!}\right]\right)^3 + \cdots\right)
     \\
     & \stackrel{(b)}{\ge} \exp\left(\frac{\overline{\lambda}^2}{2} \mathbb{E}\left[(\xi^a_t)^2\right] - c\overline{\lambda}^3/T^{3/2} - \frac{1}{2}\left( \frac{4 C_1\overline{\lambda}^2}{3 C_0^2 T}\right)^2 - \frac{1}{3}\left( \frac{4 C_1\overline{\lambda}^2}{3 C_0^2 T}\right)^3 - \frac{1}{4}\left( \frac{4 C_1 \overline{\lambda}^2}{3 C_0^2 T}\right)^4 -  \cdots\right)
     \\
     & \ge \exp\left(\frac{\overline{\lambda}^2}{2} \mathbb{E}\left[(\xi^a_t)^2\right] - c\overline{\lambda}^3/T^{3/2} - \left( \frac{4 C_1\overline{\lambda}^2}{3 C_0^2 T}\right)^2 - \left( \frac{4 C_1\overline{\lambda}^2}{3 C_0^2 T}\right)^3 - \left( \frac{4 C_1\overline{\lambda}^2}{3 C_0^2 T}\right)^4 - \cdots\right)
     \\
     & \ge \exp\left(\frac{\overline{\lambda}^2}{2} \mathbb{E}\left[(\xi^a_t)^2\right] - c\overline{\lambda}^3/T^{3/2} - \left(\frac{4 C_1\overline{\lambda}^2}{3 C_0^2 T}\right)^2\frac{1}{1- \frac{1}{2}}\right)
     \\
     & \stackrel{(c)}{\geq} \exp\left(\frac{\overline{\lambda}^2}{2} \mathbb{E}\left[(\xi^a_t)^2\right] - c \left(\overline{\lambda}^3/\sqrt{T}\right)^3- \widetilde{c} \overline{\lambda}^4/T^2\right).
\end{align*}
For $(a)$, we used Jensen's inequality for $m = 2,3,\dots$ as
\begin{align*}
 - (-1)^{m} \frac{1}{m}\left( \sum^\infty_{k=2}\mathbb{E}\left[\frac{(\overline{\lambda}\xi^a_t)^k}{k!}\right]\right)^m \geq - \frac{1}{m}\left( \sum^\infty_{k=2}\left|\mathbb{E}\left[\frac{(\overline{\lambda}\xi^a_t)^k}{k!}\right]\right|\right)^m \ge   - \frac{1}{m}\left(\sum^\infty_{k=2}\mathbb{E}\left[\frac{|\overline{\lambda}\xi^a_t|^k}{k!}\right]\right)^m.
\end{align*}
For $(b)$, we used the fact there exist a constant $c>0$ such that
\begin{align}
    \mathbb{E} \left[\sum^\infty_{k=2}\frac{|\overline{\lambda}\xi^a_t|^k}{k!} \right]& \stackrel{(c)}{\le}  \sum^\infty_{k=2}\frac{\overline{\lambda}^k}{k!} \cdot k! C_1 \frac{1}{(C_0 \sqrt{T})^k} = C_1 \sum^\infty_{k=2} \left(\frac{\overline{\lambda}}{C_0\sqrt{T}}\right)^k = \frac{C_1\overline{\lambda}^2}{C_0^2T} \frac{1}{1 - \frac{\overline{\lambda}}{C_0 \sqrt{T}}}
    \nonumber\\
    & \stackrel{(d)}{\le}  \frac{ C_1\overline{\lambda}^2}{C_0^2T}  \frac{1}{1 - \frac{1}{4}} = \frac{4 C_1\overline{\lambda}^2}{3 C_0^2 T}\stackrel{(d)}{\le} \frac{1}{2}\nonumber,
\end{align}
and
\begin{align*}
    \mathbb{E}\left[\sum^\infty_{k=3}\frac{|\overline{\lambda}\xi^a_t|^k}{k!} \right]&  \stackrel{(c)}{\le} \sum^\infty_{k=3}\frac{\overline{\lambda}^k}{k!} \cdot k! C_1 \frac{1}{(C_0 \sqrt{T})^k} \leq c \left(\frac{\overline{\lambda}}{\sqrt{T}} \right)^3.
\end{align*}
Here, for $(c)$, we used Lemma~\ref{lem:31}, and for $(d)$, we used \eqref{eq:34}.
Then, by combining the above upper and lower bounds, with some constant $\widetilde{C}_0, \widetilde{C}_1>0$,
\begin{align}
&\frac{\mathbb{E}\left[\exp\left(\overline{\lambda}\sum^T_{t=1} \xi^a_t\right)\right]}{\prod^T_{t=1}\mathbb{E}\left[\exp\left(\overline{\lambda} \xi^a_t\right)\right]}\nonumber\\
&\leq \frac{ \mathbb{E}\left[\prod^T_{t=1}\exp\left(\frac{\overline{\lambda}^2}{2} \mathbb{E}\left[(\xi^a_t)^2| \mathcal{F}_{t-1}\right] + O\left( \left(\overline{\lambda}/\sqrt{T} \right)^3\right)\right)\right]}{\prod^T_{t=1}\exp\left(\frac{\overline{\lambda}^2}{2} \mathbb{E}\left[(\xi^a_t)^2\right]  - c \left(\overline{\lambda}^3/\sqrt{T}\right)^3- \widetilde{c} \overline{\lambda}^4/T^2 \right)}\nonumber
\\
&= \exp\left(\widetilde{C}_0\overline{\lambda}^4/T + \widetilde{C}_1\overline{\lambda}^3/\sqrt{T} \right)\mathbb{E}\left[\prod^T_{t=1}\exp\left(\overline{\lambda}^2\left(\mathbb{E}[(\xi^a_t)^2| \mathcal{F}_{t-1}] - \mathbb{E}[(\xi^a_t)^2]\right)/2 \right)\right].\nonumber
\end{align}
Using Hölder's inequality,
\begin{align}
&\frac{\mathbb{E}\left[\exp\left(\overline{\lambda}\sum^T_{t=1} \xi^a_t\right)\right]}{\prod^T_{t=1}\mathbb{E}\left[\exp\left(\overline{\lambda} \xi^a_t\right)\right]}\nonumber\\
&\leq \exp\left(\widetilde{C}_0\overline{\lambda}^4/T + \widetilde{C}_1\overline{\lambda}^3/\sqrt{T} \right)\mathbb{E}\left[\prod^T_{t=1}\exp\left(\overline{\lambda}^2\left(\mathbb{E}[(\xi^a_t)^2| \mathcal{F}_{t-1}] - \mathbb{E}[(\xi^a_t)^2]\right)/2 \right)\right]
\nonumber\\
\label{eq:jensen}
&\leq \exp\left(\widetilde{C}_0\overline{\lambda}^4/T + \widetilde{C}_1\overline{\lambda}^3/\sqrt{T} \right) \prod^T_{t=1} \left(\mathbb{E}\left[\exp\left(T\overline{\lambda}^2\left(\mathbb{E}[(\xi^a_t)^2| \mathcal{F}_{t-1}] - \mathbb{E}[(\xi^a_t)^2]\right)/2 \right)\right] \right)^{\frac{1}{T}}.
\end{align}
Note that the term
\begin{align*}
&\frac{\overline{\lambda}^2}{2}\left(\mathbb{E}[(\xi^a_t)^2| \mathcal{F}_{t-1}] - \mathbb{E}[(\xi^a_t)^2]\right)\\
&=\frac{\overline{\lambda}^2}{2T}\Bigg(\mathbb{E}\left[\left(\varphi^{a^*_0}\Big(Y_t, A_t, X_t; \widehat{\mu}^{a^*_0}_t, \widehat{w}_t\Big) - \varphi^a\Big(Y_t, A_t, X_t; \widehat{\mu}^a_t, \widehat{w}_t\Big)- (\mu^*_0 - \mu^a_0)\right)^2\Big| \mathcal{F}_{t-1}\right]\\
&\ \ \ \ \ \ - \mathbb{E}\left[\left(\varphi^{a^*_0}\Big(Y_t, A_t, X_t; \widehat{\mu}^{a^*_0}_t, \widehat{w}_t\Big) - \varphi^a\Big(Y_t, A_t, X_t; \widehat{\mu}^a_t, \widehat{w}_t\Big)- (\mu^*_0 - \mu^a_0)\right)^2 \right]\Bigg)
\end{align*}
is bounded by some constant because $\widehat{w}_t(a|X_t)$ and $\widehat{\mu}^a_t$ are bounded and $\overline{\lambda} \le \sqrt{T}\min\left\{\frac{1}{4} C_0, \sqrt{\frac{3 C_0^2}{8 C_1}}\right\}$. Then, Lemma~\ref{lem:as_conv} and Proposition~\ref{prp:lr_conv_theorem}, with probability one, as $t\to \infty$,
\begin{align*}
    & t^\alpha\Bigg|\mathbb{E}\left[\left(\varphi^{a^*_0}\Big(Y_t, A_t, X_t; \widehat{\mu}^{a^*_0}_t, \widehat{w}_t\Big) - \varphi^a\Big(Y_t, A_t, X_t; \widehat{\mu}^a_t, \widehat{w}_t\Big)- (\mu^*_0 - \mu^a_0)\right)^2\Big| \mathcal{F}_{t-1}\right]\\
    &\ \ \ \ \ \ - \mathbb{E}\left[\left(\varphi^{a^*_0}\Big(Y_t, A_t, X_t; \widehat{\mu}^{a^*_0}_t, \widehat{w}_t\Big) - \varphi^a\Big(Y_t, A_t, X_t; \widehat{\mu}^a_t, \widehat{w}_t\Big)- (\mu^*_0 - \mu^a_0)\right)^2 \right]\Bigg| 
    \\
    & \le t^\alpha\Bigg|\mathbb{E}\left[\left(\varphi^{a^*_0}\Big(Y_t, A_t, X_t; \widehat{\mu}^{a^*_0}_t, \widehat{w}_t\Big) - \varphi^a\Big(Y_t, A_t, X_t; \widehat{\mu}^a_t, \widehat{w}_t\Big)- (\mu^*_0 - \mu^a_0)\right)^2\Big| \mathcal{F}_{t-1}\right]- \sqrt{\widetilde{V}^a}\Bigg|\\
    &\ \ \ \ \ \ +  t^\alpha\left|\mathbb{E}\left[\left(\varphi^{a^*_0}\Big(Y_t, A_t, X_t; \widehat{\mu}^{a^*_0}_t, \widehat{w}_t\Big) - \varphi^a\Big(Y_t, A_t, X_t; \widehat{\mu}^a_t, \widehat{w}_t\Big)- (\mu^*_0 - \mu^a_0)\right)^2\right] -   \sqrt{\widetilde{V}^a}\right| 
    \\
    & = t^\alpha\left|\mathbb{E}\left[\left(\varphi^{a^*_0}\Big(Y_t, A_t, X_t; \widehat{\mu}^{a^*_0}_t, \widehat{w}_t\Big) - \varphi^a\Big(Y_t, A_t, X_t; \widehat{\mu}^a_t, \widehat{w}_t\Big)- (\mu^*_0 - \mu^a_0)\right)^2\Big| \mathcal{F}_{t-1}\right] - \sqrt{\widetilde{V}^a}\right|
    \\
    & \ \ \ \ \ \ +  t^\alpha\left|\mathbb{E}\left[\mathbb{E}\left[\left(\varphi^{a^*_0}\Big(Y_t, A_t, X_t; \widehat{\mu}^{a^*_0}_t, \widehat{w}_t\Big) - \varphi^a\Big(Y_t, A_t, X_t; \widehat{\mu}^a_t, \widehat{w}_t\Big)- (\mu^*_0 - \mu^a_0)\right)^2 \middle| \mathcal{F}_{t-1}\right] -  \sqrt{\widetilde{V}^a}\right]\right| 
    \\
    & \to  0,
\end{align*}
where we used the boundedness of $\mathbb{E}\left[\left(\varphi^{a^*_0}\Big(Y_t, A_t, X_t; \widehat{\mu}^{a^*_0}_t, \widehat{w}_t\Big) - \varphi^a\Big(Y_t, A_t, X_t; \widehat{\mu}^a_t, \widehat{w}_t\Big)- (\mu^*_0 - \mu^a_0)\right)^2 \middle| \mathcal{F}_{t-1}\right] -  \sqrt{\widetilde{V}^a}$ to derive the mean convergence.

Here, let us define an event $\mathcal{E}$ such that 
\begin{align*}
    \mathcal{E} = \left\{T\left(\mathbb{E}[(\xi^a_t)^2| \mathcal{F}_{t-1}] - \mathbb{E}[(\xi^a_t)^2]\right) \to 0\ \mathrm{as}\ t\to \infty\right\}, 
\end{align*}
which occurs with probability one. Without loss of generality, we assume that $\alpha \neq 1$. On the event $\mathcal{E}$, for all $\varepsilon > 0$, there exists $T_0 \geq 0$ such that for all $T > T_0$,
\begin{align*}
    \exp\left(T\overline{\lambda}^2\left(\mathbb{E}[(\xi^a_t)^2| \mathcal{F}_{t-1}] - \mathbb{E}[(\xi^a_t)^2]\right)/2 \right) \leq \exp\left(\overline{\lambda}^2\varepsilon / t^\alpha \right).
\end{align*}
Because this event occurs with probability one and $\mathbb{E}[(\xi^a_t)^2| \mathcal{F}_{t-1}] - \mathbb{E}[(\xi^a_t)^2]$ is bounded, for all $\varepsilon > 0$, there exists $T_0$ such that for all $t > T_0$,
\begin{align*}
    \mathbb{E}\left[\exp\left(T\overline{\lambda}^2\left(\mathbb{E}[(\xi^a_t)^2| \mathcal{F}_{t-1}] - \mathbb{E}[(\xi^a_t)^2]\right)/2 \right)\right] \leq \exp\left(\overline{\lambda}^2\varepsilon \right). 
\end{align*}
From this result, for all $t > T_0$,
\begin{align*}
    \mathbb{E}\left[\exp\left(T\overline{\lambda}^2\left(\mathbb{E}[(\xi^a_t)^2| \mathcal{F}_{t-1}] - \mathbb{E}[(\xi^a_t)^2]\right)/2 \right)\right]^{1/T} \leq \exp\left(\overline{\lambda}^2\varepsilon/(Tt^\alpha) \right). 
\end{align*}
Therefore, in \eqref{eq:jensen}, from the boundedness of the random variables, for a constant $C>0$
\begin{align*}
&\prod^T_{t=1} \left(\mathbb{E}\left[\exp\left(T\overline{\lambda}^2\left(\mathbb{E}[(\xi^a_t)^2| \mathcal{F}_{t-1}] - \mathbb{E}[(\xi^a_t)^2]\right)/2 \right)\right] \right)^{\frac{1}{T}}\\
&\leq \prod^T_{t=T_0 + 1}\exp\left(\overline{\lambda}^2\varepsilon/(Tt^\alpha)  \right)\prod^{T_0}_{t=1} \mathbb{E}\left[\exp\left(T\overline{\lambda}^2\left(\mathbb{E}[(\xi^a_t)^2| \mathcal{F}_{t-1}] - \mathbb{E}[(\xi^a_t)^2]\right)/2 \right)\right]^{1/T}\\
&\leq \prod^T_{t=T_0 + 1}\exp\left(\overline{\lambda}^2\varepsilon/(Tt^\alpha)  \right)\prod^{T_0}_{t=1} \exp(C)\\
&\leq \exp\left(\overline{\lambda}^2\varepsilon  T^{-\alpha} / (1 - \alpha) + CT_0 \right).
\end{align*}
In summary, for any $\varepsilon > 0$ and some constants $\widetilde{C}_2, \widetilde{C}_3, \widetilde{C}_4>0$, there exists $T_0 > 0$ such that for all $T \geq T_0$, 
\begin{align*}
\frac{\mathbb{E}\left[\exp\left(\overline{\lambda}\sum^T_{t=1} \xi^a_t\right)\right]}{\prod^T_{t=1}\mathbb{E}\left[\exp\left(\overline{\lambda} \xi^a_t\right)\right]} 
&\leq \exp\left(\widetilde{C}_2\overline{\lambda}^4/T + \widetilde{C}_3\overline{\lambda}^3/\sqrt{T} + \widetilde{C}_4T_0 + \varepsilon \overline{\lambda}^2 \right).
\end{align*}
\end{proof}

By using Lemmas~\ref{lem:31}--\ref{lem:34}, we show the proof of Theorem~\ref{thm:fan_refine}.
\begin{proof}[Proof of Theorem~\ref{thm:fan_refine}]  
There exists some constant $C>0$ such that for all $1\leq u \leq \sqrt{T}\min\left\{\frac{1}{4} C_0, \sqrt{\frac{3 C_0^2}{8 C_1}}\right\}$,
\begin{align*}
&\mathbb{P}\left(Z^a_T > u\right)
\\
&=
\int\left(\prod^T_{t=1}\frac{\exp\left(\lambda \xi^a_t\right)}{\mathbb{E}\left[\exp\left(\lambda \xi^a_t\right)\right]}\right)\left(\prod^T_{t=1}\frac{\exp\left(\lambda \xi^a_t\right)}{\mathbb{E}\left[\exp\left(\lambda \xi^a_t\right)\right]}\right)^{-1}\mathbbm{1}[Z^a_T > u]\mathrm{d}\mathbb{P}
\\
&=
\int\left(\prod^T_{t=1}\frac{\exp\left(\lambda \xi^a_t\right)}{\mathbb{E}\left[\exp\left(\lambda \xi^a_t\right)\right]}\right) \exp\left(-\lambda \sum_{t=1}^T\xi^a_t + \log \left(\prod_{t=1}^T\mathbb{E}[\exp(\lambda \xi^a_t)]\right)\right)\mathbbm{1}[Z^a_T > u]\mathrm{d}\mathbb{P}
\\
\\
&=
\int\left(\prod^T_{t=1}\frac{\exp\left(\lambda \xi^a_t\right)}{\mathbb{E}\left[\exp\left(\lambda \xi^a_t\right)\right]}\right)\exp\left(-\lambda Z^a_T + \Psi_T(\lambda)\right)\mathbbm{1}[Z^a_T > u]\mathrm{d}\mathbb{P}
\\
&=
\int\left(\prod^T_{t=1}\frac{\exp\left(\lambda \xi^a_t\right)}{\mathbb{E}\left[\exp\left(\lambda \xi^a_t\right)\right]}\right)\exp\left(-\lambda U_T(\lambda) - \lambda B_T(\lambda) + \Psi_T(\lambda)\right)\mathbbm{1}[U_T(\lambda) + B_T(\lambda) > u]\mathrm{d}\mathbb{P},
\\
& \le \int\left(\prod^T_{t=1}\frac{\exp\left(\lambda \xi^a_t\right)}{\mathbb{E}\left[\exp\left(\lambda \xi^a_t\right)\right]}\right)\exp\left(-\lambda U_T(\lambda)  -\frac{\lambda^2}{2} + C(\lambda^3 /\sqrt{T}+ \lambda^2 V_T)\right)
\\
& \qquad \qquad \qquad \cdot \mathbbm{1} \left[U_T(\lambda) + \lambda + C(\lambda V_T + \lambda^2 /\sqrt{T}) > u\right]\mathrm{d}\mathbb{P},
\end{align*}
where for the last inequality, we used Lemma~\ref{lem:32} and Lemma~\ref{lem:33}.
Let $\overline{\lambda} = \overline{\lambda}(u)$ be the largest solution of the equation
\begin{align*}
    \lambda + C(\lambda V_T + \lambda^2 /\sqrt{T}) = u.
\end{align*}
The definition of $\overline{\lambda}$ implies that there exist $C' > 0$ such that, for all $1\leq u \leq \sqrt{T}\min\left\{\frac{1}{4} C_0, \sqrt{\frac{3 C_0^2}{8 C_1}}\right\}$,
\begin{align}
\label{eq:34}
    C' u \leq \overline{\lambda}(u) = \frac{2u}{\sqrt{(1+CV_T)^2 + 4C u /\sqrt{T}} + C V_T +1} \leq u
\end{align}
and there exists $\theta \in (0, 1]$ such that 
\begin{align}
    \overline{\lambda}(u) & = u - C  (\overline{\lambda} V_T + \overline{\lambda}^2/\sqrt{T}) 
    \nonumber\\
    \label{eq:35}
    & = u - C \theta (u V_T + u^2 /\sqrt{T}) \in \left[C', \sqrt{T}\min\left\{\frac{1}{4} C_0, \sqrt{\frac{3 C_0^2}{8 C_1}}\right\}\right].
\end{align}
Then, we obtain for all $1\leq u \leq \sqrt{T}\min\left\{\frac{1}{4} C_0, \sqrt{\frac{3 C_0^2}{8 C_1}}\right\}$,
\begin{align*}
    & \mathbb{P}\left(Z^a_T > u\right) 
    \\
    & \leq \exp\left(C\left(\overline{\lambda}^3T^{-1/2} + \overline{\lambda}^2 V_T \right) - \overline{\lambda}^2/2\right)\int\left(\prod^T_{t=1}\frac{\exp\left(\overline{\lambda} \xi^a_t\right)}{\mathbb{E}\left[\exp\left(\overline{\lambda} \xi^a_t\right)\right]}\right)\exp\left(-\overline{\lambda} U_T(\overline{\lambda})\right)\mathbbm{1}[U_T(\overline{\lambda}) > 0]\mathrm{d}\mathbb{P}.
\end{align*}
Here, we have
\begin{align*}
    &\int\left(\prod^T_{t=1}\frac{\exp\left(\overline{\lambda} \xi^a_t\right)}{\mathbb{E}\left[\exp\left(\overline{\lambda} \xi^a_t\right)\right]}\right)\exp\left(-\overline{\lambda} U_T(\overline{\lambda})\right)\mathbbm{1}[U_T(\overline{\lambda}) > 0]\mathrm{d}\mathbb{P}\\
    &=\mathbb{E}\left[ \prod^T_{t=1}\frac{\exp\left(\overline{\lambda} \xi^a_t\right)}{\mathbb{E}\left[\exp\left(\overline{\lambda} \xi^a_t\right)\right]}\exp\left(-\overline{\lambda} U_T(\overline{\lambda})\right)\mathbbm{1}[U_T(\overline{\lambda}) > 0]\right].
\end{align*}
We also define another measure $\widetilde{\mathbb{P}}_\lambda $ as
\begin{align*}
    \mathrm{d}\widetilde{\mathbb{P}}_\lambda = \frac{\prod^T_{t=1}\exp\left(\lambda \xi^a_t\right)}{\mathbb{E}\left[\exp\left(\lambda\sum^T_{t=1} \xi^a_t\right)\right]}\mathrm{d}\mathbb{P}= \frac{\exp\left(\lambda \sum^T_{t=1}\xi^a_t\right)}{\mathbb{E}\left[\exp\left(\lambda\sum^T_{t=1} \xi^a_t\right)\right]}\mathrm{d}\mathbb{P}.
\end{align*}
Note that $\widetilde{\mathbb{P}}_\lambda $ is a probability measure, as the following holds
\begin{align*}
\int \mathrm{d}\widetilde{\mathbb{P}}_\lambda & = \int \frac{\exp\left(\lambda \sum^T_{t=1}\xi^a_t\right)}{\mathbb{E}\left[\exp\left(\lambda\sum^T_{t=1} \xi^a_t\right)\right]}\mathrm{d}\mathbb{P}
\\
& = \frac{1}{\mathbb{E}\left[\exp\left(\lambda\sum^T_{t=1} \xi^a_t\right)\right]} \int \exp\left(\lambda \sum^T_{t=1}\xi^a_t\right)  \mathrm{d}\mathbb{P}
\\
& = \frac{1}{\mathbb{E}\left[\exp\left(\lambda\sum^T_{t=1} \xi^a_t\right)\right]} \cdot \mathbb{E}\left[\exp\left(\lambda\sum^T_{t=1} \xi^a_t\right)\right]
\\
& = 1.
\end{align*}

We further denote $\widetilde{\mathbb{E}}_\lambda $ as the expectation under the measure $\widetilde{\mathbb{P}}_\lambda $.
In the same way as (37) and (38) in \citet{Fan2013}, it is easy to see that
\begin{align}
    &\mathbb{E}\left[ \prod^T_{t=1}\frac{\exp\left(\overline{\lambda} \xi^a_t\right)}{\mathbb{E}\left[\exp\left(\overline{\lambda} \xi^a_t\right)\right]}\exp\left(-\overline{\lambda} U_T(\overline{\lambda})\right)\mathbbm{1}[U_T(\overline{\lambda}) > 0]\right]
    \nonumber\\
    & = \frac{\mathbb{E}[\exp(\overline{\lambda}\sum_{t=1}^T \xi^a_t)]}{\prod^T_{t=1} \mathbb{E}\left[\exp\left(\overline{\lambda} \xi^a_t\right)\right]} 
    \mathbb{E}\left[ \frac{ \prod^T_{t=1}\exp\left(\overline{\lambda} \xi^a_t\right)}{\mathbb{E}[\exp(\overline{\lambda}\sum_{t=1}^T \xi^a_t)]}\exp\left(-\overline{\lambda} U_T(\overline{\lambda})\right)\mathbbm{1}[U_T(\overline{\lambda}) > 0]\right]
    \nonumber\\
    & = \frac{\mathbb{E}[\exp(\overline{\lambda}\sum_{t=1}^T \xi^a_t)]}{\prod^T_{t=1} \mathbb{E}\left[\exp\left(\overline{\lambda} \xi^a_t\right)\right]} \widetilde{\mathbb{E}}_{\overline{\lambda}}[\exp\left(-\overline{\lambda} U_T(\overline{\lambda})\right)\mathbbm{1}[U_T(\overline{\lambda}) > 0]]
    \nonumber\\
    & = \frac{\mathbb{E}\left[\exp\left(\lambda\sum^T_{t=1} \xi^a_t\right)\right]}{\prod^T_{t=1}\mathbb{E}\left[\exp\left(\lambda \xi^a_t\right)\right]} \int^\infty_0\overline{\lambda} \exp(-\overline{\lambda}y)\widetilde{\mathbb{P}}_{\overline{\lambda}}(0< U_T(\overline{\lambda}) < y) \mathrm{d}y, \label{eq:fans_37}.
\end{align}
Besides, for a standard Gaussian random variable $\mathcal{N}$, 
\begin{align}
        &\mathbb{E}\left[\exp\left(-\overline{\lambda} \mathcal{N}\right)\mathbbm{1}[\mathcal{N} > 0]\right] = \int^\infty_0\overline{\lambda} \exp(-\overline{\lambda}y)\mathbb{P}(0< \mathcal{N} < y) \mathrm{d}y.\label{eq:fans_38}
\end{align}
Then, from \eqref{eq:fans_37} and \eqref{eq:fans_38}, 
\begin{align*}
    \left| \widetilde{\mathbb{E}}_{\overline{\lambda}}[\exp\left(-\overline{\lambda} U_T(\overline{\lambda})\right)\mathbbm{1}[U_T(\overline{\lambda}) > 0]] - \mathbb{E}\left[\exp\left(-\overline{\lambda} \mathcal{N}\right)\mathbbm{1}[\mathcal{N} > 0]\right]\right| \le 2 \sup_{g} \left|\widetilde{\mathbb{P}}_{\overline{\lambda}}\left(U_T(\overline{\lambda})\leq g\right) - \Phi(g)\right|
\end{align*}
Therefore,
\begin{align*}
    &\mathbb{P}\left(Z^a_T > u\right)
    \\
    & \le \frac{\mathbb{E}\left[\exp\left(\lambda\sum^T_{t=1} \xi^a_t\right)\right]}{\prod^T_{t=1}\mathbb{E}\left[\exp\left(\lambda \xi^a_t\right)\right]} \exp\left(C\left(\overline{\lambda}^3/\sqrt{T} + \overline{\lambda}^2 V_T \right) - \overline{\lambda}^2/2\right)
    \widetilde{\mathbb{E}}_{\overline{\lambda}}[\exp\left(-\overline{\lambda} U_T(\overline{\lambda})\right)\mathbbm{1}[U_T(\overline{\lambda}) > 0]]
    \\
    &\leq \frac{\mathbb{E}\left[\exp\left(\lambda\sum^T_{t=1} \xi^a_t\right)\right]}{\prod^T_{t=1}\mathbb{E}\left[\exp\left(\lambda \xi^a_t\right)\right]} \exp\left(C\left(\overline{\lambda}^3/\sqrt{T} + \overline{\lambda}^2 V_T \right) - \overline{\lambda}^2/2\right) 
    \\
    & \qquad \qquad \qquad \times \left(\mathbb{E}\left[\exp\left(-\overline{\lambda} \mathcal{N}\right)\mathbbm{1}[\mathcal{N} > 0]\right] + 2\sup_{g}\left|\widetilde{\mathbb{P}}_{\overline{\lambda}}\left(U_T(\overline{\lambda})\leq g\right) - \Phi(g)\right|\right)
    \\
    &\leq \frac{\mathbb{E}\left[\exp\left(\lambda\sum^T_{t=1} \xi^a_t\right)\right]}{\prod^T_{t=1}\mathbb{E}\left[\exp\left(\lambda \xi^a_t\right)\right]}\exp\left(C\left(\overline{\lambda}^3/\sqrt{T} + \overline{\lambda}^2 V_T \right) - \overline{\lambda}^2/2\right)\left(\mathbb{E}\left[\exp\left(-\overline{\lambda} \mathcal{N}\right)\mathbbm{1}[\mathcal{N} > 0]\right] + 2 \right).
\end{align*}
Here,
\begin{align*}
\exp\left(- \overline{\lambda}^2/2\right)\mathbb{E}\left[\exp\left(-\overline{\lambda} \mathcal{N}\right)\mathbbm{1}[\mathcal{N} > 0]\right] = \frac{1}{\sqrt{2\pi}} \int^\infty_0 \exp\left(-(y+ \overline{\lambda})^2\right)\mathrm{d}y = 1 - \Phi(\overline{\lambda}).
\end{align*}
From (41) of \citet{Fan2013}, for all $\overline{\lambda} \ge C'$, we have
\begin{align*}
\frac{C'}{\sqrt{2 \pi}(1 + C')}\frac{1}{\overline{\lambda}}\exp\left( - \frac{\overline{\lambda}^2}{2}\right) \leq 1 - \Phi(\overline{\lambda}).
\end{align*}
Therefore, with some constant $\widetilde{C}$, for all $1\leq u \leq \sqrt{T}\min\left\{\frac{1}{4} C_0, \sqrt{\frac{3 C_0^2}{8 C_1}}\right\}$,
\begin{align}
    \mathbb{P}\left({Z}_T > u\right) &\leq\frac{\mathbb{E}\left[\exp\left(\lambda\sum^T_{t=1} \xi^a_t\right)\right]}{\prod^T_{t=1}\mathbb{E}\left[\exp\left(\lambda \xi^a_t\right)\right]} \left\{
    \Big(1-\Phi(\overline{\lambda})\Big) + \overline{\lambda} \Big(1-\Phi(\overline{\lambda})\Big) c \right\}\exp\left(C\left(\overline{\lambda}^3/\sqrt{T} + \overline{\lambda}^2 V_T \right)\right)
   \nonumber\\
    &\leq \frac{\mathbb{E}\left[\exp\left(\lambda\sum^T_{t=1} \xi^a_t\right)\right]}{\prod^T_{t=1}\mathbb{E}\left[\exp\left(\lambda \xi^a_t\right)\right]}
    \widetilde{C}\overline{\lambda}
    \Big(1-\Phi(\overline{\lambda})\Big)\exp\left(C\left(\overline{\lambda}^3/\sqrt{T} + \overline{\lambda}^2 V_T \right)\right),\label{eq:upper_bound_wo_frac}
\end{align}
where $c = \sqrt{2\pi} (1+C') / C'$, and $\widetilde{C}$ is chosen to be $\widetilde{C} \overline{\lambda} \geq (1 + \overline{\lambda} c)$ (Note that $\overline{\lambda} \ge C'$ from \eqref{eq:34}).

From Lemma~\ref{lem:34}, for any $\varepsilon > 0$, there exists $T_0 > 0$ such that for all $T \geq T_0$, 
\begin{align}
\label{eq:upper_bound_frac_fin}
\frac{\mathbb{E}\left[\exp\left(\overline{\lambda}\sum^T_{t=1} \xi^a_t\right)\right]}{\prod^T_{t=1}\mathbb{E}\left[\exp\left(\overline{\lambda} \xi^a_t\right)\right]} 
&\leq \exp\left(\widetilde{C}_2\overline{\lambda}^4/T + \widetilde{C}_3\overline{\lambda}^3/\sqrt{T} + \widetilde{C}_4T_0 + \varepsilon \overline{\lambda}^2 \right).
\end{align}

In summary, by \eqref{eq:upper_bound_wo_frac} and \eqref{eq:upper_bound_frac_fin}, for all $1\leq u \leq \sqrt{T}\min\left\{\frac{1}{4} C_0, \sqrt{\frac{3 C_0^2}{8 C_1}}\right\}$,
\begin{align}
\label{eq:42}
    \frac{\mathbb{P}\left({Z}_T > u\right)}{1-\Phi(\overline{\lambda})} &\leq  \widetilde{C}\overline{\lambda} \exp\left(\widetilde{C}_2\overline{\lambda}^4/T + \widetilde{C}_3\overline{\lambda}^3/\sqrt{T}  + C\left(\overline{\lambda}^3/\sqrt{T} + \overline{\lambda}^2 V_T + T_0\right) + \varepsilon \overline{\lambda}^2\right).
\end{align}
Next, we compare $1-\Phi(\overline{\lambda})$ with $1-\Phi(u)$.
Recall the following upper bound and lower bound on $1 - \Phi(x) = \Phi(-x)$:
\begin{align*}
    \frac{1}{\sqrt{2 \pi} (1+x)} \exp\left(-\frac{x^2}{2}\right) \le \Phi(-x) \le \frac{1}{\sqrt{ \pi} (1+x)} \exp\left(-\frac{x^2}{2}\right), \; x\ge 0.
\end{align*}
For all $1\leq u \leq \sqrt{T}\min\left\{\frac{1}{4} C_0, \sqrt{\frac{3 C_0^2}{8 C_1}}\right\}$,
\begin{align*}
1 & \leq \frac{\int^\infty_{\overline{\lambda}}\exp(-t^2/2)\mathrm{d}t}{\int^\infty_u \exp(-t^2/2)\mathrm{d}t} 
\\
& \le \frac{\frac{1}{\sqrt{\pi}(1+ \overline{\lambda})} \exp(- \overline{\lambda}^2/2)}{\frac{1}{\sqrt{2\pi}(1+ u)} \exp(- u^2/2)}
\\
& = \sqrt{2}\frac{1+u}{1+ \overline{\lambda}}\exp((u^2 - \overline{\lambda}^2)/2).
\end{align*}
From \eqref{eq:35}, we have
\begin{align*}
    u^2 - \overline{\lambda}^2 & = (u + \overline{\lambda})(u - \overline{\lambda})
    \\
    &  \le  2u  (C \theta (u V_T + u^2 /\sqrt{T}))
    \\
    &  = 2C \theta (u^2 V_T + u^3 /\sqrt{T}).
\end{align*}
Therefore, with some constant $\widetilde{C}_4 > 0$
\begin{align*}
\frac{\int^\infty_{\overline{\lambda}}\exp(-t^2/2)\mathrm{d}t}{\int^\infty_u \exp(-t^2/2)\mathrm{d}t} 
&\leq \exp\left(\widetilde{C}_4 \left(u^2 V_T + u^3 /\sqrt{T}\right)\right).
\end{align*}
We find that
\begin{align}
\label{eq:44}
1 - \Phi(\overline{\lambda}) \le \big(1 - \Phi(u)\big)\exp\left( \widetilde{C}_4\left(u^2 V_T + u^3 /\sqrt{T}\right)\right).
\end{align}
By combining \eqref{eq:42}, \eqref{eq:44}, and \eqref{eq:34}, for any $\varepsilon > 0$ all $1\leq u \leq \sqrt{T}\min\left\{\frac{1}{4} C_0, \sqrt{\frac{3 C_0^2}{8 C_1}}\right\}$, there exist $T_0 > 0$ and $\widetilde{C}_5>0$ such that for all $T \geq T_0$,
\begin{align*}
    &\frac{\mathbb{P}\left({Z}_T > u\right)}{1-\Phi(u)}\\
    &\leq  \widetilde{C} \overline{\lambda} \exp\left( C\left(\overline{\lambda}^3/\sqrt{T} + \overline{\lambda}^2 V_T \right) + \widetilde{C}_2\overline{\lambda}^4/T + \widetilde{C}_3\overline{\lambda}^3/\sqrt{T} + \widetilde{C}_4 \left( u^2 V_T + u^3 /\sqrt{T}+ T_0\right) + \varepsilon u^2 \right)
    \\
    &\le \widetilde{C}u \exp\left( \widetilde{C}_5 \left( u^2 V_T + u^3 /\sqrt{T} + u^4/T+ T_0\right) + \varepsilon u^2 \right)\\
    &= \widetilde{C}u \exp\left( \widetilde{C}_5 \left( u^2 (V_T + \varepsilon) + u^3 /\sqrt{T} + u^4/T+ T_0\right) \right).
\end{align*}
Applying the same argument to the martingale $-Z^a_T$, we conclude the proof.
\end{proof}

\end{document}